\def\eqref#1{equation~\ref{#1}}
\def\1{\bm{1}}
\DeclareMathAlphabet{\mathsfit}{\encodingdefault}{\sfdefault}{m}{sl}
\SetMathAlphabet{\mathsfit}{bold}{\encodingdefault}{\sfdefault}{bx}{n}
\def\gA{{\mathcal{A}}}
\def\gD{{\mathcal{D}}}
\def\gL{{\mathcal{L}}}
\def\gO{{\mathcal{O}}}
\def\gP{{\mathcal{P}}}
\def\gS{{\mathcal{S}}}
\def\sI{{\mathbb{I}}}
\def\sV{{\mathbb{V}}}
\newcommand{\E}{\mathbb{E}}
\newcommand{\KL}{D_{\mathrm{KL}}}
\newcommand{\TV}{D_{\mathrm{TV}}}
\DeclareMathOperator*{\argmin}{arg\,min}
\DeclareMathOperator{\Tr}{Tr}
\theoremstyle{plain}
\newtheorem{theorem}{Theorem}[section]
\newtheorem{lemma}[theorem]{Lemma}
\newtheorem{corollary}[theorem]{Corollary}
\theoremstyle{definition}
\newtheorem{definition}[theorem]{Definition}
\theoremstyle{remark}
\newcommand{\methodname}{AdapMen}
\newcommand{\hatpi}{\hat{\pi}}
\newcommand{\citeN}{\cite}
\title{How To Guide Your Learner: Imitation Learning with Active Adaptive Expert Involvement}
\author{Xu-Hui Liu$^{1,}$\thanks{Equal Contribution}~~, Feng Xu$^{1,}$\footnotemark[1]~~, Xinyu Zhang$^{1}$, Tianyuan Liu$^{1}$, Shengyi Jiang$^{2}$, Ruifeng Chen$^{1}$, \\ \bf Zongzhang Zhang$^{1}$, Yang Yu$^{1,3,}$\thanks{Corresponding Author} \\
	$^1$National Key Laboratory for Novel Software Technology, Nanjing University, Nanjing, China \\
	$^2$The University of Hong Kong, Hong Kong, China\\
	$^3$Peng Cheng Laboratory, Shenzhen, China\\
  \{liuxh, xufeng, zhangxinyu\}@lamda.nju.edu.cn, 191300034@smail.nju.edu.cn, syjiang@cs.hku.hk,\\
  chenrf@lamda.nju.edu.cn, \{zzzhang, yuy\}@nju.edu.cn
}
\date{}
\begin{document}

\maketitle

\begin{abstract}
Imitation learning aims to mimic the behavior of experts without explicit reward signals. Passive imitation learning methods which use static expert datasets typically suffer from compounding error, low sample efficiency, and high hyper-parameter sensitivity. In contrast, active imitation learning methods solicit expert interventions to address the limitations. However, recent active imitation learning methods are designed based on human intuitions or empirical experience without theoretical guarantee.
In this paper, we propose a novel active imitation learning framework based on a teacher-student interaction model, in which the teacher's goal is to identify the best teaching behavior and actively affect the student's learning process. By solving the optimization objective of this framework, we propose a practical implementation, naming it \methodname.
Theoretical analysis shows that \methodname\ can improve the error bound and avoid compounding error under mild conditions. Experiments on the MetaDrive benchmark and Atari 2600 games validate our theoretical analysis and show that our method achieves near-expert performance with much less expert involvement and total sampling steps than previous methods. The code is available at \href{https://github.com/liuxhym/AdapMen}{\textcolor{blue}{https://github.com/liuxhym/AdapMen}}.
\end{abstract}

\section{Introduction}\label{sec_intro}

Imitation Learning (IL)~\citep{bc, dagger, gail} aims to learn a policy from expert demonstrations with no explicit task-relevant knowledge like reward and transition.
IL has achieved huge success in a variety of domains, including games~\citep{dagger, alphago} and recommendation systems~\citep{virtual_taobao, recommendation}.

The traditional IL method Behavior Cloning (BC)~\citep{bc} imitates expert behaviors via supervised learning. Although BC works fine in simple environments, it requires a lot of data and small errors compound quickly when the learned policy deviates from the states in the expert dataset. This issue can be formalized by the sub-optimality bound of the learned policy, which is $\tilde{\gO}(\epsilon_bH^2)$ for BC~\citep{bc}, where $\epsilon_b$ is the optimization error, $H$ is the horizon of the Markov Decision Processes (MDPs) and $\tilde{\gO}$ means the constant and log terms are omitted. The quadratic dependency on $H$ is known as the \textit{compounding error} issue. 

To tackle the compounding error issue, Apprenticeship Learning (AL)~\citep{apprenticeship, fem} and Adversarial Imitation Learning (AIL)~\citep{gail, dac, ValueDICE, iq_learn} algorithms introduce interactions with environment. They first infer a reward function from expert demonstrations, then learn a corresponding policy by Reinforcement Learning (RL). The sub-optimality bound is then reduced to $\tilde{\gO}(\epsilon_gH)$ \cite{xu2020}, where $\epsilon_g$ is the optimization error of AL and AIL.  From another perspective, DAgger~\citep{dagger} attributes the compounding error issue to the difference between the train distribution and test distribution. Thus, DAgger queries the expert for action labels corresponding to each state visited by the learner~\citep{dagger}.

Despite the reduction of the order of $H$, the complicated optimization process of AL and AIL leads to even worse sample complexity than BC~\citep{ail_finite_sample}.  Additionally, these algorithms are highly sensitive to hyper-parameters and are hard to converge in practice~\citep{zhang2022}.
DAgger also relies on an additional assumption that the learner can recover from mistakes made by itself to a certain extent, which is known as the $\mu$-recoverability condition on the MDPs. \citeN{value_interaction} proves that DAgger has a better theoretical guarantee than BC under such an assumption while \cite{limits} shows a negative result for general cases.  Moreover, the assumption is satisfied only when any sub-optimal action leads to little performance degradation, which can be impractical, e.g., in risk-sensitive environments~\citep{limits, value_interaction}. Some recent methods~\citep{HG-dagger, ensembledagger, safetydagger, thriftydagger, HACO} modify DAgger so that they only solicit expert interventions based on certain criteria. Though these methods achieve certain empirical success, there were no theoretical understanding of these methods and their design of intervention criteria is totally intuitive, hindering further algorithmic design.

To address the issues in previous methods, we study the IL problems from a new perspective. From experience, sometimes experts are not the best teachers.
For example, many legendary players end up with controversial coaching careers. Experts advance disciplines, while teachers advance learners. Inspired by the idea of machine teaching~\citep{mt, qian2022}
, we formulate the IL process as a teacher-student framework. In this framework, the teacher decides what to teach and how to impart knowledge rather than simply correcting the student. With more attentive help from the teacher, the student agent can learn faster.

We formalize this intuition by introducing an optimization problem on minimizing the value loss of the learned policy. By solving the optimization problem in the framework, we obtain a novel imitation learning method \textbf{A}ctive a\textbf{da}ptive ex\textbf{p}ert involve\textbf{Men}t (\methodname), where a teacher actively involves in the learner's interaction with the environment and adjusts its teaching behavior accordingly. 
The overall interaction structure is illustrated in Fig.~\ref{fig:framework}, where the criterion and the expert is together viewed as the teacher. 
At each time step, a criterion calculated from expert actions judges whether to take the learner's action or ask the expert to take over control.

The sub-optimality and sample complexity bounds of \methodname\  and other typical IL methods are listed in Tab.~\ref{tab:complexity}. Under mild conditions, \methodname\ achieves no compounding error with much lower sample complexity than previous methods. To validate our theories, we also experimentally verify the validity of the assumption and demonstrate the power of \methodname\  in several tasks.

\begin{figure}[tbp]
  \centering
  \centering
    \includegraphics[width=0.7\linewidth, trim=25 20 50 0]{./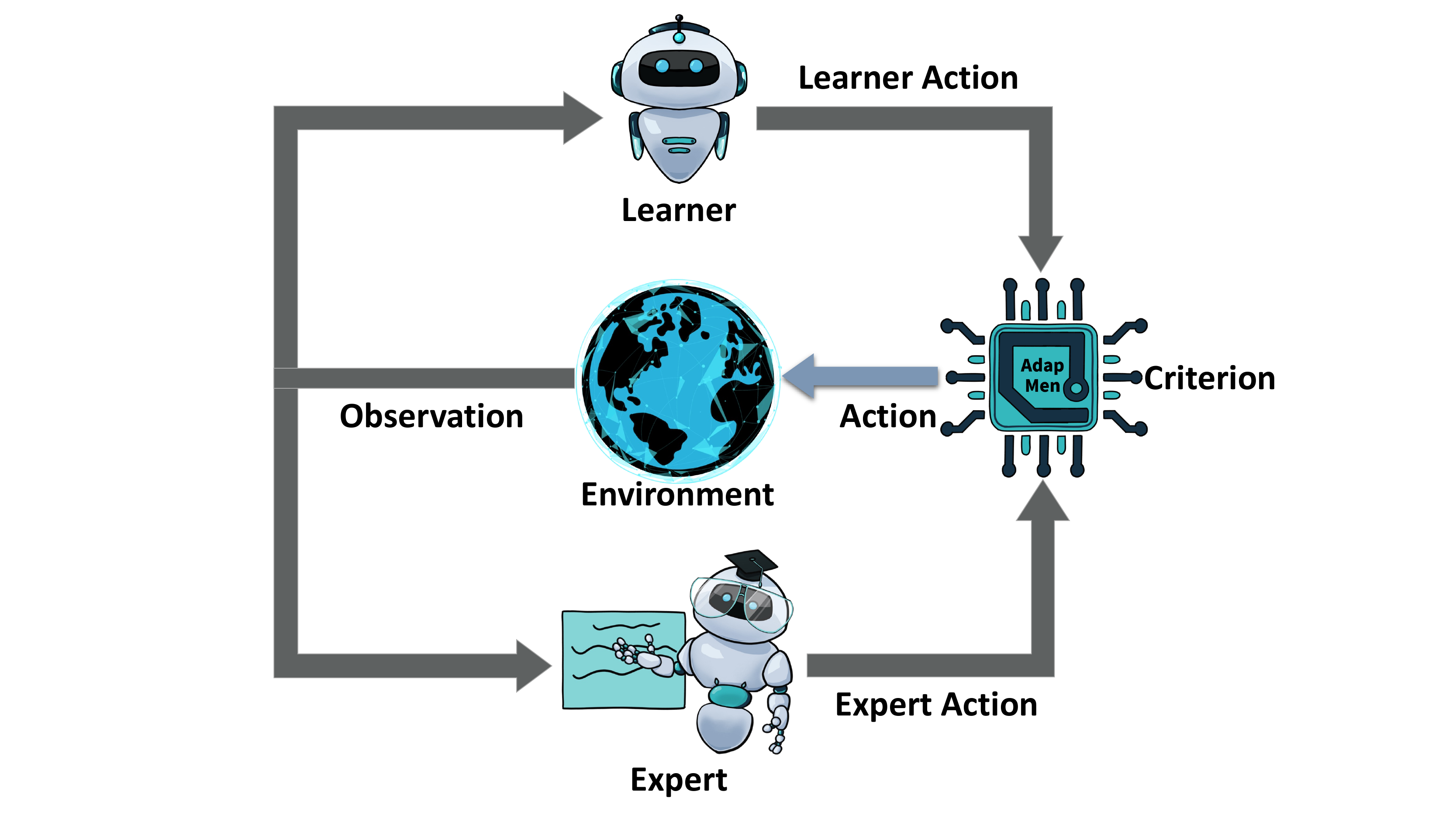}
    \captionof{figure}{Framework of \methodname}
    \label{fig:framework}
\end{figure}
\begin{table}[tbp]
\vspace{4mm}
    \centering
     \begin{tabular}[b]{|l|l|l|}
    \hline
            &   \parbox{1.5cm}{Sub-\\optimality}       &   \parbox{1.5cm}{Sample\\Complexity}  \\ \hline
    BC      &   $\tilde{\gO}(\epsilon_bH^2)$      &   $\tilde{\gO}(\frac{|\gS|H^2}{\epsilon})$    \\ \hline
    AIL     &   $\tilde{\gO}(\epsilon_gH)$        &   $\tilde{\gO}(\frac{|\gS|H^2}{\epsilon^2})$  \\ \hline
    DAgger  &   $\tilde{\gO}(\mu \epsilon_bH)$   &   $\tilde{\gO}(\frac{\mu|\gS|H}{\epsilon})$   \\ \hline
\methodname &   $\tilde{\gO}(\epsilon_bH)$        &   $\tilde{\gO}(\frac{|\gS|H}{\epsilon})$      \\ \hline
    \end{tabular}
    \vspace{2.5mm}
    \caption{Theoretical Guarantee of IL Methods. $\tilde{\gO}$ means the log term of $H$ is omitted.}
    \label{tab:complexity}
\end{table}

\section{Related Work}

\textbf{Imitation Learning.}
The most traditional approach to imitation learning is Behavioral Cloning (BC)~\citep{BC1, BC2, bc}, where a classifier or regressor is trained to fit the behaviors of the expert. This simple form of IL suffers from high compounding error because of covariate shifts. By allowing the learner agent to further interact with the environment, Apprenticeship Learning (AL)~\citep{apprenticeship, fem} infers a reward function from expert demonstrations by Inverse Reinforcement Learning (IRL)~\citep{irl} and learns a policy with Reinforcement Learning (RL) using the recovered reward function. In this way, the learner can correct its behavior on unseen states to mitigate the compounding error issue. Recently, based on Generative Adversarial Network (GAN)~\citep{gan}, Adversarial Imitation Learning (AIL)~\citep{gail, dac, ValueDICE} performs state-action distribution matching in an adversarial manner and has a stronger empirical performance than AL. Since AL and AIL have access to environment transitions, they are classified as known-transition methods. Notwithstanding the compounding error issue, this type of method is highly sensitive to hyper-parameters and hard to converge in practice~\citep{zhang2022}. Different from the known-transition methods, DAgger-style algorithms~\citep{dagger, aggrevate, aggrevated} address the covariate shift by querying the expert online. Without the min-max optimization in known-transition methods, DAgger-style algorithms tend to be more stable. However, these algorithms can only avoid compounding error under the $\mu$-recoverability assumption, which is often not satisfied in risk-sensitive environments~\citep{limits, value_interaction}. Our method  \methodname\ is free from the $\mu$-recoverability assumption and the hyper-parameters are automatically tuned.

\textbf{Human-in-the-loop.} Many works focus on incorporating human interventions in the training loop of RL or IL paradigms. DAgger~\citep{dagger} can be seen as one of the human-in-the-loop methods if the expert is a human. DAgger requires experts to provide action labels without being fully in control of the system, which can introduce safety concerns and is very likely to degrade the quality of the collected labels due to the loss of direct feedback. To address this challenge, a list of learning from intervention approaches have been proposed to empower humans to intervene and guide the learner agent to safe states. "Human-Gated" approaches~\citep{HG-dagger, EIL, HACO} require humans to determine when the agent needs help and when to cede control, which is unreliable because of the high randomness of human behavior. In contrast, ``Agent-Gated'' approaches~\citep{safetydagger, ensembledagger, lazydagger, thriftydagger} allow the learner agent to actively seek human interventions based on certain criteria including the novelty or the risk of the visited states. However, all of the criteria are heuristic without theoretical guarantees and the hyper-parameters are hard to tune. Our method \methodname\ can actively involve in the interaction process and adaptively adjust its intervention probability.

\section{Background}
\label{sec:background}
Consider an MDP task denoted by $M = (\gS,\gA,\gP,H,r,\rho)$, where $\gS$ is the state space, $\gA$ is the action space, $\gP:\gS\times \gA \rightarrow \gS$ is the transition function, $H$ is the planning horizon, $r: \gS\times \gA \rightarrow \mathbb{R}$ is the reward function, and $\rho$ is the distribution of initial states. Without loss of generality, we assume $r(s,a)\in[0,1]$.  A policy is defined as $\pi(\cdot\mid s)$, which outputs an action distribution. To facilitate later analysis, we introduce the state-action distribution at time step $t$ as follows: 
$$\begin{aligned}
d_h^\pi(&s,a)=\textnormal{Pr}\left(s_h=s,a_h=a|s_1\sim \rho, a_t\sim \pi(s_t), s_{t+1}\sim \gP(\cdot|s_t,a_t), t\in[h]\right),
\end{aligned}$$
where $[h]=\{1,2,\dots,h\}$. We define 
$$d^\pi=\frac{1}{H}\sum_{h=1}^H d^\pi_h,$$
which is the average distribution of states if we follow policy $\pi$ for $H$ steps.

In imitation learning, the reward function of a task is not accessible. Instead, the learner agent has access to an expert with policy $\pi^*$, and the goal is to recover the policy $\pi^*$ by learning from labeled training data, e.g., state-action pairs generated by an expert agent. Following \cite{xu2020} and \cite{limits}, we assume the expert policy is deterministic in the theoretical analysis, while it can be stochastic in practice.

To measure the quality of a learner policy, we define the \textit{policy value} as
$$
\begin{aligned}
J(\pi)=\E\Bigg[\sum_{h=1}^H&r(s_h,a_h)|s_1\sim \rho; a_h\sim \pi(\cdot|s_h),s_{h+1}\sim \gP(\cdot|s_h,a_h), \forall h\in [H]\Bigg].
\end{aligned}$$ 
This is the cumulative return for the learner agent in the task demonstrated by the expert. Accordingly, the quality of imitation learning is measured by the \textit{sub-optimality gap}: $J(\pi^*)-J(\pi)$. We also introduce the Q-function at time step $h$:
$$
\begin{aligned}
Q^\pi_h(s,a)=\E\Bigg[&\sum_{t=h}^Hr(s_t,a_t)|s_h=s, a_h=a;
        a_t\sim \pi(\cdot|s_t),s_{t+1}\sim \gP(\cdot|s_t,a_t), \forall t\in \{h+1, \dots, H\}\Bigg].
\end{aligned}
$$
For brevity, we use $Q^{\pi_1}_h(s,\pi_2)$ as a shorthand of $\E_{a\sim \pi_2}Q^{\pi_1}_h(s,a)$. Then, $J(\pi)$ and $J(\pi^*)$ can be denoted as:
\begin{align*}
    J(\pi)=\E_{s\sim \rho}Q^\pi_H(s,\pi),\quad  J(\pi^*)=\E_{s\sim \rho}Q^{\pi^*}_H(s,\pi^*).
\end{align*}

\section{Teacher-Student Interaction Model}\label{sec_method}

Given the inspiration that experts may not be the best teachers, we construct a teaching policy for the agent. In the learning process, the agent aims to mimic the teacher policy instead of the expert policy.
This intuition can be formulated as the following optimization problem:
\begin{equation}\begin{aligned}
    \min_{\pi'} \quad J(\pi^*)-J(\pi_{\pi'})\quad \text{s.t.} \ \ \E_{s\sim \beta}\ell(s,\pi_{\pi'},\pi')\leq \epsilon_b, 
\end{aligned}
\label{eq_constraint}
\end{equation}
where $\pi'$ is the teaching policy, $\pi_{\pi'}$ is the corresponding learned student policy,  $\beta$ is the data distribution of the buffer that stores intervened samples, $\ell(s,\pi_{\pi'},\pi')$ is the 0-1 loss, i.e., $\ell(s,\pi_{\pi'},\pi')=0$ if $\pi_{\pi'}(\cdot|s)=\pi'(\cdot|s)$ and $\ell(s,\pi_{\pi'},\pi')=1$ otherwise, and $\epsilon_b$ is the upper bound of the optimization loss. Intuitively, we aim to find a policy $\pi'$ that generates data to not only correct the learner when it deviates from the desired behavior, but also helps it learn as quickly as possible.

Denote $\pi$ as the policy before the policy optimization process, i.e., $\pi_{\pi'}$ is optimized from $\pi$. Because it is useless to store the data coinciding with the agent policy, a natural choice for the distribution of buffer is
\begin{equation}\label{eq_beta}
 \beta(s)=\frac{1}{H\delta}\sum_{h=1}^H\mathbb I(\pi(\cdot|s)\neq \pi'(\cdot|s))d_h^{\pi'}(s).
\end{equation}
That is, we only save the samples when $\pi$ and $\pi'$ behave differently. $\delta$ is the normalization factor for the distribution of the buffer, i.e.,
\begin{equation}\label{eq_delta}
\delta=\sum_s\frac{1}{H}\sum_{h=1}^H\mathbb I(\pi'(\cdot|s)\neq \pi(\cdot|s))d^{\pi'}_h(s)=\mathbb E_{s\sim d^{\pi'}}\mathbb I(\pi'(\cdot|s)\neq \pi(\cdot|s)).
\end{equation}

Before solving this optimization problem, we introduce Lemma~\ref{lem:pdlema} for better understanding of the derivation.
\begin{lemma}[Policy Difference Lemma~\citep{kakade}]\label{lem_policy}
For any policies $\pi_1$ and $\pi_2$,
$$J(\pi_1)-J(\pi_2)=\sum_{h=1}^H\E_{s\sim d^{\pi_1}_h}[Q^{\pi_2}_h(s,\pi_1)-Q^{\pi_2}_h(s,\pi_2)].$$
\label{lem:pdlema}
\end{lemma}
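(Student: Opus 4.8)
The plan is to establish this \emph{performance difference} identity by a telescoping argument that inserts the value function of the comparison policy $\pi_2$ along trajectories generated by $\pi_1$. First I would set up notation by writing $V^{\pi_2}_h(s) := Q^{\pi_2}_h(s,\pi_2)$, which is well defined by the paper's shorthand $Q^{\pi_2}_h(s,\pi_2)=\E_{a\sim\pi_2}Q^{\pi_2}_h(s,a)$, so that the target right-hand side reads $\sum_{h=1}^H\E_{s\sim d^{\pi_1}_h}[Q^{\pi_2}_h(s,\pi_1)-V^{\pi_2}_h(s)]$. I would also record the one-step Bellman consistency for $\pi_2$, namely $Q^{\pi_2}_h(s,a)=r(s,a)+\E_{s'\sim\gP(\cdot\mid s,a)}[V^{\pi_2}_{h+1}(s')]$, together with the boundary convention $V^{\pi_2}_{H+1}\equiv 0$; these follow directly from the definition of $Q^{\pi_2}_h$ as the reward-to-go from step $h$.

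Next I would express $J(\pi_1)=\E_{\tau\sim\pi_1}\big[\sum_{h=1}^H r(s_h,a_h)\big]$, where $\tau=(s_1,a_1,\dots)$ is rolled out under $\pi_1$ from $s_1\sim\rho$, and write $J(\pi_2)=\E_{s_1\sim\rho}[V^{\pi_2}_1(s_1)]$. The key move is that the initial distribution $\rho$ does not depend on the policy, so I may evaluate this latter quantity as an expectation over $\pi_1$-trajectories and expand it via the telescoping sum $V^{\pi_2}_1(s_1)=\sum_{h=1}^H\big(V^{\pi_2}_h(s_h)-V^{\pi_2}_{h+1}(s_{h+1})\big)$, which is valid precisely because $V^{\pi_2}_{H+1}\equiv 0$. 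Subtracting and regrouping term by term yields
\[ J(\pi_1)-J(\pi_2)=\sum_{h=1}^H\E_{\tau\sim\pi_1}\!\left[r(s_h,a_h)+V^{\pi_2}_{h+1}(s_{h+1})-V^{\pi_2}_h(s_h)\right]. \]

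To finish, I would take the conditional expectation over $s_{h+1}$ given $(s_h,a_h)$ inside each summand. Here the crucial observation is that the transition kernel $\gP$ is shared between the two policies, so $\E[\,r(s_h,a_h)+V^{\pi_2}_{h+1}(s_{h+1})\mid s_h,a_h\,]=Q^{\pi_2}_h(s_h,a_h)$ by the Bellman identity above. Each term thus collapses to $\E_{\tau\sim\pi_1}[Q^{\pi_2}_h(s_h,a_h)-V^{\pi_2}_h(s_h)]$, and rewriting the law of $(s_h,a_h)$ under $\pi_1$ as $s\sim d^{\pi_1}_h$ with $a\sim\pi_1(\cdot\mid s)$ turns this into $\E_{s\sim d^{\pi_1}_h}[Q^{\pi_2}_h(s,\pi_1)-Q^{\pi_2}_h(s,\pi_2)]$, recovering the claimed sum.

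I expect the difficulty here to be bookkeeping rather than conceptual depth. The main points to get right are keeping the two roles of $\pi_2$ distinct (its value function $V^{\pi_2}_h$ is evaluated along $\pi_1$'s state visitation $d^{\pi_1}_h$), handling the boundary term $V^{\pi_2}_{H+1}$ cleanly so the telescope closes, and aligning the step index $h$ with the paper's reward-to-go convention for $Q^\pi_h$. It is worth stating explicitly that the shared dynamics and shared initial distribution are exactly what allow the telescoped $V^{\pi_2}$ terms to cancel against the reward terms; a direct comparison of the two trajectory distributions would be far messier. As a fallback I would keep a backward induction on $h$ in reserve, but I expect the telescoping route to be the cleaner presentation.
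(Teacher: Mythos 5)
Your proof is correct. One thing to note at the outset: the paper does not actually prove this lemma itself---it is stated as an imported result with a citation to \citep{kakade}, and no argument for it appears anywhere in the appendix. So the comparison is really against the cited source and against the related machinery the paper does use. Your route---write $J(\pi_2)=\E_{s_1\sim\rho}[V^{\pi_2}_1(s_1)]$, telescope $V^{\pi_2}_1(s_1)=\sum_{h=1}^H\bigl(V^{\pi_2}_h(s_h)-V^{\pi_2}_{h+1}(s_{h+1})\bigr)$ pathwise along $\pi_1$-trajectories, and collapse each summand via the Bellman identity for $\pi_2$ under the shared kernel $\gP$---is exactly the classical Kakade--Langford argument, so you have in effect reproduced the proof of the cited source; all the steps check out, including the two points you flag as delicate (the boundary term $V^{\pi_2}_{H+1}\equiv 0$ and the fact that conditioning on $(s_h,a_h)$ uses the shared dynamics). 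It is worth knowing that the paper's own appendix proves its related results (Lemma~\ref{lem_safty} and Theorem~\ref{thm_infinite}) by the other standard route: a hybrid-policy telescope $J(\pi_1)-J(\pi_2)=\sum_{t=0}^{H-1}\bigl[J(\pi_{1:H-t})-J(\pi_{1:H-t-1})\bigr]$, where $\pi_{1:k}$ runs $\pi_1$ for $k$ steps and then switches to $\pi_2$, with each consecutive difference evaluating to one term $\E_{s\sim d^{\pi_1}_t}[Q^{\pi_2}_t(s,\pi_1)-Q^{\pi_2}_t(s,\pi_2)]$ of the sum. The two proofs are equivalent in content---yours telescopes in value space along a fixed trajectory distribution, the hybrid argument telescopes over the switching time in policy space---but yours makes the role of shared dynamics and shared initial distribution explicit, while the hybrid version has the advantage of directly matching the decomposition the paper reuses in its downstream bounds.
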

With this lemma, we rewrite the optimization objective as
\begin{equation}
    \begin{aligned}
    &\quad \ J(\pi^*)-J(\pi_{\pi'})=J(\pi^*)-J(\pi')+J(\pi')-J(\pi_{\pi'})\\
    &\overset{(a)}{=}\sum_{h=1}^H\mathbb E_{s\sim d^{\pi'}_h}[Q_{h}^{\pi^*}(s,\pi^*)-Q_{h}^{\pi^*}(s,\pi')]\\
    &\qquad +\sum_{h=1}^H\mathbb E_{s\sim d^{\pi'}_h}[Q_{h}^{\pi_{\pi'}}(s,\pi')-Q_{h}^{\pi_{\pi'}}(s,\pi_{\pi'})]. 
\end{aligned}
\label{eq_double_q}
\end{equation}

(a) is derived from Lemma~\ref{lem_policy}. The minimization of the first term implies the teaching policy $\pi'$ should be similar to the expert policy $\pi^*$, while the minimization of the second term implies $\pi'$ should be close to $\pi_{\pi'}$. Note that we cannot determine $\pi'$ simply from $\pi_{\pi'}$ since $\pi_{\pi'}$ is learned from $\pi'$. However, $\pi'$ can be close to $\pi_{\pi'}$ if we assume $\pi_{\pi'}(\cdot|s)=\pi(\cdot|s)$ if $\pi(\cdot|s)=\pi'(\cdot|s)$. The assumption is 
straightforward because $\pi(\cdot|s)=\pi'(\cdot|s)$ implies we do not need to do optimization on state $s$, thus $\pi_{\pi'}$ stays unchanged on this state. Therefore, the overall optimization leads to a trade-off of $\pi'$ between $\pi^*$ and $\pi$.

To decompose the objective into a more tractable one, we assume $Q^\pi_h$ can be upper-bounded by $\Delta$, then
\begin{equation}\label{eq_Q_relax}
\begin{aligned}
&\quad \ \sum_{h=1}^H\mathbb E_{s\sim d^{\pi'}_h}[Q_{h}^{\pi_{\pi'}}(s,\pi')-Q_{h}^{\pi_{\pi'}}(s,\pi_{\pi'})]\\
&\leq \Delta\sum_{h=1}^H\mathbb E_{s\sim d^{\pi'}_h}\sI(\pi'(\cdot|s)\neq \pi_{\pi'}(\cdot|s)).
\end{aligned}
\end{equation}

In this way, the problem is transformed to increasing the probability that $\pi'$ equals $\pi_{\pi'}$ and reducing the value degradation between $\pi'$ and $\pi^*$ simultaneously. 

Applying the constraint in (\ref{eq_constraint}) to the right-hand side of Eq.~(\ref{eq_Q_relax}) with the mentioned $\Delta$, we have
\begin{align}
&\quad \ \Delta\sum_{h=1}^H\mathbb E_{s\sim d^{\pi'}_h}\sI(\pi'(\cdot|s)\neq \pi_{\pi'}(\cdot|s))\\
&\overset{(b)}{\leq}\Delta\sum_{h=1}^H\mathbb E_{s\sim d^{\pi'}_h}\sI(\pi'(\cdot|s)\neq \pi(\cdot|s))\sI(\pi'(\cdot|s)\neq \pi_{\pi'}(\cdot|s))\\
&\overset{(c)}{=}\Delta H\delta \ \mathbb E_{s\sim \beta}\sI(\pi'(\cdot|s)\neq \pi_{\pi'}(\cdot|s))\overset{(d)}{\leq} \Delta H\delta \epsilon_b\\
&\overset{(e)}{=}\Delta\epsilon_b\sum_{h=1}^H\E_{s\sim d_h^{\pi'}}\sI(\pi'(\cdot|s)\neq \pi(\cdot|s)), \label{eq_pi_part}
\end{align}
where (b) uses the fact that $\pi'(\cdot|s)\neq\pi_{\pi'}(\cdot|s)$ implies $\pi'(\cdot|s)=\pi_{\pi'}(\cdot|s)$, (c) is derived from the definition of $\beta$ in Eq.~(\ref{eq_beta}), (d) uses the condition $\E_{s\sim \beta}\ell(s,\pi_{\pi'},\pi')\leq \epsilon_b$, and (e) is derived from the definition of $\delta$ in Eq.~(\ref{eq_delta}).

The first term of Eq.~(\ref{eq_double_q}) can be rewritten as 

\begin{align}&\sum_{h=1}^H\mathbb E_{s\sim d^{\pi'}_h}[Q_{h}^{\pi^*}(s,\pi')-Q_{h}^{\pi^*}(s,\pi^*)]\\=&\sum_{h=1}^H\mathbb E_{s\sim d^{\pi'}_h}[Q_{h}^{\pi^*}(s,\pi')-Q_{h}^{\pi^*}(s,\pi^*)]\sI(\pi'(\cdot|s)\neq \pi^*(\cdot|s)). \label{eq_pi*_part}
\end{align}

The added $\sI(\pi'(\cdot|s)\neq \pi^*(\cdot|s))$ does not contribute to this term, because $Q_{h}^{\pi^*}(s,\pi')-Q_{h}^{\pi^*}(s,\pi^*)=0$ when $\pi'(\cdot|s)= \pi^*(\cdot|s)$. The total value loss is composed of Eq.~(\ref{eq_pi_part}) and Eq.~(\ref{eq_pi*_part}). Fixing the distribution $d^{\pi'}$, Eq.~(\ref{eq_pi_part}) equals 0 if $\pi'(\cdot|s)=\pi(\cdot|s)$ and Eq.~(\ref{eq_pi*_part}) equals 0 if $\pi'(\cdot|s)=\pi^*(\cdot|s)$. Thus, the agent will suffer from a $Q_h^{\pi^*}(s,\pi^*)-Q^{\pi^*}_h(s,\pi')$ value loss if $\pi'(\cdot|s)=\pi(\cdot|s)$, and suffer from a $\Delta\epsilon_b$ value loss if $\pi'(\cdot|s)=\pi^*(\cdot|s)$. In this way, proper choice of $\pi'$ is
\begin{equation}\label{eq_pi_prime}
    \pi'(\cdot|s)=\left\{
    \begin{aligned}
    & \pi^*(\cdot|s) \quad \textnormal{if} \,\,\,  Q_{h}^{\pi^*}(s,\pi^*)-Q_{h}^{\pi^*}(s,\pi)\geq \Delta\epsilon_b,\\
    & \pi(\cdot|s) \quad\,\; \textnormal{otherwise}.
    \end{aligned}
    \right.
\end{equation}

The resultant $\pi'$ switches between the expert policy and the learner policy according to whether $Q_{h}^{\pi^*}(s,\pi^*)-Q_{h}^{\pi^*}(s,\pi)$ exceeds the threshold. In other words, the expert intervenes the interaction when deemed necessary according to the $Q$-value difference. 

In the teacher-student interaction model, the intervention mode of the teacher is somewhat similar to DAgger-based active learning methods~\citep{HG-dagger, ensembledagger, safetydagger,thriftydagger}. The good performance achieved by them can be explained in the way that their intervention strategies make the expert a better teacher. 

Note that Eq.~(\ref{eq_pi_prime}) does not tell us how to design $\pi'$ as $\Delta$ is not available. However, it exposes the mode of a good teacher: let the expert intervenes in the interaction according to the value of $Q_{h}^{\pi^*}(s,\pi^*)-Q_{h}^{\pi^*}(s,\pi)$ and a threshold. Denote the threshold as $p$, the remaining work is to analyze the influence of $p$ and figure out a proper $p$.

\section{Analysis}\label{sec_analysis}
In this section, we analyze the theoretical properties of the intervention mode in both infinite and finite sample cases, and compare it with previous IL approaches.

First, we derive the sub-optimality bound for the teacher-student interaction model in the infinite sample case. The result is shown in the following theorem, whose proof can be found in Appendix~\ref{sec_proof}.
\begin{theorem}\label{thm_infinite}
Let $\pi$ be a policy such that $\mathbb E_{s\sim \beta}[\ell(s, \pi, \pi')]\leq \epsilon_b$, then $J(\pi^*)-J(\pi) \leq pH+\delta \epsilon_b  H^2$, where $\delta=\mathbb E_{s\sim d^{\pi'}}\mathbb I(Q_h^{\pi^*}(s,\pi^*)-Q_h^{\pi^*}(s,\pi)>p)$.
\end{theorem}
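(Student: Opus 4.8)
The plan is to bound the sub-optimality gap by inserting the teaching policy $\pi'$ as an intermediary and invoking the Policy Difference Lemma (Lemma~\ref{lem_policy}) twice. First I would write $J(\pi^*)-J(\pi)=\big(J(\pi^*)-J(\pi')\big)+\big(J(\pi')-J(\pi)\big)$ and apply Lemma~\ref{lem_policy} to each piece with $\pi'$ playing the role of ``$\pi_1$'', so that both are expressed as sums over $h$ of expectations under the \emph{same} occupancy measure $d^{\pi'}_h$: the first becomes $\sum_{h=1}^H\E_{s\sim d^{\pi'}_h}[Q^{\pi^*}_h(s,\pi^*)-Q^{\pi^*}_h(s,\pi')]$ and the second becomes $\sum_{h=1}^H\E_{s\sim d^{\pi'}_h}[Q^{\pi}_h(s,\pi')-Q^{\pi}_h(s,\pi)]$. (Using $d^{\pi'}$ for both, rather than $d^{\pi^*}$, is what ultimately avoids any compounding-error or recoverability condition.) It then suffices to show the first sum is at most $pH$ and the second at most $\delta\epsilon_b H^2$.

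For the first sum I would partition states by whether the teacher intervenes. By the form of $\pi'$ in Eq.~(\ref{eq_pi_prime}), on intervention states $\pi'(\cdot|s)=\pi^*(\cdot|s)$ and the summand is exactly $0$; on the remaining states $\pi'(\cdot|s)=\pi(\cdot|s)$, and the very threshold defining $\pi'$ guarantees $Q^{\pi^*}_h(s,\pi^*)-Q^{\pi^*}_h(s,\pi)\le p$. Hence the summand is pointwise at most $p$, and summing the $H$ terms gives the bound $pH$; notably this half needs no reference to the optimization error $\epsilon_b$.

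The second sum is where $\epsilon_b$ and the factor $\delta H^2$ enter, and I expect the conversion here to be the main obstacle. The summand $Q^{\pi}_h(s,\pi')-Q^{\pi}_h(s,\pi)$ vanishes unless $\pi'(\cdot|s)\ne\pi(\cdot|s)$, i.e.\ unless $\ell(s,\pi,\pi')=1$, and since rewards lie in $[0,1]$ over horizon $H$ I can crudely bound each such difference by $\Delta=H$; this reduces the second sum to $H\sum_{h=1}^H\E_{s\sim d^{\pi'}_h}\ell(s,\pi,\pi')$. The delicate step is to turn this on-occupancy loss into the buffer-weighted loss controlled by the constraint. Because the teacher only stores (and the learner only changes on) states where it deviates from the agent, $\ell(s,\pi,\pi')$ is supported on the intervention set, so I may insert the factor $\mathbb{I}(\pi'\ne\pi)$ for free; the definition of $\beta$ in Eq.~(\ref{eq_beta}) and its normalizer $\delta$ in Eq.~(\ref{eq_delta}) then give $\sum_{h=1}^H \mathbb{I}(\pi'(\cdot|s)\ne\pi(\cdot|s))\,d^{\pi'}_h(s)=H\delta\,\beta(s)$. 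Substituting yields $\sum_{h=1}^H\E_{s\sim d^{\pi'}_h}\ell(s,\pi,\pi')=H\delta\,\E_{s\sim\beta}\ell(s,\pi,\pi')\le H\delta\epsilon_b$, so the second sum is at most $H\cdot H\delta\epsilon_b=\delta\epsilon_b H^2$.

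Adding the two bounds gives $J(\pi^*)-J(\pi)\le pH+\delta\epsilon_b H^2$, as claimed. The point I would treat most carefully is the implicit consistency between the learned policy $\pi$ and the reference used to build $\pi'$ and $\beta$: the argument requires that $\pi$ and $\pi'$ agree on every non-intervention state, so that $\ell(\cdot,\pi,\pi')$ really is supported on the buffer and the threshold comparison in the first sum is against $\pi$ itself. This is exactly the assumption invoked when deriving Eq.~(\ref{eq_pi_prime})---that optimization leaves the agent unchanged wherever the teacher does not intervene---and I would state it explicitly before the substitution. The crude choice $\Delta=H$ only affects constants already absorbed into the stated bound, so no sharper estimate of $Q^\pi_h$ is required.
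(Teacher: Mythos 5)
Your proof is correct and essentially identical to the paper's: the paper likewise splits the gap through $\pi'$, proves $J(\pi^*)-J(\pi')\le pH$ from the threshold structure of $\pi'$ (its safety lemma, Lemma~\ref{lem_safty}), and bounds $J(\pi')-J(\pi)\le \delta\epsilon_b H^2$ by bounding the $Q$-difference by $H$ on states where $\pi'\ne\pi$ and converting that occupancy-weighted loss into the buffer loss via the definitions of $\beta$ and $\delta$, exactly as you do; the only difference is presentational, since the paper re-derives the policy-difference identity by telescoping hybrid policies $\pi'_{0:t}$ and $\pi_{0:t}$ rather than citing Lemma~\ref{lem_policy} directly. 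Your explicit caveat that $\pi'$ must agree with $\pi$ off the intervention set is precisely the fact the paper invokes in its step (a).
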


\noindent\textbf{Remark 1. }It seems $\gO(H^2)$, the term in the BC method, also appears in this sub-optimality bound. However, $\delta$ can be small if $p$ is properly chosen and may even nullify the effect of $\gO(H^2)$. The definition of $\delta$ implies it decreases as $p$ increases, while the first term $pH$ increases as $p$ increases. Therefore, $p$ provides a trade-off between the two terms. Intuitively, the first term is the error induced by neglecting some erroneous actions, while the second term is caused by optimization error.

\noindent\textbf{Remark 2. }BC is a special case of our method. When $p$ equals 0, $\delta$ equals 1. In this case, the expert takes over the entire training process, which is exactly the paradigm of BC. Replacing $p$ with 0 and  $\delta$  with 1, the bound becomes $\epsilon_b H^2$, which is the sub-optimality bound of BC, as shown in Appendix~\ref{sec_review}. Therefore, BC is the upper bound of sub-optimality in our framework.

\noindent\textbf{Remark 3. }Suppose $Q_{h}^{\pi^*}(s,\pi^*)-Q_{h}^{\pi^*}(s,\pi)$ follows a distribution $P$, then $p$ equals the $\delta$ quantile of $P$. If $P$ is concentrated, in other words, $P$ has strong tail decay, then a little increase in $p$ leads to a large drop of $\delta$, and the error bound can be improved to a great extent.

When $P$ belongs to the Sub-Exponential distribution class, which includes many common distributions, e.g., Gaussian distribution, exponential distribution and Bernoulli distribution, we have
\begin{corollary}\label{corollary}
If distribution $P$ belongs to $\gO(\epsilon_b)$-Sub-Exponential distribution class with expectation $\gO(\epsilon_b)$, let $p=\Omega(\epsilon_b\log H)$, then $J(\pi^*)-J(\pi)=\tilde{\gO}(\epsilon_bH)$, where $\tilde{\gO}$ omits the constant and $\log$ term.
\end{corollary}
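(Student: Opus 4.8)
The plan is to apply Theorem~\ref{thm_infinite} and then control its two terms, $pH$ and $\delta\epsilon_b H^2$, separately by substituting the prescribed threshold $p=\Omega(\epsilon_b\log H)$ and bounding $\delta$ through a sub-exponential tail estimate. First I would observe that the quantity $\delta$ in Theorem~\ref{thm_infinite} is, by definition, exactly an upper-tail probability:
$$\delta=\E_{s\sim d^{\pi'}}\mathbb I\big(Q_h^{\pi^*}(s,\pi^*)-Q_h^{\pi^*}(s,\pi)>p\big)=\Pr_{X\sim P}(X>p),$$
where $X=Q_h^{\pi^*}(s,\pi^*)-Q_h^{\pi^*}(s,\pi)$ is the random variable with law $P$ from Remark 3. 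Hence the entire corollary reduces to a tail bound for $P$ at level $p$, after which the two terms can be recombined.

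Second, I would invoke the Bernstein-type concentration inequality for sub-exponential variables. Writing $\mu=\E[X]=\gO(\epsilon_b)$ for the mean and $\lambda=\gO(\epsilon_b)$ for the sub-exponential scale parameter, there is an absolute constant $c>0$ with
$$\Pr(X>\mu+t)\le \exp\!\Big(-c\,\min\Big\{\tfrac{t^2}{\lambda^2},\ \tfrac{t}{\lambda}\Big\}\Big)\quad\text{for all } t\ge 0.$$
The key step is to note that with $p=C\epsilon_b\log H$ for a sufficiently large constant $C$, the effective deviation is $t=p-\mu=\Theta(\epsilon_b\log H)$, which for large $H$ lies well inside the linear (heavy-tail) regime $t\ge\lambda$ since $\lambda=\gO(\epsilon_b)$. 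The bound then simplifies to $\delta\le\exp(-c\,t/\lambda)=\exp(-\Theta(\log H))$, and choosing $C$ large enough forces the exponent to exceed $\log H$, giving $\delta\le 1/H$.

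Finally I would substitute back into Theorem~\ref{thm_infinite}. The first term becomes $pH=C\epsilon_b H\log H=\tilde{\gO}(\epsilon_b H)$, and the second becomes $\delta\epsilon_b H^2\le \tfrac1H\,\epsilon_b H^2=\epsilon_b H=\gO(\epsilon_b H)$; summing the two yields $J(\pi^*)-J(\pi)=\tilde{\gO}(\epsilon_b H)$, as claimed. The main obstacle is the bookkeeping of constants in the sub-exponential tail: one must check that the single choice $p=\Theta(\epsilon_b\log H)$ simultaneously keeps $pH$ at order $\epsilon_b H$ up to the $\log H$ factor absorbed by $\tilde{\gO}$, and pushes the deviation $t$ into the exponential-decay regime with an exponent exceeding $\log H$ so that $\delta\le 1/H$. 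It is precisely the assumption that the scale $\lambda$ and the mean $\mu$ are both $\gO(\epsilon_b)$ — the same order as the threshold's rate — that makes $t/\lambda=\Theta(\log H)$ and reconciles the two requirements on $p$; were either parameter of larger order, the constraints would conflict.
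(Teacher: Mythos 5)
Your proof is correct and takes essentially the same route as the paper's: both identify $\delta$ as the upper-tail probability $\Pr_{X\sim P}(X>p)$, use the sub-exponential tail bound with mean and scale of order $\epsilon_b$ to conclude $\delta\le 1/H$ once $p=\Theta(\epsilon_b\log H)$, and then substitute into Theorem~\ref{thm_infinite} so that $pH=\tilde{\gO}(\epsilon_b H)$ and $\delta\epsilon_b H^2\le\epsilon_b H$. The only cosmetic difference is that you invoke the two-regime Bernstein form and verify the deviation lies in the linear regime, whereas the paper directly writes the one-regime bound $\exp(-(p-\mu)/\sigma)$ with the explicit choice $p=\mu+\sigma\log H$.
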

The proof is given in Appendix~\ref{sec_proof}. For brevity, we use $D_Q$ to denote $Q^{\pi^*}_h(s,\pi^*)-Q^{\pi^*}_h(s,\pi)$ for the remaining of this paper. This corollary implies our method can avoid compounding error under a mild assumption on the distribution of $D_Q$. 
In Sec.~\ref{sec_experiment}, we show that the distribution $P$ in actual tasks satisfies this assumption.

We then derive the sub-optimality bound in the finite sample case. Let $\{\hatpi_i\}_{i=1}^N$ be the sequence of policies generated by our method in $N$ iterations with a fixed $p$, and $\delta_i=\E_{s\sim d^{\hatpi'_i}}\sI(Q^{\pi^*}_h(s,\pi^*)-Q^{\pi^*}_h(s,\hatpi_i)>p)$, then we obtain the following theorem.
\begin{theorem}\label{thm_finite}
Let $\hatpi=\frac{1}{N}\sum_i\hatpi_i $, 
then $J(\pi^*)-\E[J(\hatpi)]
\lesssim 
pH+\delta
\frac{|\gS|H^2}{N}$, where $\delta=\frac{1}{N}\sum_i\delta_i$ and $\lesssim$ omits the constant and the log term. 

If the condition of Corollary~\ref{corollary} is satisfied for all $N$ iterations, the bound can be improved as $J(\pi^*)-\E[J(\hatpi)]\lesssim \frac{|\gS|H}{N}$.
\end{theorem}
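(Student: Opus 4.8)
The plan is to reduce the finite-sample statement to a per-iteration application of Theorem~\ref{thm_infinite}, followed by averaging over the $N$ iterations. I interpret $\hatpi=\frac{1}{N}\sum_i\hatpi_i$ as the mixture policy that samples an index $i$ uniformly and commits to $\hatpi_i$ for the whole episode; the value of such a mixture is the average of the component values, so $\E[J(\hatpi)]=\frac{1}{N}\sum_i\E[J(\hatpi_i)]$ and hence $J(\pi^*)-\E[J(\hatpi)]=\frac{1}{N}\sum_i\E[J(\pi^*)-J(\hatpi_i)]$. For each iteration I apply Theorem~\ref{thm_infinite} to $\hatpi_i$ with its teacher $\hatpi'_i$, giving $J(\pi^*)-J(\hatpi_i)\le pH+\delta_i\ell_i H^2$, where $\ell_i:=\E_{s\sim\beta_i}[\ell(s,\hatpi_i,\hatpi'_i)]$ is the population $0$-$1$ loss of $\hatpi_i$ under its buffer distribution and $\delta_i$ is as defined just before the theorem. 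Everything then reduces to controlling $\frac{1}{N}\sum_i\E[\delta_i\ell_i]$.

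The statistical heart — the finite-sample analog of replacing $\epsilon_b$ by $|\gS|/N$ — is a generalization bound. Since the expert is deterministic, matching the teacher on the buffer is a \emph{realizable} classification problem over a finite state space, equivalently over a finite deterministic-policy class with $\log|\Pi|=|\gS|\log|\gA|=\tilde{\gO}(|\gS|)$. The realizable fast rate then gives expected population loss $\E[\ell_i]=\tilde{\gO}(|\gS|/N)$ once the learner is trained on order-$N$ aggregated samples, with the non-i.i.d. rollout data handled exactly as in the DAgger no-regret / online-to-batch analysis. Substituting $\E[\ell_i]\lesssim|\gS|/N$ uniformly in $i$ and pulling the common factor out of the average yields $\frac{H^2}{N}\sum_i\E[\delta_i\ell_i]\lesssim\frac{|\gS|H^2}{N}\cdot\frac{1}{N}\sum_i\delta_i=\delta\frac{|\gS|H^2}{N}$, which together with the $pH$ term establishes the first bound.

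For the sharpened rate I would repeat the tuning of $p$ from Corollary~\ref{corollary}, but with $\epsilon_b$ replaced by its finite-sample surrogate $|\gS|/N$: choose $p=\Omega(\frac{|\gS|}{N}\log H)$, so that $pH=\tilde{\gO}(|\gS|H/N)$ immediately. For the second term, the $\gO(\epsilon_b)$-Sub-Exponential tail on $D_Q$ — now with parameter and mean $\gO(|\gS|/N)$ — forces $\delta=\Pr(D_Q>p)=\tilde{\gO}(1/H)$ at this threshold, which is exactly the tail computation carried out in the proof of Corollary~\ref{corollary}; hence $\delta\frac{|\gS|H^2}{N}=\tilde{\gO}(|\gS|H/N)$. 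Adding the two $\tilde{\gO}(|\gS|H/N)$ contributions gives $J(\pi^*)-\E[J(\hatpi)]\lesssim\frac{|\gS|H}{N}$.

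The main obstacle is the statistical step of the second paragraph. Two points need care: first, establishing the realizable fast rate $\E[\ell_i]\lesssim|\gS|/N$ even though the training states come from the evolving, non-stationary rollout distributions rather than i.i.d. draws — this is where a martingale or no-regret argument is indispensable and where the suppressed $\log$ factors arise; second, the factorization $\frac{1}{N}\sum_i\E[\delta_i\ell_i]\lesssim\frac{|\gS|}{N}\cdot\frac{1}{N}\sum_i\delta_i$ is clean only if the loss bound holds uniformly in $i$ (or the correlation between $\delta_i$ and $\ell_i$ is controlled), so I must ensure the per-iteration bound, and not merely the Cesàro average $\frac{1}{N}\sum_i\ell_i$, is what is available — otherwise the $\delta$ factor would be lost.
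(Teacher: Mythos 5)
Your overall reduction---treat $\hatpi$ as a per-episode mixture, apply Theorem~\ref{thm_infinite} to each pair $(\hatpi_i,\hatpi'_i)$, and average---is a genuinely different route from the paper's, but it breaks down at exactly the obstacle you flag in your closing paragraph, and that obstacle is fatal as the argument stands. Your factorization needs $\E[\ell_i]\lesssim |\gS|/N$ \emph{uniformly in} $i$, where $\ell_i$ is the population loss of $\hatpi_i$ under its own buffer distribution $\beta_i$. This is not available: $\hatpi_i$ is trained only on data aggregated through iteration $i-1$, so at best one could hope for $\E[\ell_i]\lesssim|\gS|/i$, and $\hatpi_1$ has seen essentially nothing, so $\E[\ell_1]=\Omega(1)$. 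Moreover, even the rate $|\gS|/i$ does not follow from a standard realizable-ERM bound, because the distribution under which $\ell_i$ is measured is induced by the \emph{current} policy while the training states were collected under $\beta_1,\dots,\beta_{i-1}$; this train/test mismatch is precisely what no-regret arguments exist to handle, and they control only the Ces\`aro average $\frac{1}{N}\sum_i\ell_i$, never the individual terms. With only the average available, your chain collapses to $\frac{1}{N}\sum_i\E[\delta_i\ell_i]\le\frac{1}{N}\sum_i\E[\ell_i]\lesssim|\gS|/N$, i.e., the bound $pH+|\gS|H^2/N$ \emph{without} the factor $\delta$---the very loss you predicted.

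The paper sidesteps this by never requesting per-iteration control. It (i) assumes the $\hatpi_i$ are produced by a concrete no-regret learner (normalized EG, i.e., entropy-regularized online mirror descent), whose regret bound (Lemma~\ref{lemma_omd}) yields $\frac{1}{N}\sum_{i}\gL(\hat{\beta}_i,\hatpi_i,\pi^*)\le 4|\gS|\log(|\gA|)/N$ for the \emph{empirical} buffer distributions; (ii) converts this to a population statement via an online-to-batch unbiasedness lemma (Lemma~\ref{lemma_avarage}), giving $\E[\gL(\beta,\hatpi,\pi^*)]\le 4|\gS|\log(|\gA|)/N$ for the single mixture policy $\hatpi$; and (iii) invokes Theorem~\ref{thm_infinite} \emph{once}, applied to $\hatpi$ itself, whose $\epsilon_b$ is this aggregate quantity and whose intervention probability is exactly $\delta=\frac{1}{N}\sum_i\delta_i$. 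Because $\delta$ and $\epsilon_b$ enter as properties of one policy in one invocation of Theorem~\ref{thm_infinite}, the product $\delta\cdot|\gS|H^2/N$ survives with no need to bound each $\ell_i$ or to decorrelate $\delta_i$ from $\ell_i$. Your treatment of the second claim (retuning $p$ with $\epsilon_b$ replaced by $|\gS|/N$) is fine in spirit, but it inherits the same gap. To rescue your per-iteration scheme you would need either a last-iterate (not merely average) guarantee for the online learner, or a restructuring in which the $\delta$ weight attaches to the mixture rather than to each term; the paper's proof is, in effect, the latter.
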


The bound of sample complexity can be derived from this theorem. Let value loss be $\epsilon$, then $N=\tilde{\gO}(\frac{\delta|\gS|H^2}{\epsilon-pH})$. Under the condition of Corollary~\ref{corollary}, the sample complexity is $\tilde{\gO}(\frac{|\gS|H}{\epsilon})$. This shows our method can also avoid the quadratic term of $H$ in the sample complexity. In contrast, AL and AIL methods suffer from such a term in complexity even if the compounding error in the sub-optimality bound is avoided.

\section{Practical Implementation}\label{sec_practical}

In this section, we design a practical algorithm based on the analysis in Sec.~\ref{sec_method} and \ref{sec_analysis}. The key idea is to find a proper value of the threshold $p$ and a  surrogate of $Q_h^{\pi^*}$ when $Q_h^{\pi^*}$ is not available.

To facilitate our derivation, we first introduce the definition of TV divergence and KL divergence. 
\begin{definition}
Let $P$ and $Q$ be two distributions over a sample space $\gS$ , then the TV divergence between $P$ and $Q$, $\TV(P,Q)$, is defined as
$$\TV(P,Q)=\frac{1}{2}\int |P(s)-Q(s)|ds.$$

The KL divergence between $P$ and $Q$, $\KL(P,Q)$, is defined as
$$\KL(P,Q)=\int P(s)\log\frac{P(s)}{Q(s)}ds.$$
\end{definition}

\subsection{The choice of $p$}
According to Corollary~\ref{corollary}, the sub-optimality bound is small when  the assumption on $P$, i.e., $p=\Omega(\epsilon_bH)$, is satisfied. However, letting $p=\Omega(\epsilon_bH)$ is inappropriate because it cannot generalize to other distribution classes and the constant in $\Omega$ is difficult to determine. 

To avoid the drawbacks of Corollary~\ref{corollary}, we choose $p$ according to Theorem~\ref{thm_infinite}. 
Remember that $p$ provides a trade-off between the first term and the second term of the sub-optimality bound, i.e., $pH$ and $\delta\epsilon_bH^2$, and the order of the error depends on the larger term. Therefore, the best order of the bound can be achieved when the two terms are equal. Based on this intuition, the relationship between $p$ and $\delta$ should be $p=\delta \epsilon_b H$. In fact, the choice of $p$ preserves the $\tilde{\gO}(\epsilon_bH)$ bound in Corollary~\ref{corollary} when the assumption on $P$ is satisfied. Please refer to Appendix~\ref{sec_p} for a detailed discussion. 

It is natural to assume the optimization process is smooth, i.e., the intervention probability $\delta$ and policy 0-1 loss $\epsilon_b$ changes slowly throughout the optimization process. Therefore, we can calculate $p$ using $\delta$ and $\epsilon_b$ of the last iteration as an approximation. $\delta$ and $\epsilon_b$ of the last iteration are easy to obtain because $\epsilon_b$ can be calculated directly and $\delta$ can be estimated with the intervention frequency.

For tasks with continuous action
spaces, the policy 0-1 loss is exactly 1, which makes the bound in Theorem~\ref{thm_infinite} trivial. In fact, Theorem~\ref{thm_infinite} holds for $\ell$ is the TV divergence between $\pi$ and $\pi^*$, and we discuss this in Appendix~\ref{sec_p}. According to Pinsker's inequality~\citep{pinsker}, $\TV(P,Q)\leq \sqrt{\KL(P,Q)}$, i.e., KL divergence can be the upper bound of TV divergence. Thus we use KL divergence instead to avoid the complex computation of TV divergence because the condition $\mathbb E_{s\sim \beta}[\ell(s, \pi, \pi')]\leq \epsilon_b$ still holds when $\ell$ is selected as the TV divergence of policy and $\epsilon_b$ is selected as the KL divergence of policy.
In this way, we only need to determine $p$ in the first iteration. The key idea to tune the initial $p$ is to let $p$ approximately equals $\delta\epsilon_b H$, which can be easily calculated after a few interactions with environments.

\subsection{Surrogate of Q-value difference}
In many real-world applications, though the exact expert Q-values are hard to get upfront, many existing methods can acquire a Q that is close to $Q^{\pi^*}$, including learning from offline datasets~\cite{cql, combo, hve}, using human advice~\cite{PEBBLE, human+}, and computing from rules~\cite{kernel, knowledge}. 
However, in some cases the Q-function cannot be obtained, we hope to find a surrogate of $Q^{\pi^*}$. Note that the expert policy $\pi^*$ is accessible, and we derive the relationship between Q-value difference and policy divergence as follows. 
\begin{theorem}\label{thm_pi}
The Q-value difference can be bounded by the policy divergence:
$$Q^{\pi^*}_h(s,\pi^*)-Q^{\pi^*}_h(s,\pi)\leq D_{\textnormal{TV}}(\pi^*(\cdot|s), \pi(\cdot|s))(H-h).$$
\end{theorem}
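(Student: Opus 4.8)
The plan is to recognize the left-hand side as the gap between two expectations of one and the same function, namely $a\mapsto Q^{\pi^*}_h(s,a)$, evaluated under the two action distributions $\pi^*(\cdot\mid s)$ and $\pi(\cdot\mid s)$. Writing $f(a)=Q^{\pi^*}_h(s,a)$ and recalling the shorthand $Q^{\pi_1}_h(s,\pi_2)=\E_{a\sim\pi_2}Q^{\pi_1}_h(s,a)$ from the background, the quantity to bound is $\E_{a\sim\pi^*(\cdot\mid s)}f(a)-\E_{a\sim\pi(\cdot\mid s)}f(a)$. The statement then reduces to a single clean inequality: the difference between two expectations of the same bounded function is at most the span of that function times the TV divergence of the two distributions. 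So the whole proof is a standard integral-probability-metric estimate, once the right quantity (the span of the Q-function over actions) is identified.

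First I would exploit that $\pi^*(\cdot\mid s)$ and $\pi(\cdot\mid s)$ are both probability measures, so subtracting any constant $c$ from $f$ leaves the difference unchanged: $\E_{\pi^*}f-\E_{\pi}f=\E_{\pi^*}(f-c)-\E_{\pi}(f-c)$. Choosing $c=\tfrac12\big(\sup_a f+\inf_a f\big)$ makes $\|f-c\|_\infty$ equal to half the span of $f$. Applying Hölder together with the identity $\int|\pi^*(a\mid s)-\pi(a\mid s)|\,da=2\,\TV(\pi^*(\cdot\mid s),\pi(\cdot\mid s))$ then gives $\big|\E_{\pi^*}f-\E_{\pi}f\big|\le\|f-c\|_\infty\cdot 2\,\TV=\big(\mathrm{span}\,f\big)\,\TV(\pi^*(\cdot\mid s),\pi(\cdot\mid s))$. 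The role of the centering step is precisely that only the span of $Q^{\pi^*}_h(s,\cdot)$ across actions, not its absolute size, enters the bound, which is what keeps the horizon factor linear rather than doubled.

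It remains to bound the action-span of $Q^{\pi^*}_h(s,\cdot)$ by the remaining horizon. Here I would use the backup identity $Q^{\pi^*}_h(s,a)=r(s,a)+\E_{s'\sim\gP(\cdot\mid s,a)}V^{\pi^*}_{h+1}(s')$, where $V^{\pi^*}_{h+1}(s')=Q^{\pi^*}_{h+1}(s',\pi^*)$ is a sum of $H-h$ rewards each in $[0,1]$ and hence lies in $[0,H-h]$, while $r\in[0,1]$. The hard part will be controlling this span tightly: the crude sup-norm bound costs a factor of order $H$ on both ends, so one must (a) track the span rather than the magnitude and (b) handle the immediate-reward term $r(s,a)$ carefully so that the estimate lands on $H-h$ rather than a looser additive unit more. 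Combining the centering bound of the previous paragraph with this span estimate yields $Q^{\pi^*}_h(s,\pi^*)-Q^{\pi^*}_h(s,\pi)\le\TV(\pi^*(\cdot\mid s),\pi(\cdot\mid s))(H-h)$. As a sanity-checking alternative, since the expert is assumed deterministic one may write $\pi^*(\cdot\mid s)=\delta_{a^*}$, so that $\TV(\pi^*(\cdot\mid s),\pi(\cdot\mid s))=1-\pi(a^*\mid s)$ and the difference expands as $\sum_{a\neq a^*}\pi(a\mid s)\big[Q^{\pi^*}_h(s,a^*)-Q^{\pi^*}_h(s,a)\big]$, each bracket again bounded by the remaining-horizon value-to-go; this gives a concrete, coupling-free derivation of the same inequality and makes the source of the $H-h$ factor transparent.
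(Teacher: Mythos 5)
Your route is genuinely different from the paper's, and substantially simpler. The paper first rewrites the Q-gap as $(H-h+1)$ times an integral of the reward against the difference of two state-action occupancy measures over the remaining horizon (Lemma~\ref{lem_q_diff}), then bounds the TV between those occupancy measures by the one-step policy TV using a joint-distribution decomposition (Lemma~\ref{lem_mbpo}) together with a step-by-step contraction argument showing the state-distribution TV cannot grow after step $h$, since both roll-outs follow $\pi^*$ from step $h+1$ onward. You instead observe that both terms on the left are expectations of the single function $f(a)=Q^{\pi^*}_h(s,a)$ under the two action distributions, and invoke the centered H\"older/IPM estimate $\E_{\pi^*}f-\E_{\pi}f\le \mathrm{span}(f)\cdot \TV(\pi^*(\cdot|s),\pi(\cdot|s))$. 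This collapses the paper's two lemmas and the Markov-chain argument into one line. It also produces a more informative pre-factor: for a deterministic expert your expansion $\sum_{a\neq a^*}\pi(a|s)\bigl[Q^{\pi^*}_h(s,a^*)-Q^{\pi^*}_h(s,a)\bigr]$ bounds the gap by $\TV\cdot\max_a\bigl[Q^{\pi^*}_h(s,\pi^*)-Q^{\pi^*}_h(s,a)\bigr]$, which is exactly the local recoverability quantity in the $\mu$-recoverability definition of Appendix~\ref{sec_review}; the horizon bound is then just the worst case of this. What the paper's occupancy route buys is the intermediate bound in terms of occupancy TV (Lemma~\ref{lem_q_diff}), a tighter quantity than policy TV, though the final theorem discards that refinement.

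On the constant: you were right to flag "landing on $H-h$ rather than $H-h+1$" as the hard part, but this is not a gap you could have closed, because the stated constant is wrong. With $Q^{\pi}_h$ summing rewards from $t=h$ to $H$, the span of $Q^{\pi^*}_h(s,\cdot)$ genuinely attains $H-h+1$, and at $h=H$ the claimed inequality fails outright: take $\pi^*$ deterministic with $r(s,\pi^*(s))=1$ and $\pi$ concentrated on a zero-reward action, so the left side is $1$ while the right side is $\TV\cdot(H-H)=0$. The paper's own proof has the same mismatch: Lemma~\ref{lem_q_diff} combined with the TV bound yields $(H-h+1)\,\TV(\pi^*(\cdot|s),\pi(\cdot|s))$, not the $(H-h)$ stated in the theorem. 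So your completed argument proves exactly what the paper's proof proves; the off-by-one is an error in the theorem statement, and it is immaterial to how the result is used (a surrogate intervention criterion proportional to policy divergence times remaining horizon).
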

This theorem shows $D_{\textnormal{TV}}(\pi^*(\cdot|s), \pi(\cdot|s))(H-h)$ is the upper bound of $Q^{\pi^*}_h(s,\pi^*)-Q^{\pi^*}_h(s,\pi)$. 
Using the upper bound as a surrogate is reasonable because the sub-optimality bound in Theorem~\ref{thm_infinite} is preserved.

Similarly, in the environments with continuous action spaces, we use $\sqrt{\KL(\pi^*(a|s),\pi(a|s))}$ instead of $\TV(\pi^*(a|s),\pi(a|s))$. This is because TV divergence is difficult to calculate in continuous action spaces, and Pinsker's inequality~\citep{pinsker} guarantees the theoretical results under this modification.

For our practical algorithm, as the threshold is adaptively tuned in the training process, we name it \textbf{A}ctive a\textbf{da}ptive ex\textbf{p}ert involve\textbf{Men}t (\methodname).
The pseudo-code of \methodname\ is given in Alg.~\ref{alg}.

\begin{algorithm}[htbp]
    \caption{Training procedure of \methodname}
    \label{alg}
    \begin{algorithmic}[1]
    \REQUIRE {
    An expert policy $\pi^*$;
    A Q-function $Q$ corresponding to $\pi^*$;
      Number of sampling steps $N$; 
      Learner update interval $K$
    }
    \STATE Initialize  learner policy $\pi$, buffer $B$, $p$
    \FOR{$n = 1$ to $N$}{
        \STATE Get learner agent action $a_l$ and expert action $a_e$
        \STATE Calculate the surrogate Q-value difference $D_Q$
        \IF {$D_Q > p$}{
            \STATE Take expert action $a_e$, add the transition to $B$
            }
        \ELSE{
           \STATE  Take learner agent action $a_l$
        } \ENDIF
        \IF {$n\%K == 0$}{
            \STATE Sample batches of transitions from $B$ to train $\pi$ 
            \STATE Update $p$-value
        }\ENDIF
        }
    \ENDFOR
    \end{algorithmic}
\end{algorithm}

\section{Experiments}\label{sec_experiment}

\begin{figure*}[htbp]
  \centering
  \subfigure[Performance]{
    \label{fig:perf_test_return}
    \includegraphics[width=0.64\linewidth, trim={5 0 0 0}, clip]{./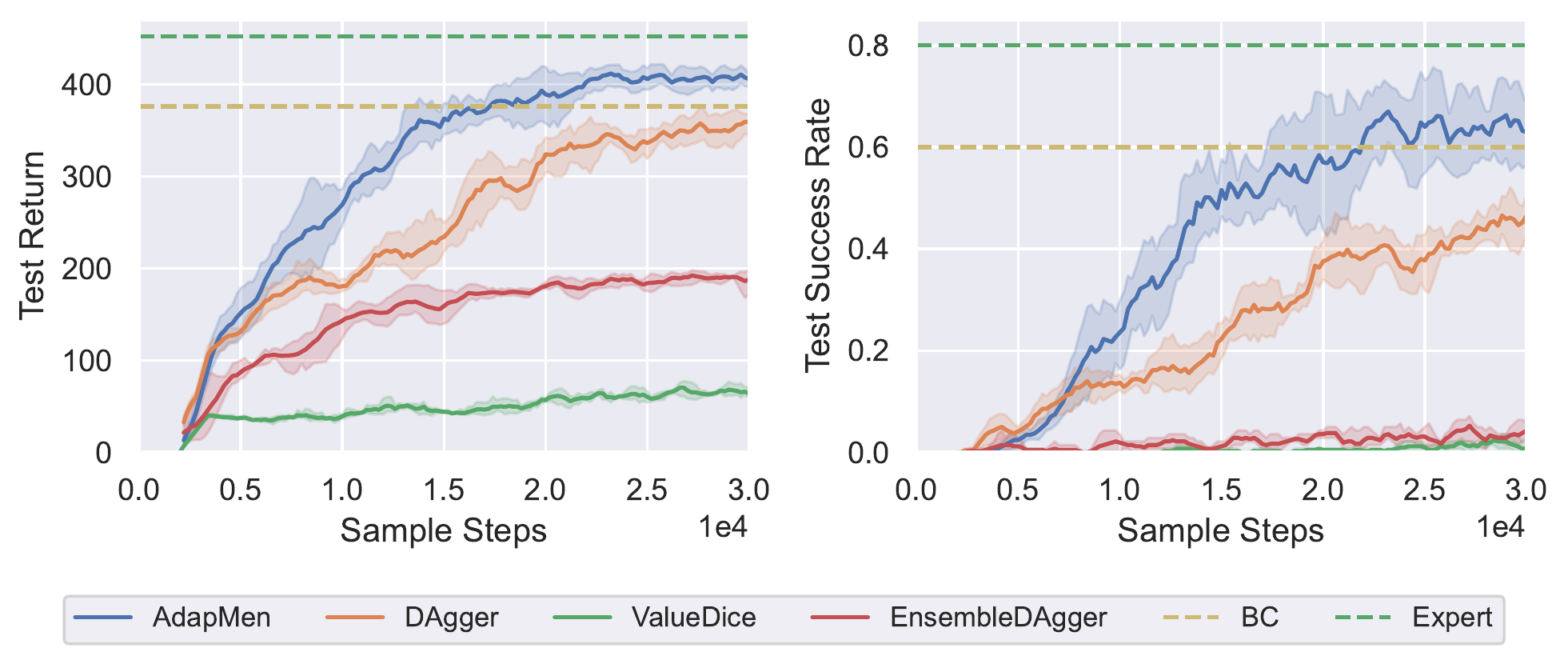}
  }
  \subfigure[Total expert action usage]{
  \vspace{3mm}\includegraphics[width=5.8cm, height=4.9cm, trim={0 0 0 30}, clip]{./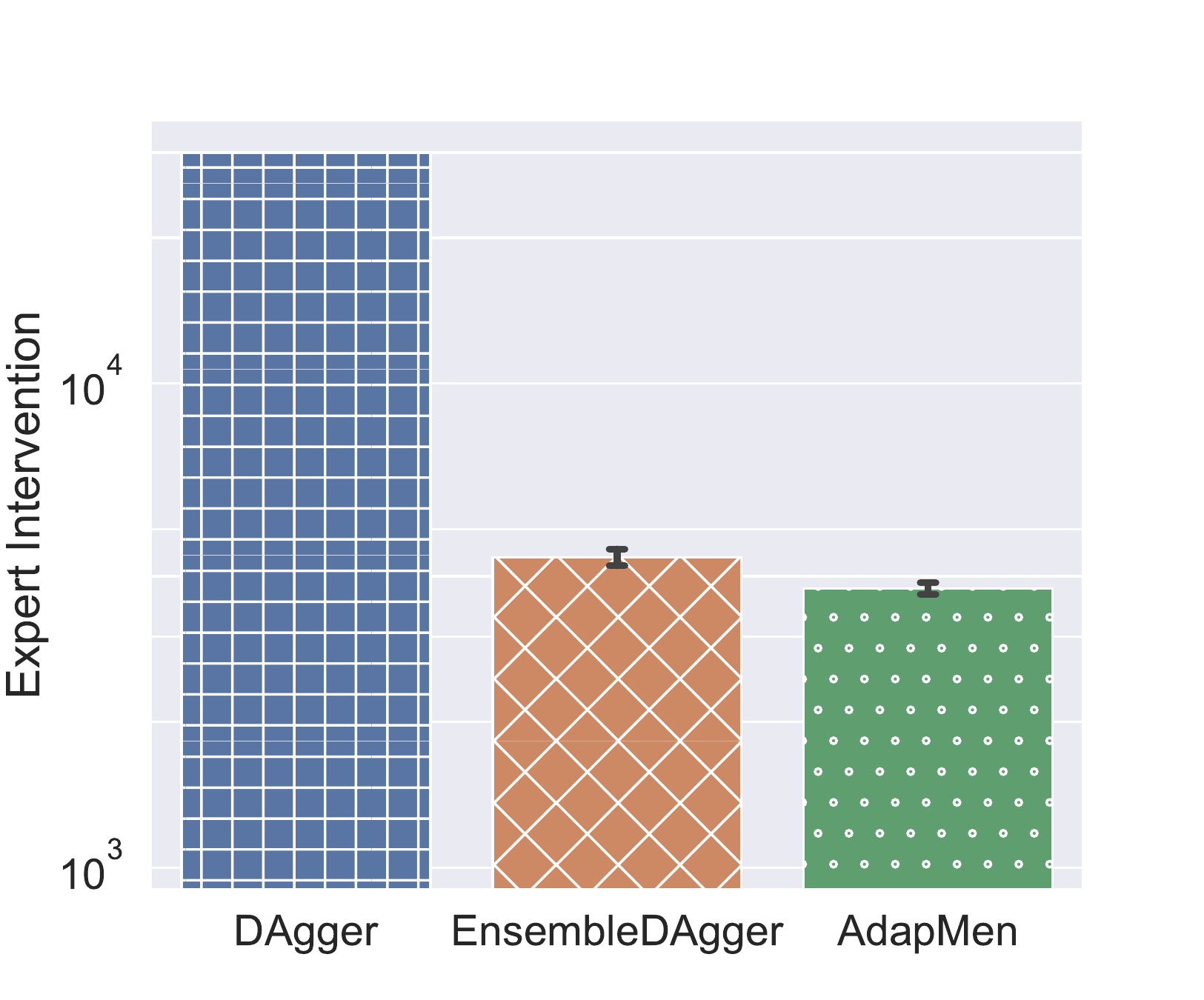}
    \label{fig:intervention_count}
  }
  \caption{Performance in MetaDrive with policy experts}
  \label{fig:performance}
\end{figure*}

In this section, we conduct experiments to test whether \methodname\ reaches the theoretical advantages of our framework. 

We choose MetaDrive~\citep{metadrive} and Atari 2600 games from ALE ~\citep{ALE} as benchmarks. MetaDrive is a highly compositional autonomous driving benchmark that is closely related to real-world applications. The MetaDrive simulator can generate an infinite number of diverse driving scenarios from both procedural generation and real data importing. The agent observes a 259-dimensional vector which is composed of a 240-dimensional vector denoting the 2D-Lidar-like point clouds, a vector summarizing the target vehicle’s state and a vector for the navigation information. The action space is a continuous 2-dimensional vector representing the acceleration and steering of the car, respectively. The goal is to follow the traffic rules and reach the target position as fast as possible. The training configuration of MetaDrive follows that in ~\citep{EGPO}. For the justice of comparison, the evaluation is performed on 20 randomly selected scenarios. 
Atari 2600 games are challenging visual-input RL tasks with discrete action spaces. Using conventional environment wrappers and processing techniques, the agent observes a $(84\times 84)$ grayscale image and has discrete action spaces ranging from 6 valid actions to 18 valid actions depending on the game. We randomly select six common Atari games. To avoid the small stochasticity problem of the Atari simulator, we activate the "sticky action" feature to simulate actual human input and increase stochasticity.

We choose BC~\citep{BC1}, DAgger~\citep{dagger}, HG-DAgger~\citep{HG-dagger}, EnsembleDAgger~\citep{ensembledagger}, and ValueDICE~\citep{ValueDICE} as baselines.
The details of BC and DAgger have been introduced in Sec.~\ref{sec_intro}. HG-DAgger and EnsembleDAgger are representative methods of active imitation learning methods.
HG-DAgger allows interactive imitation learning from human experts in real-world systems by letting a human expert take over control when deemed necessary, and
EnsembleDAgger uses both action variance from an ensemble of policies and action discrepancies between learner and expert as the criterion to decide whether the expert should take over control.
ValueDICE is the SoTA of AIL methods, which trains the learner agent via robust divergence minimization in an off-policy manner.
Hyper-parameters of the implementations of baselines are listed in Appendix~\ref{sec:parameter}.

We first test the performance of \methodname\ and baselines in the two benchmarks with expert in the form of trained policies, namely policy experts. Then, we dive into \methodname\ and demonstrate some key properties of our algorithm to answer the following questions:
\begin{itemize}
\item How is the intervention threshold automatically adjusted during the training process?
\item Can the distribution of $D_Q$ satisfy the assumption in Corollary~\ref{corollary} in most cases?
\item Is policy divergence a good surrogate of $D_Q$?
\end{itemize}
Finally, we simulate real-world scenarios by letting a human be the expert and control the vehicle in the MetaDrive benchmark. 

\begin{figure*}[t]
  \centering
    \includegraphics[width=1.0\linewidth]{./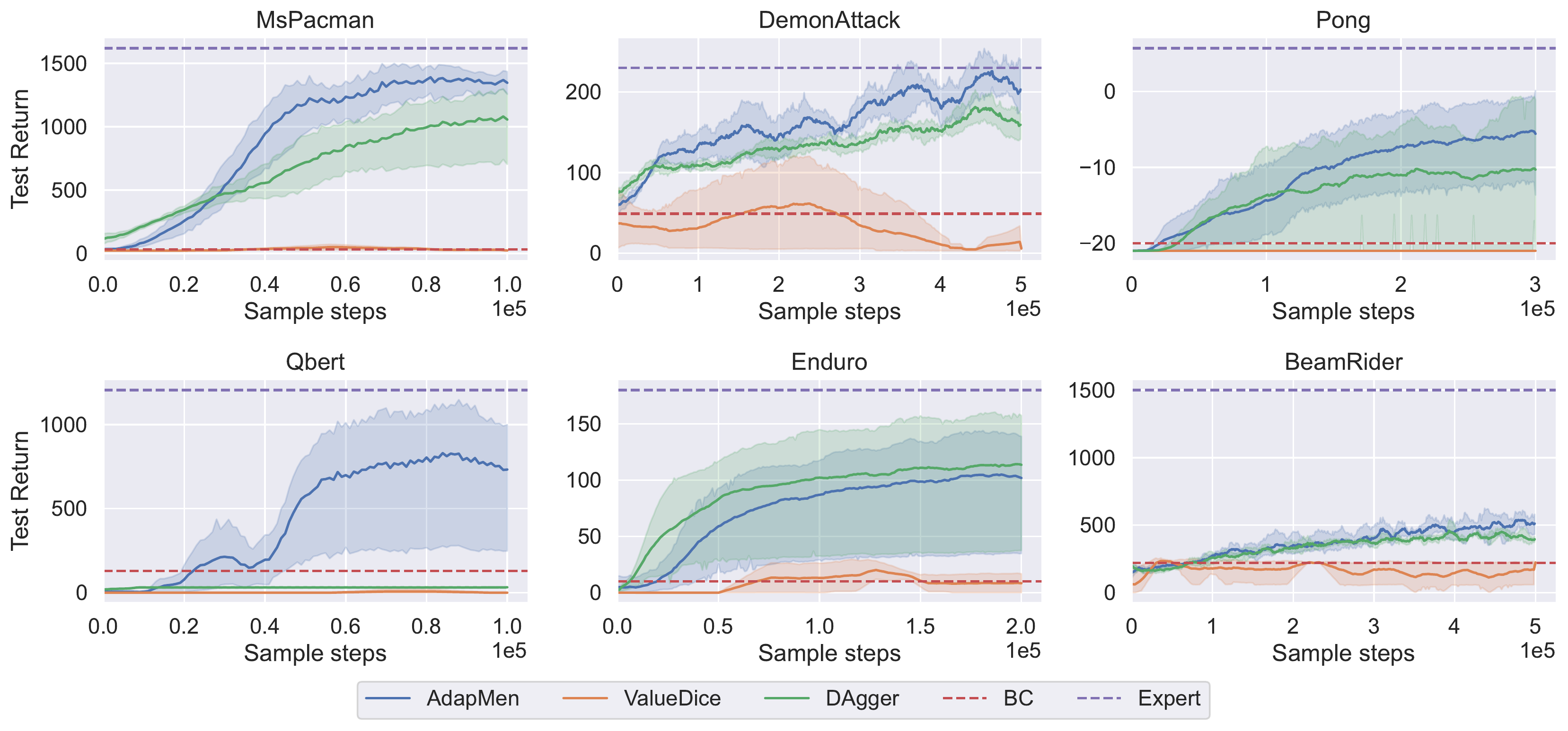}
  \caption{Performance in six Atari games with policy experts}
  \label{fig:performance_atari}
\end{figure*}

\subsection{Performance with policy experts}
\subsubsection{Performance in MetaDrive}

The expert policy of MetaDrive is trained by Soft Actor-Critic (SAC) ~\citep{SAC}. For \methodname, we take one of the trained Q-networks as $Q^{\pi^*}$ and calculate $D_Q$ based on it. To demonstrate the robustness towards inaccurate $Q^{\pi^*}$ when the ground truth value is not available, we also perform experiments on the estimated value function in Appendix~\ref{sec_extra}.

The performance in the MetaDrive benchmark is plotted in Fig.~\ref{fig:performance}. The horizontal axis represents the total number of steps sampled in the environment. The vertical axis of Fig.~\ref{fig:perf_test_return} and Fig.~\ref{fig:intervention_count} are policy return, success rate, and number of expert interventions, respectively.
HG-DAgger is omitted in experiments for the sake of fairness because the expert of the algorithm should be human.

For the MetaDrive benchmark, \methodname\ achieves the best performance in terms of both cumulative return and success rate. ValueDICE achieves the worst performance probably because of its highest sample complexity and sensitivity to hyper-parameters thus we fail to find a working configuration. 
Notwithstanding the low expert intervention counts of EnsembleDAgger, the performance of EnsembleDAgger severely degrades.
The $\mu$-recoverability property of DAgger is hard to satisfy in risk-sensitive environments so that DAgger shows no advantage than BC. 
BC achieves the best performance among all the baseline algorithms. This is because the policy expert has little stochasticity and the dimension of input is small.

The total number of expert data usage is shown in Fig.~\ref{fig:intervention_count}. Here the expert data usage is defined as the number of expert state-action pairs added to the buffer for training the learner. This quantity of BC, ValueDICE is the same as DAgger and we omit them in the figure. DAgger always adds the expert state-action pair to the buffer, thus having the biggest expert data usage.
Compared with EnsembleDAgger, by generating the best buffer distribution for teaching, \methodname\ requires fewer expert interventions while achieving a better test performance.

To further verify our theory, we draw the trend of $p$-value and actual intervention probability throughout the training process in Fig.~\ref{fig:pvalue}, where the left vertical axis represents the value of $p$,while the right vertical axis represents the intervention probability. The probability is calculated  every 200 sample steps in the environment. Theorem~\ref{thm_infinite} implies the sub-optimality is negatively related to $p$ and $\delta$. This is verified by the decreasing trend of $p$ and $\delta$ in the training process, coinciding with the increasing policy return in Fig.~\ref{fig:perf_test_return}. Meanwhile, the sharply changing $p$ also demonstrates the importance of adaptively changing intervention criterion. Intuitively, as the learner agent gets better at driving the car, the teacher should increase the difficulty of the teaching policy. A lower $p$-value indicates more difficult learning content.
\begin{figure}[htbp]
  \centering
    \includegraphics[width=0.6\linewidth,  trim={10 0 0 30}, clip]{./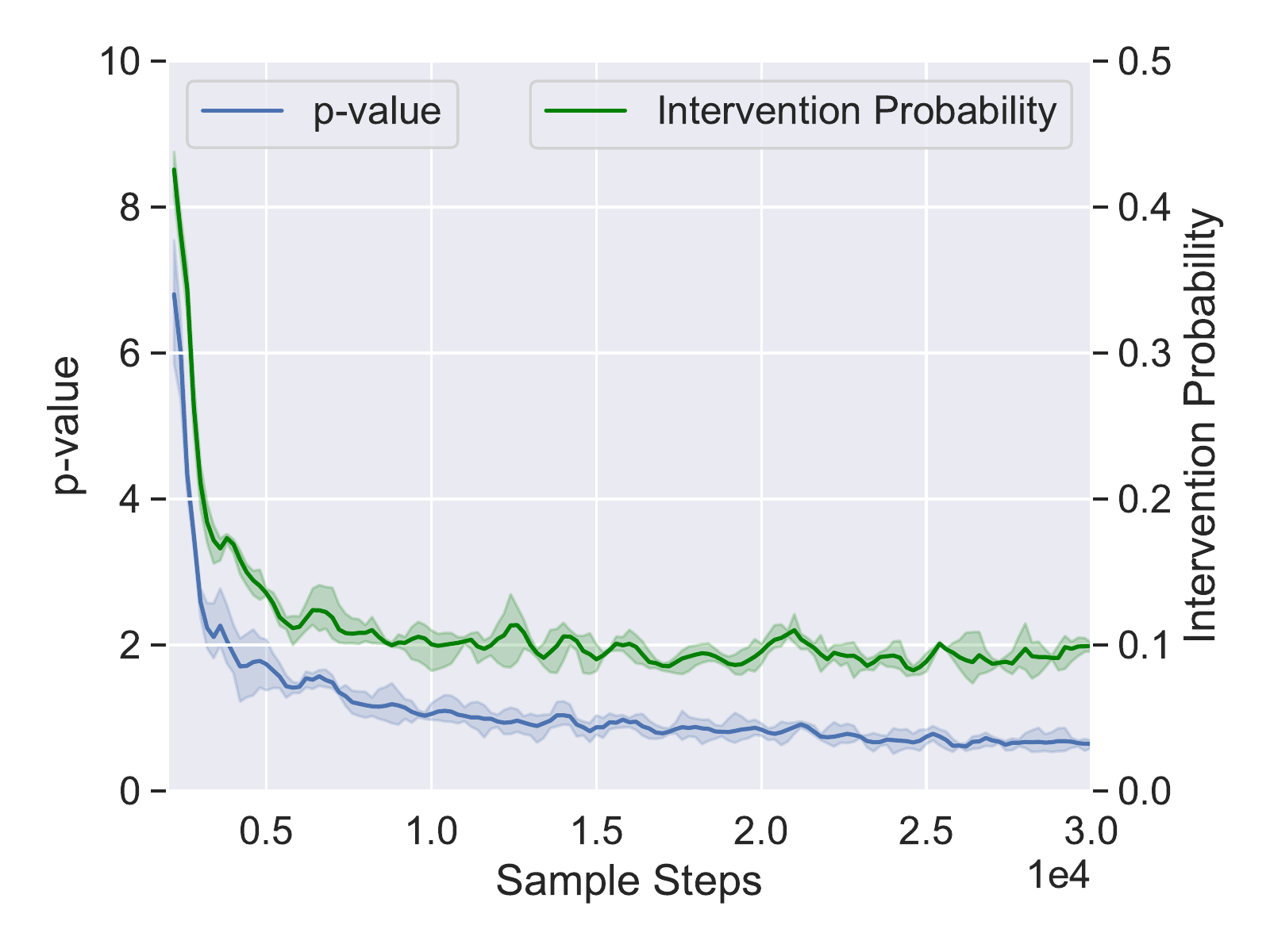}
    \caption{$p$-value and intervention probability of \methodname\ on MetaDrive}     \label{fig:pvalue}
\end{figure}

\vspace{-1.5mm}
\subsubsection{Performance in Atari games}

The expert policies of Atari games are trained by Deep Q Learning~\citep{atari_dqn}. Note that we activate the "sticky action" features to increase the stochasticity of the tasks.
Since Ensemble-DAgger requires a continuous action space, we omit it for comparison in the Atari 2600 games. The performance curves in the Atari 2600 games are plotted in Fig.~\ref{fig:performance_atari}.

\methodname\ outperforms baselines in 5 out of 6 Atari games. These tasks are more challenging, which can be inferred from the performance of baselines. In Qbert, all algorithms fail to learn from the expert except for \methodname. In all the tasks, ValueDICE performs equally poorly as in MetaDrive. BC, which has near-optimal performance in MetaDrive,  also collapses in most of the six Atari games. This shows that BC fails in higher-dimensional  environments. DAgger performs 
better than other baselines, this is probably because the $\mu$-recoverability assumption can still be satisfied in most states in Atari games.

\subsection{Performance of \methodname\ criterion based on policy divergence}
\begin{figure}[htp]
  \centering
        \includegraphics[width=1.0\linewidth]{./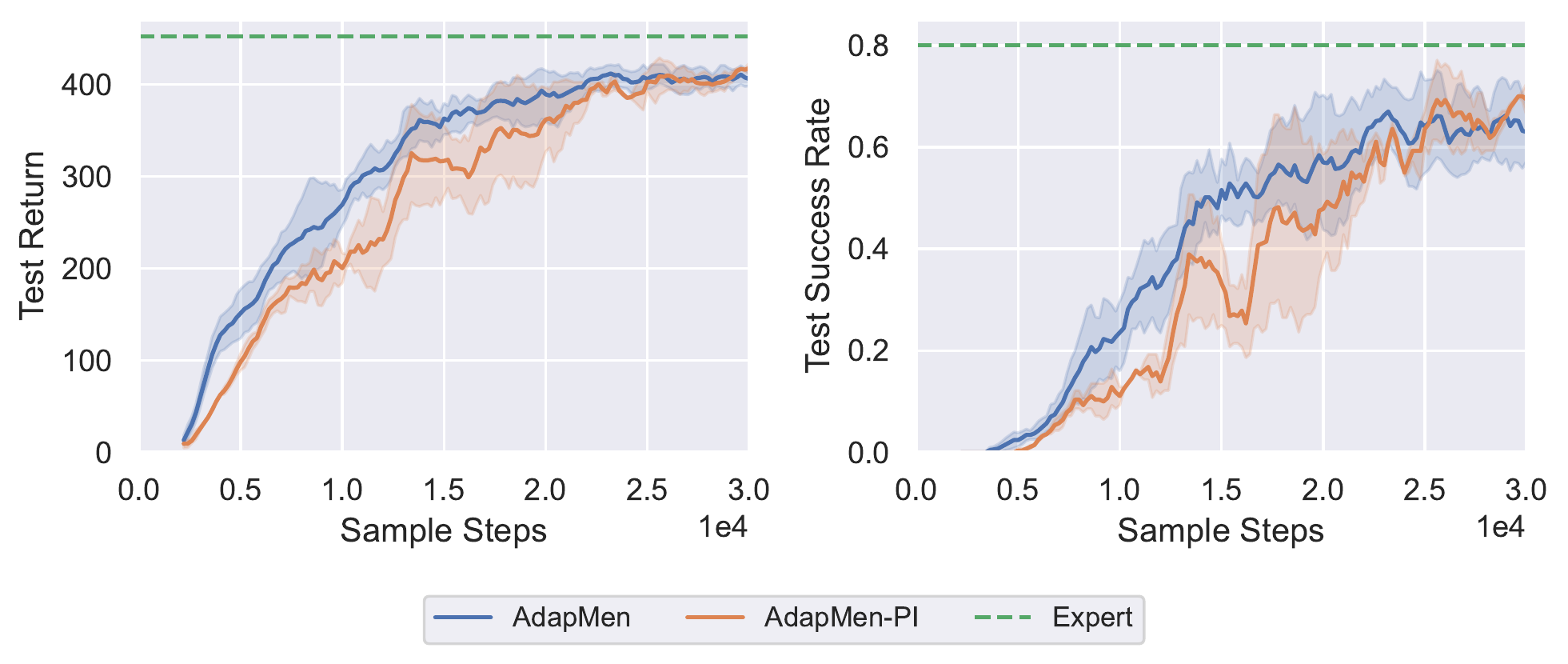}
      \caption{Performance in MetaDrive with different criteria of \methodname}
      \label{fig:perf_adapmen}
\end{figure}

As mentioned in Sec.~\ref{sec_practical}, when $Q^{\pi^*}$ is not available, we use policy divergence 
as a surrogate of $D_Q$. To validate the correctness of this surrogate, we test it on MetaDrive, and plot its performance in Fig.~\ref{fig:perf_adapmen}. \methodname\ is the original algorithm, while \methodname-PI uses the policy divergence instead of $D_Q$. The result shows \methodname-PI has comparable performance with \methodname. This experiment validates our theory and demonstrates that policy divergence is also a proper criterion.

\subsection{Analysis of $D_Q$ distribution}

\begin{figure}[htbp]
  \centering
  \subfigure[MetaDrive-initial]{
    \includegraphics[width=0.475\linewidth, trim={10 20 5 10}, clip]{./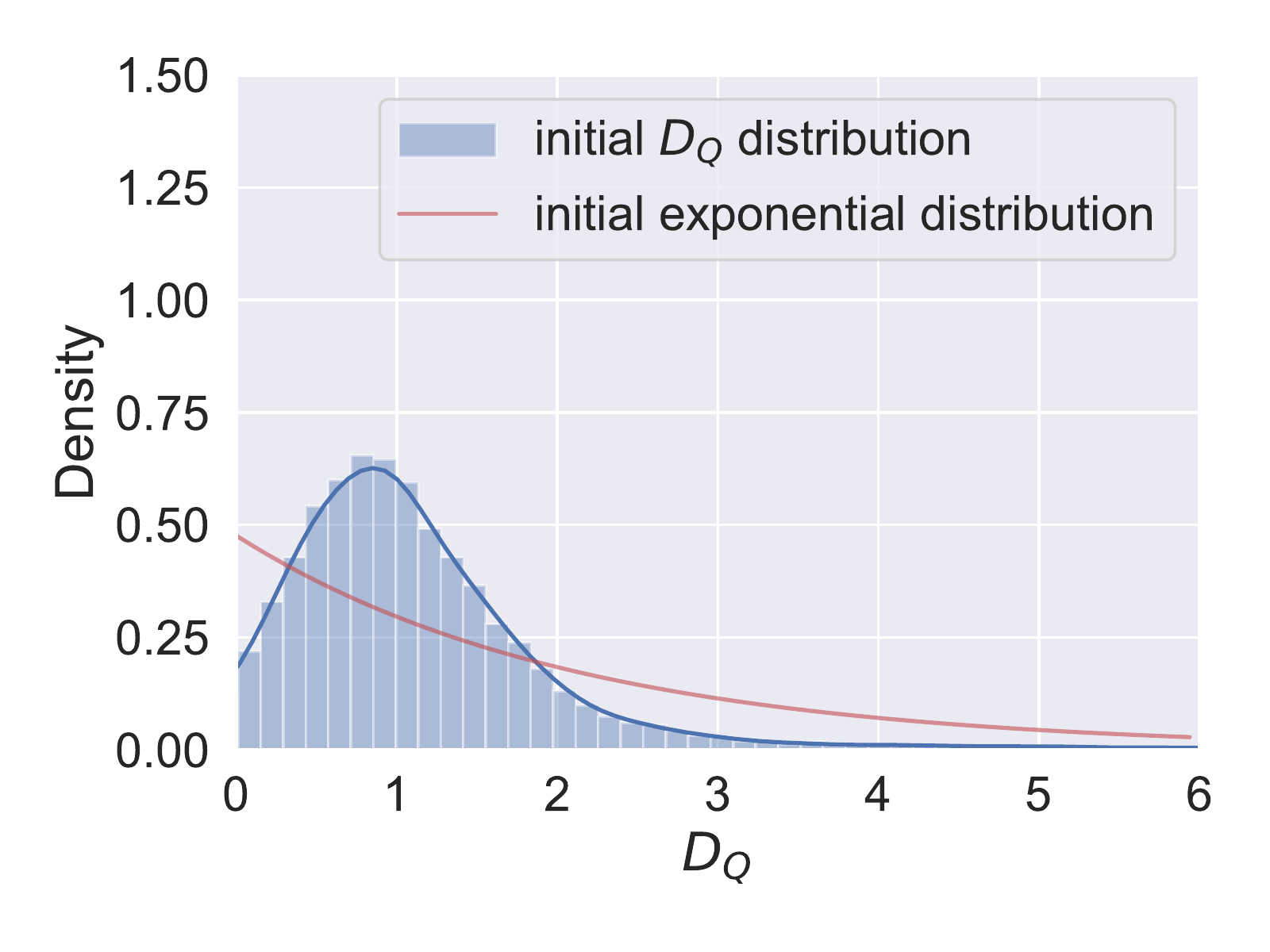}
  }
  \subfigure[MetaDrive-final]{
    \includegraphics[width=0.475\linewidth]{./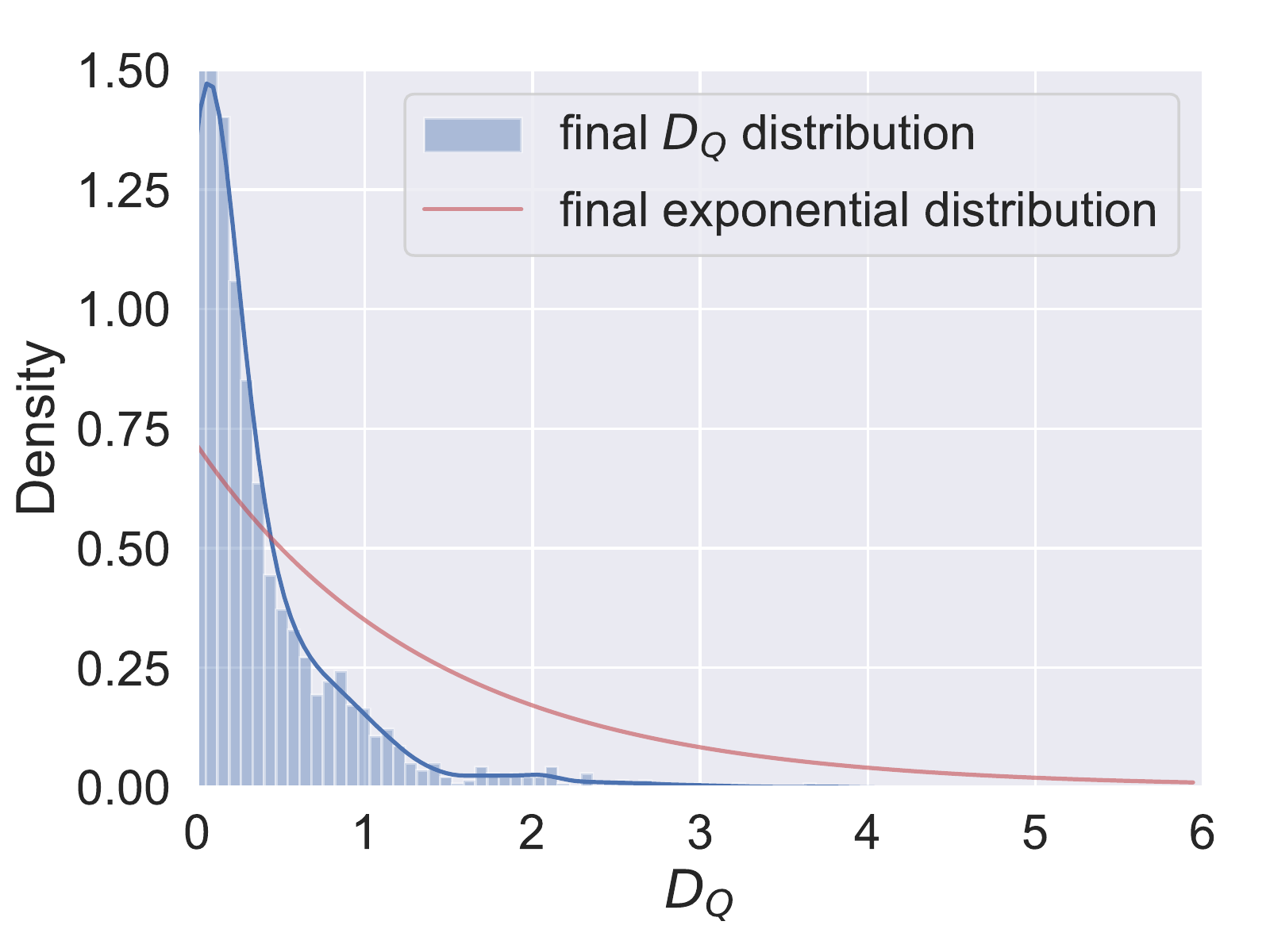}
  }
  \vspace{-1.5mm}
  \subfigure[DemonAttack-initial]{
    \includegraphics[width=0.475\linewidth, trim={10 20 5 10}, clip]{./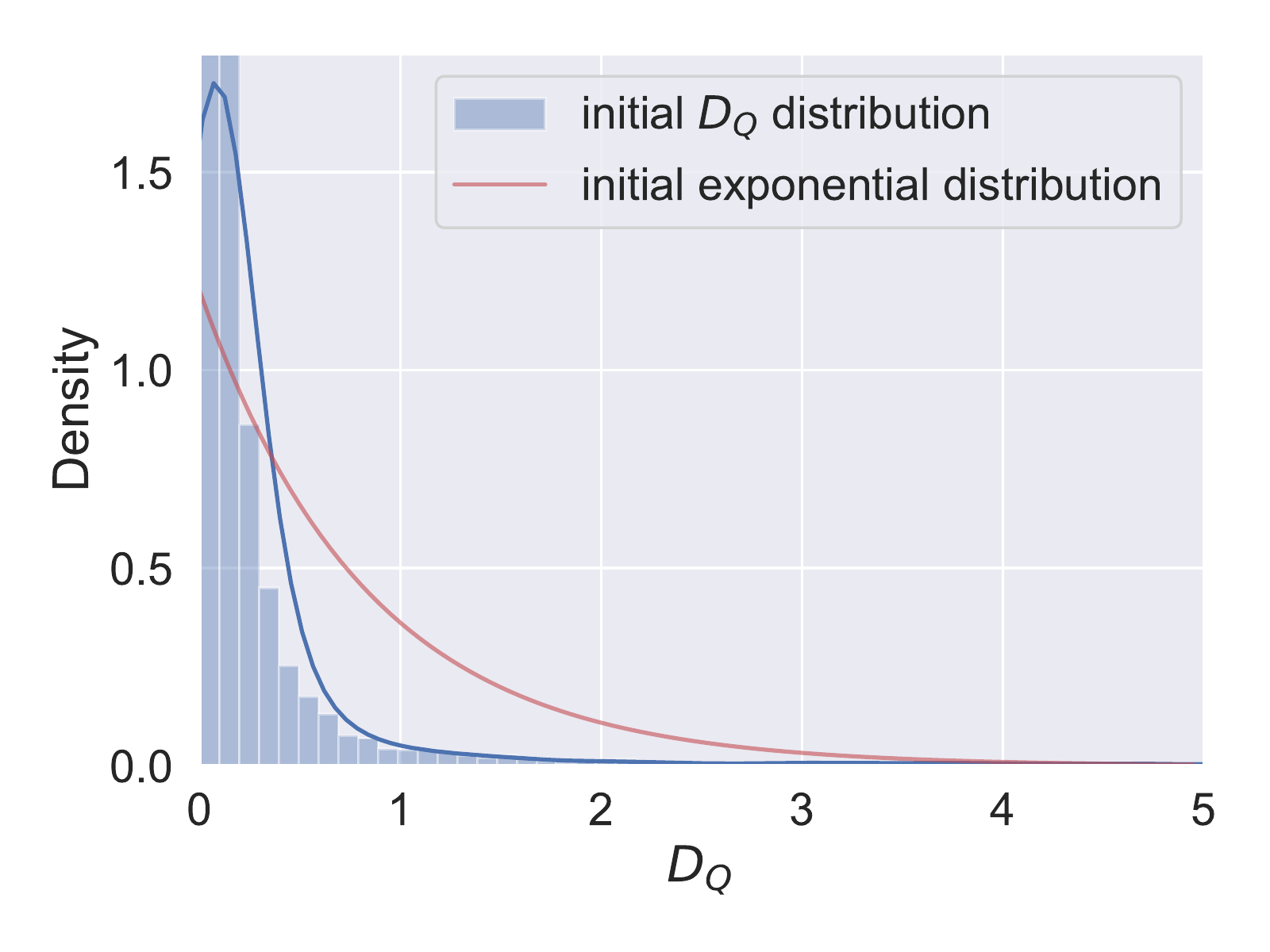}
  } 
  \subfigure[DemonAttack-final]{
    \includegraphics[width=0.475\linewidth, trim={10 20 5 10}, clip]{./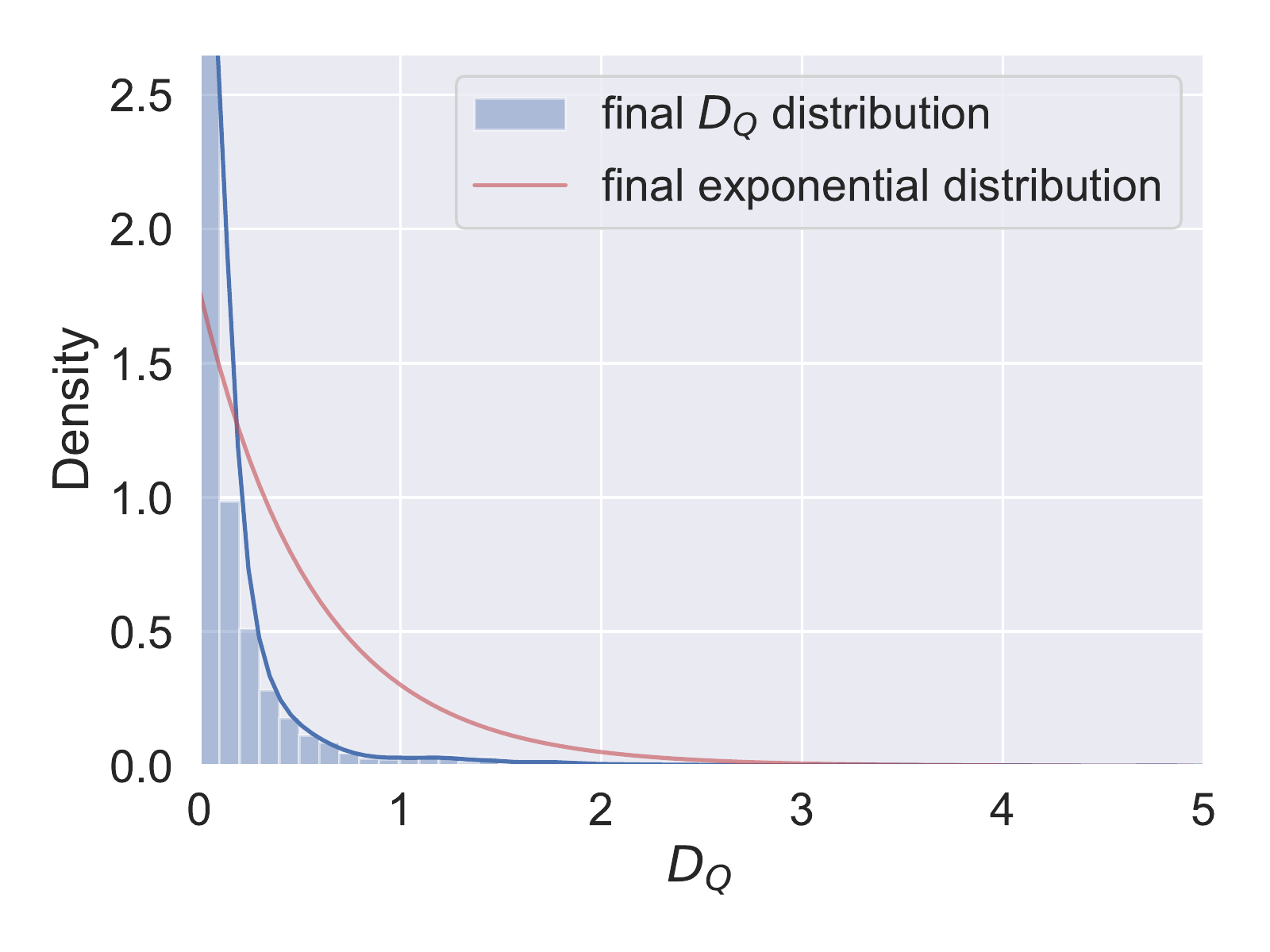}
  } 
  \vspace{-1.5mm}
  \subfigure[MsPacman-initial]{
    \includegraphics[width=0.475\linewidth, trim={10 20 5 10}, clip]{./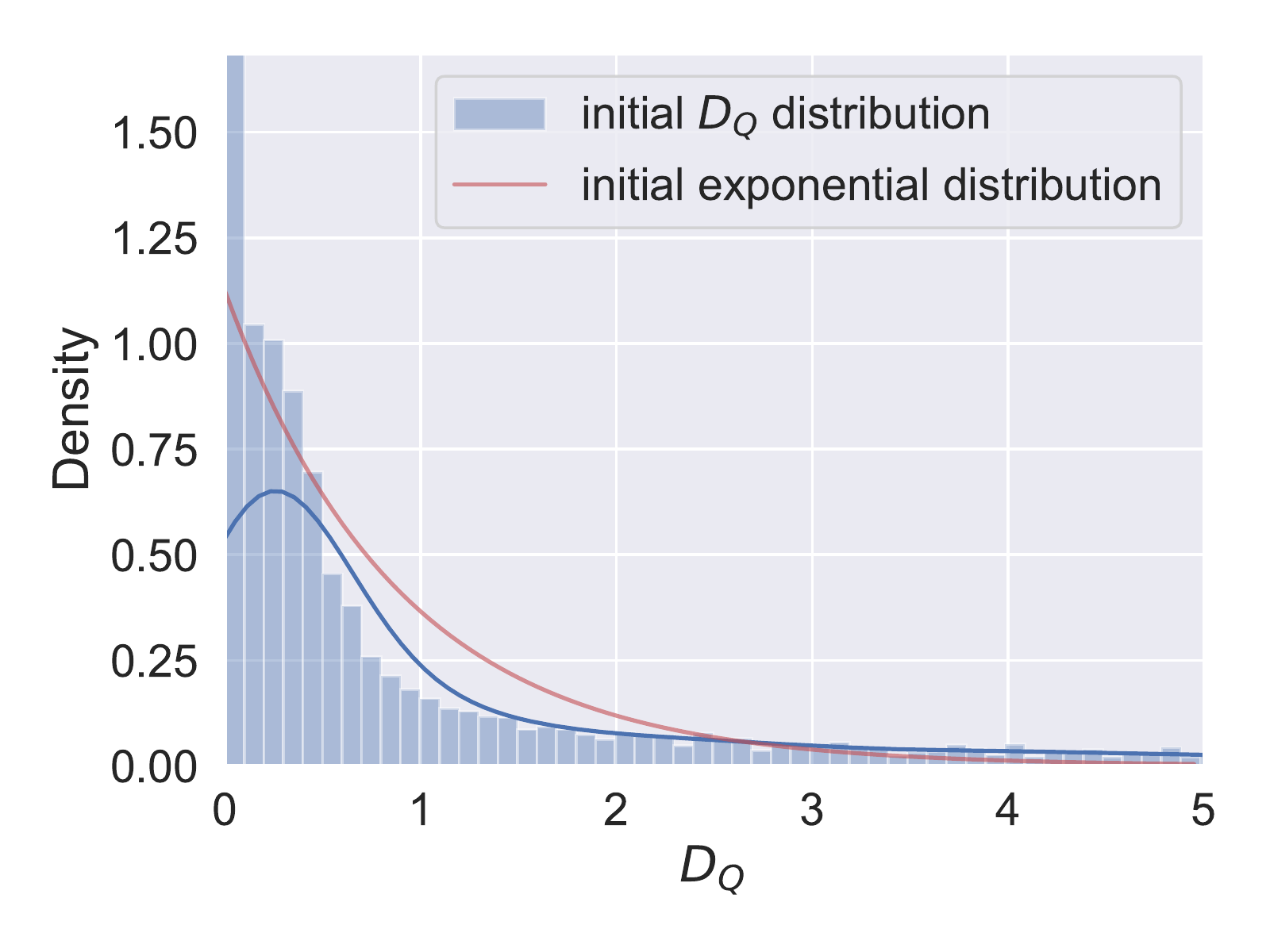}
  } 
  \subfigure[MsPacman-final]{
    \includegraphics[width=0.475\linewidth, trim={10 20 5 10}, clip]{./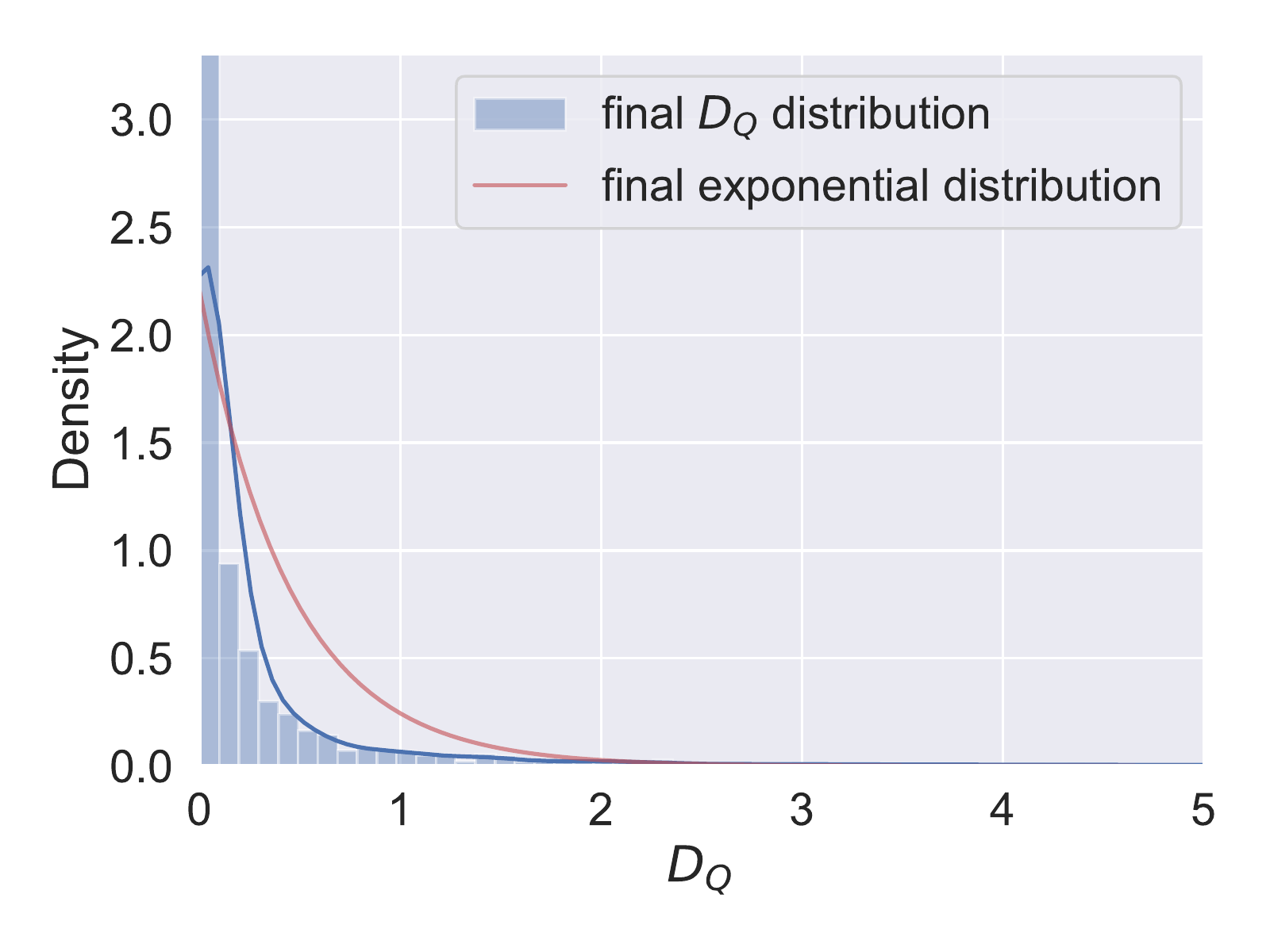}
  } 
  \vspace{-1.5mm}
  \subfigure[BeamRider-initial]{
    \includegraphics[width=0.475\linewidth, trim={10 20 5 10}, clip]{./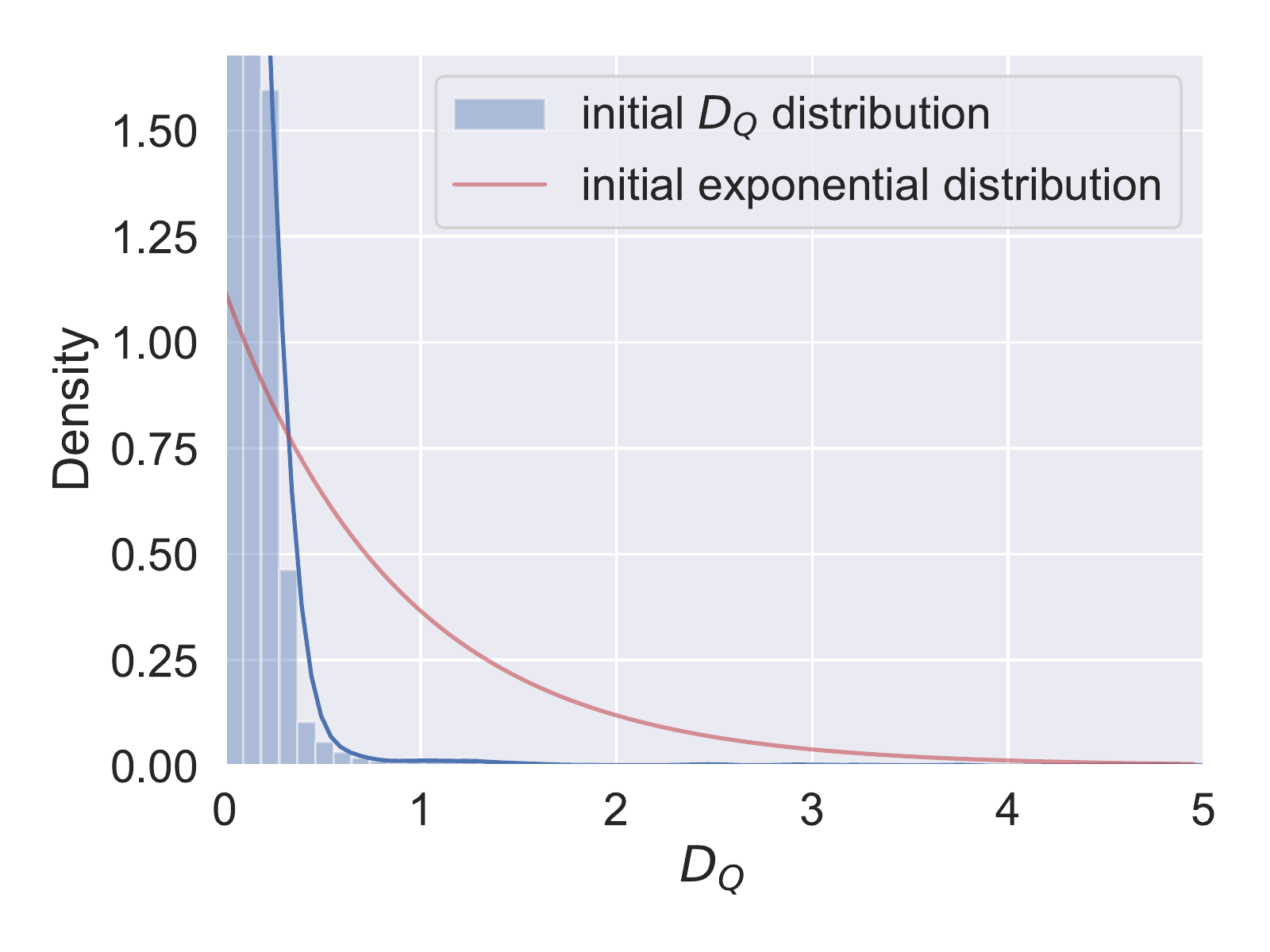}
  } 
  \subfigure[BeamRider-final]{
    \includegraphics[width=0.475\linewidth, trim={10 20 5 10}, clip]{./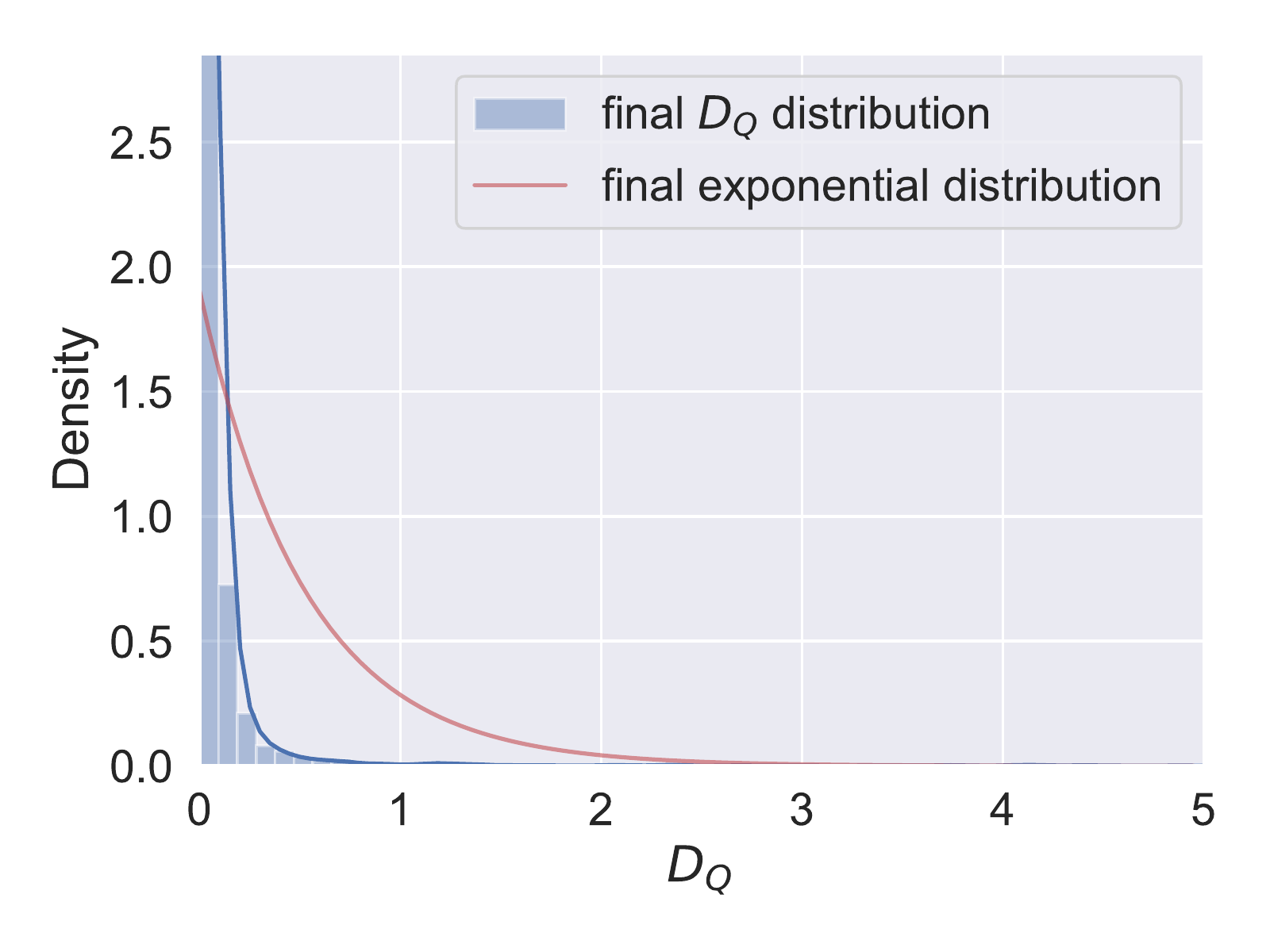}
  } 
  \vspace{-1.5mm}
  \caption{$D_Q$ distributions in MetaDrive and Atari games. The blue lines show the distributions of $D_Q$ estimated by kernel density estimation.}
  \label{fig:analysis}
  \end{figure}
  
In Corollary~\ref{corollary}, we assume the distributions of $D_Q$, i.e., $P$ in Sec.\ref{sec_analysis}, belongs to $\gO(\epsilon_b)$-Sub-Exponential distribution class with expectation $\gO(\epsilon_b)$. The assumption is satisfied if the tail of $P$ is bounded by an exponential distribution with parameter $\epsilon_b$. To verify this assumption, we plot $P$ of MetaDrive and the six Atari games and compare their tails with exponential distribution. The partial results are shown in Fig.~\ref{fig:analysis} because of space limitation. The rest of the results are in Appendix~\ref{sec_dq}. The distributions at the beginning and at the end of the training are on the left-hand side and right-hand side, respectively. All the tails of $P$ are bounded by the exponential distribution, which implies the assumption is satisfied in nearly all tested tasks. This bridges the gap between the theoretical analysis and practical applicability of \methodname.

\subsection{Performance in MetaDrive with a human expert}
  \begin{figure}[htbp]
  \centering
      \subfigure[Test Return]{
        \label{fig:perf_test_return_human}
        \includegraphics[width=0.5\linewidth]{./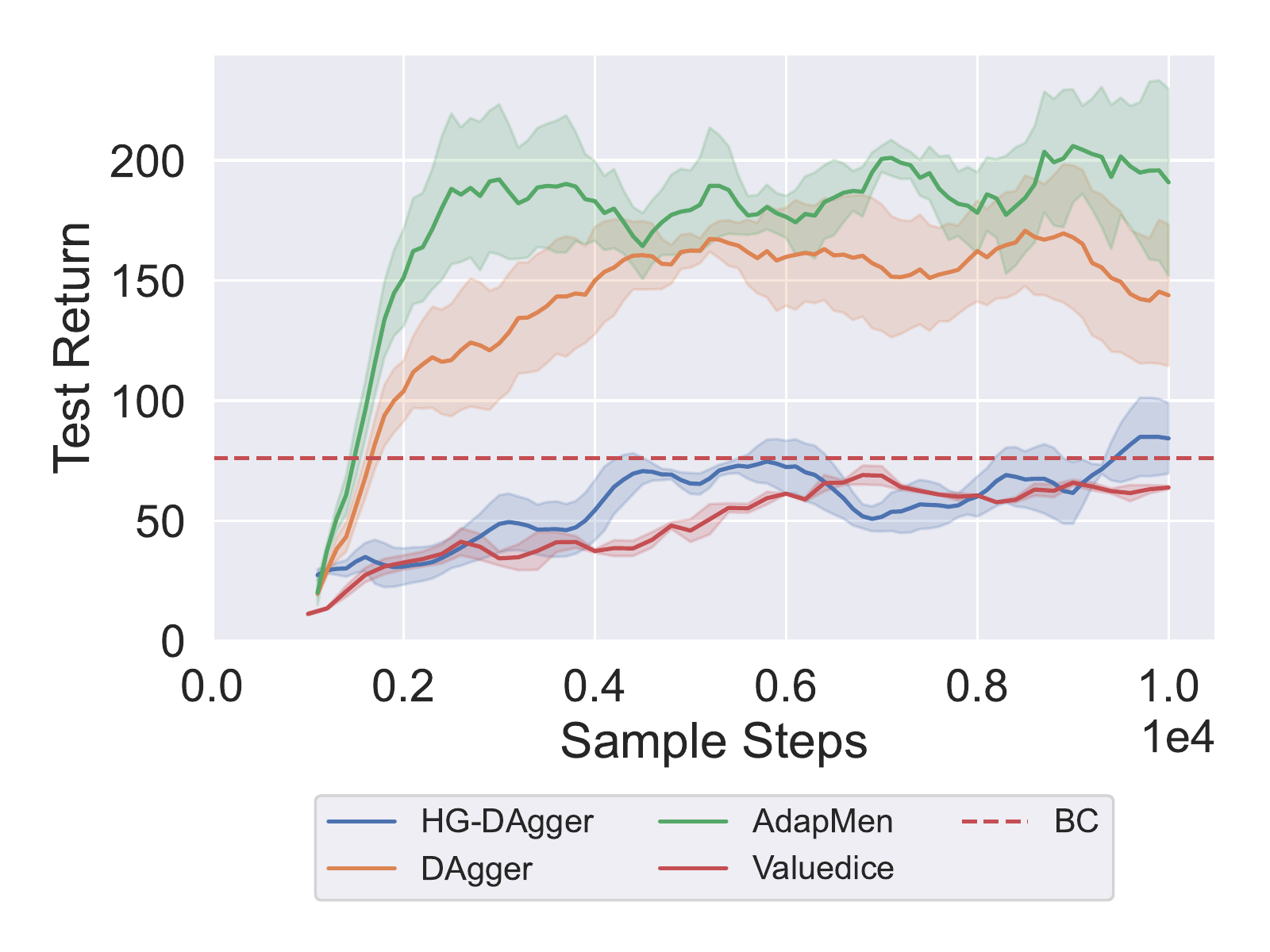}
      }
      \subfigure[Expert action usage]{
        \label{fig:perf_success_human}
        \includegraphics[width=0.45\linewidth]{./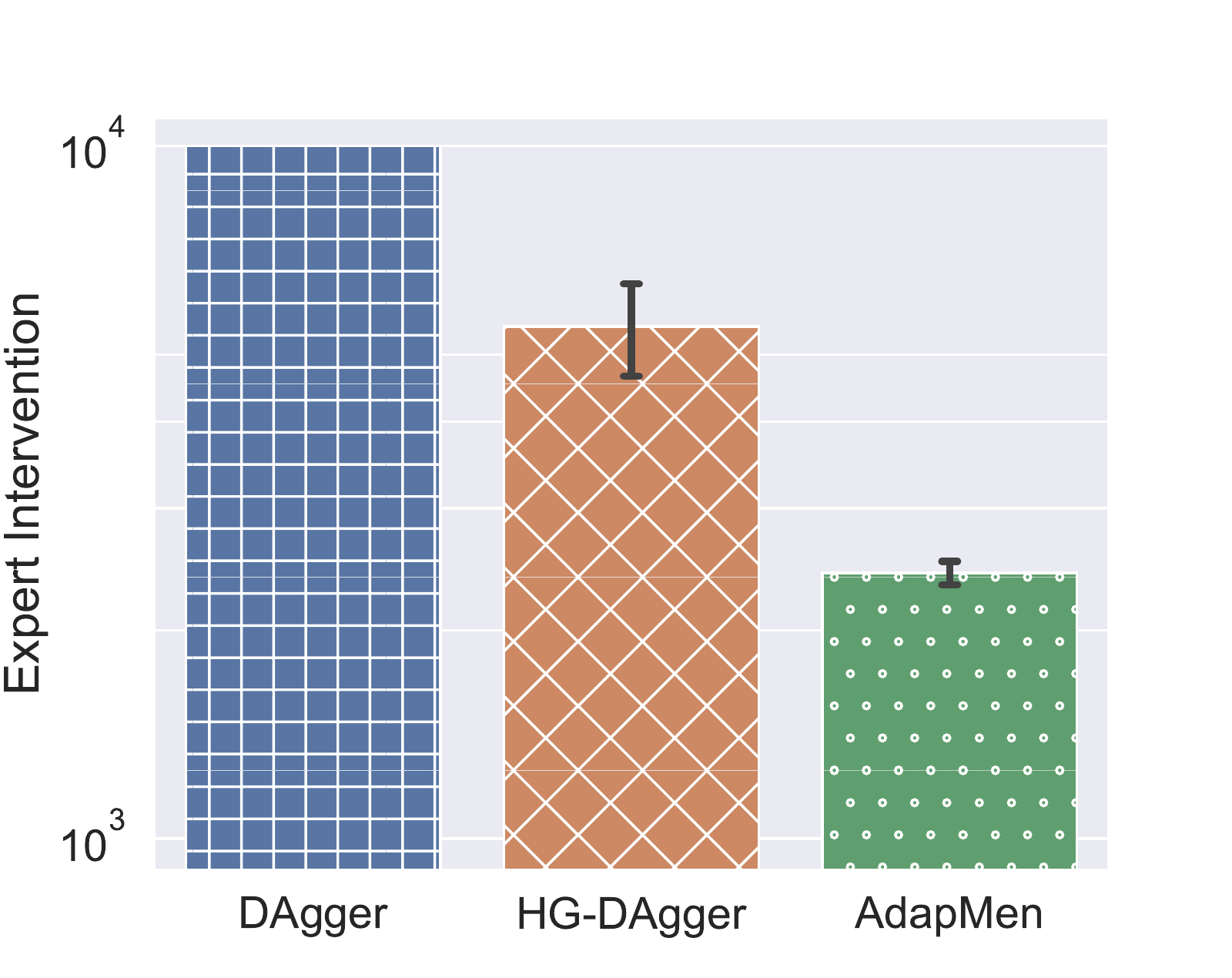}
      }
      \caption{Performance in MetaDrive with a human expert}
      \label{fig:perf_human}
\end{figure}
In real-world tasks, humans are important sources of expert information, especially in autonomous driving tasks. To mimic real-world tasks, we substitute the SAC expert in MetaDrive with a human. The experimental results are shown in Fig.~\ref{fig:perf_human}. The random and sometimes irrational behaviors of  human experts raise a huge challenge for imitation learning algorithms, and general degradation of performance happens for all methods. BC has a 75\% performance degradation. In contrast, \methodname\ has a relatively small performance degradation and achieves the best final performance. The performance of HG-DAgger is surprising. Although our human expert has tries his best to correct the behavior of the learner agent,  HG-DAgger is only slightly better than BC. HG-DAgger even uses more expert actions to train the learner policy than \methodname. This shows the teaching strategy of humans are unreliable and an objective criterion is important. 

\section{Conclusion}
In this paper, we formulate the IL process as a teacher-student interaction framework. The proposed framework shows expert should involve in the interaction of the agent with the environment according to a certain criterion. We theoretically verify the effectiveness of this framework, and derive a better error bound and sample complexity  under a mild condition, which we experimentally demonstrate common in many benchmarks. 
Based on the teacher-student interaction framework, we propose a practical method \methodname, where the intervention criterion is tuned automatically in the training process, which frees the hyper-parameter tuning budget of other active imitation learning methods. Experimental results demonstrate that \methodname\ achieves a better performance than other IL methods.



\section*{Acknowledgement}
This work is supported by National Key Research and Development Program of China (2020AAA0107200), the National Science Foundation of China (61921006, 62276126) and the Major Key Project of PCL (PCL2021A12).

\bibliographystyle{ACM-Reference-Format} 
\bibliography{sample}


\begin{thebibliography}{47}


\ifx \showCODEN    \undefined \def \showCODEN     #1{\unskip}     \fi
\ifx \showDOI      \undefined \def \showDOI       #1{#1}\fi
\ifx \showISBNx    \undefined \def \showISBNx     #1{\unskip}     \fi
\ifx \showISBNxiii \undefined \def \showISBNxiii  #1{\unskip}     \fi
\ifx \showISSN     \undefined \def \showISSN      #1{\unskip}     \fi
\ifx \showLCCN     \undefined \def \showLCCN      #1{\unskip}     \fi
\ifx \shownote     \undefined \def \shownote      #1{#1}          \fi
\ifx \showarticletitle \undefined \def \showarticletitle #1{#1}   \fi
\ifx \showURL      \undefined \def \showURL       {\relax}        \fi
\providecommand\bibfield[2]{#2}
\providecommand\bibinfo[2]{#2}
\providecommand\natexlab[1]{#1}
\providecommand\showeprint[2][]{arXiv:#2}

\bibitem[\protect\citeauthoryear{Abbeel and Ng}{Abbeel and Ng}{2004}]%
        {apprenticeship}
\bibfield{author}{\bibinfo{person}{Pieter Abbeel} {and}
  \bibinfo{person}{Andrew~Y. Ng}.} \bibinfo{year}{2004}\natexlab{}.
\newblock \showarticletitle{Apprenticeship Learning via Inverse Reinforcement
  Learning}. In \bibinfo{booktitle}{\emph{Proceedings of the 21st International
  Conference of Machine Learning}}. \bibinfo{pages}{1}.
\newblock


\bibitem[\protect\citeauthoryear{Bain and Sammut}{Bain and Sammut}{1995}]%
        {BC1}
\bibfield{author}{\bibinfo{person}{Michael Bain} {and} \bibinfo{person}{Claude
  Sammut}.} \bibinfo{year}{1995}\natexlab{}.
\newblock \showarticletitle{A Framework for Behavioural Cloning}. In
  \bibinfo{booktitle}{\emph{Machine Intelligence 15}}.
  \bibinfo{pages}{103--129}.
\newblock


\bibitem[\protect\citeauthoryear{Bellemare, Naddaf, Veness, and
  Bowling}{Bellemare et~al\mbox{.}}{2013}]%
        {ALE}
\bibfield{author}{\bibinfo{person}{Marc~G. Bellemare}, \bibinfo{person}{Yavar
  Naddaf}, \bibinfo{person}{Joel Veness}, {and} \bibinfo{person}{Michael
  Bowling}.} \bibinfo{year}{2013}\natexlab{}.
\newblock \showarticletitle{The Arcade Learning Environment: An Evaluation
  Platform for General Agents}.
\newblock \bibinfo{journal}{\emph{Journal of Artificial Intelligence Research}}
   \bibinfo{volume}{47} (\bibinfo{year}{2013}), \bibinfo{pages}{253--279}.
\newblock


\bibitem[\protect\citeauthoryear{Chen, Li, Li, Jiang, Qi, and Song}{Chen
  et~al\mbox{.}}{2019}]%
        {recommendation}
\bibfield{author}{\bibinfo{person}{Xinshi Chen}, \bibinfo{person}{Shuang Li},
  \bibinfo{person}{Hui Li}, \bibinfo{person}{Shaohua Jiang},
  \bibinfo{person}{Yuan Qi}, {and} \bibinfo{person}{Le Song}.}
  \bibinfo{year}{2019}\natexlab{}.
\newblock \showarticletitle{Generative Adversarial User Model for Reinforcement
  Learning Based Recommendation System}. In
  \bibinfo{booktitle}{\emph{Proceedings of the 36th International Conference on
  Machine Learning}}. \bibinfo{pages}{1052--1061}.
\newblock


\bibitem[\protect\citeauthoryear{Codevilla, Santana, L{\'{o}}pez, and
  Gaidon}{Codevilla et~al\mbox{.}}{2019}]%
        {BC2}
\bibfield{author}{\bibinfo{person}{Felipe Codevilla}, \bibinfo{person}{Eder
  Santana}, \bibinfo{person}{Antonio~M. L{\'{o}}pez}, {and}
  \bibinfo{person}{Adrien Gaidon}.} \bibinfo{year}{2019}\natexlab{}.
\newblock \showarticletitle{Exploring the Limitations of Behavior Cloning for
  Autonomous Driving}. In \bibinfo{booktitle}{\emph{Proceedings of the
  International Conference on Computer Vision}}. \bibinfo{pages}{9328--9337}.
\newblock


\bibitem[\protect\citeauthoryear{Garg, Chakraborty, Cundy, Song, and
  Ermon}{Garg et~al\mbox{.}}{2021}]%
        {iq_learn}
\bibfield{author}{\bibinfo{person}{Divyansh Garg}, \bibinfo{person}{Shuvam
  Chakraborty}, \bibinfo{person}{Chris Cundy}, \bibinfo{person}{Jiaming Song},
  {and} \bibinfo{person}{Stefano Ermon}.} \bibinfo{year}{2021}\natexlab{}.
\newblock \showarticletitle{{IQ-Learn}: Inverse Soft-{Q} Learning for
  Imitation}. In \bibinfo{booktitle}{\emph{Advances in Neural Information
  Processing Systems 34}}. \bibinfo{pages}{4028--4039}.
\newblock


\bibitem[\protect\citeauthoryear{Goodfellow, Pouget{-}Abadie, Mirza, Xu,
  Warde{-}Farley, Ozair, Courville, and Bengio}{Goodfellow
  et~al\mbox{.}}{2014}]%
        {gan}
\bibfield{author}{\bibinfo{person}{Ian~J. Goodfellow}, \bibinfo{person}{Jean
  Pouget{-}Abadie}, \bibinfo{person}{Mehdi Mirza}, \bibinfo{person}{Bing Xu},
  \bibinfo{person}{David Warde{-}Farley}, \bibinfo{person}{Sherjil Ozair},
  \bibinfo{person}{Aaron~C. Courville}, {and} \bibinfo{person}{Yoshua Bengio}.}
  \bibinfo{year}{2014}\natexlab{}.
\newblock \showarticletitle{Generative Adversarial Networks}. In
  \bibinfo{booktitle}{\emph{Advances in Neural Information Processing Systems
  27}}.
\newblock


\bibitem[\protect\citeauthoryear{Haarnoja, Zhou, Abbeel, and Levine}{Haarnoja
  et~al\mbox{.}}{2018}]%
        {SAC}
\bibfield{author}{\bibinfo{person}{Tuomas Haarnoja}, \bibinfo{person}{Aurick
  Zhou}, \bibinfo{person}{Pieter Abbeel}, {and} \bibinfo{person}{Sergey
  Levine}.} \bibinfo{year}{2018}\natexlab{}.
\newblock \showarticletitle{Soft Actor-Critic: Off-Policy Maximum Entropy Deep
  Reinforcement Learning with a Stochastic Actor}. In
  \bibinfo{booktitle}{\emph{Proceedings of the 35th International Conference on
  Machine Learning}}. \bibinfo{pages}{1856--1865}.
\newblock


\bibitem[\protect\citeauthoryear{Ho and Ermon}{Ho and Ermon}{2016}]%
        {gail}
\bibfield{author}{\bibinfo{person}{Jonathan Ho} {and} \bibinfo{person}{Stefano
  Ermon}.} \bibinfo{year}{2016}\natexlab{}.
\newblock \showarticletitle{Generative Adversarial Imitation Learning}. In
  \bibinfo{booktitle}{\emph{Advances in Neural Information Processing Systems
  29}}. \bibinfo{pages}{4565--4573}.
\newblock


\bibitem[\protect\citeauthoryear{Ho, Gupta, and Ermon}{Ho
  et~al\mbox{.}}{2016}]%
        {fem}
\bibfield{author}{\bibinfo{person}{Jonathan Ho}, \bibinfo{person}{Jayesh~K.
  Gupta}, {and} \bibinfo{person}{Stefano Ermon}.}
  \bibinfo{year}{2016}\natexlab{}.
\newblock \showarticletitle{Model-Free Imitation Learning with Policy
  Optimization}. In \bibinfo{booktitle}{\emph{Proceedings of the 33rd
  International Conference on Machine Learning}}. \bibinfo{pages}{2760--2769}.
\newblock


\bibitem[\protect\citeauthoryear{Hoque, Balakrishna, Novoseller, Wilcox, Brown,
  and Goldberg}{Hoque et~al\mbox{.}}{2021a}]%
        {thriftydagger}
\bibfield{author}{\bibinfo{person}{Ryan Hoque}, \bibinfo{person}{Ashwin
  Balakrishna}, \bibinfo{person}{Ellen~R. Novoseller}, \bibinfo{person}{Albert
  Wilcox}, \bibinfo{person}{Daniel~S. Brown}, {and} \bibinfo{person}{Ken
  Goldberg}.} \bibinfo{year}{2021}\natexlab{a}.
\newblock \showarticletitle{ThriftyDAgger: Budget-Aware Novelty and Risk Gating
  for Interactive Imitation Learning}. In \bibinfo{booktitle}{\emph{Proceedings
  of the 5th Conference on Robot Learning}}. \bibinfo{pages}{598--608}.
\newblock


\bibitem[\protect\citeauthoryear{Hoque, Balakrishna, Putterman, Luo, Brown,
  Seita, Thananjeyan, Novoseller, and Goldberg}{Hoque et~al\mbox{.}}{2021b}]%
        {lazydagger}
\bibfield{author}{\bibinfo{person}{Ryan Hoque}, \bibinfo{person}{Ashwin
  Balakrishna}, \bibinfo{person}{Carl Putterman}, \bibinfo{person}{Michael
  Luo}, \bibinfo{person}{Daniel~S. Brown}, \bibinfo{person}{Daniel Seita},
  \bibinfo{person}{Brijen Thananjeyan}, \bibinfo{person}{Ellen~R. Novoseller},
  {and} \bibinfo{person}{Ken Goldberg}.} \bibinfo{year}{2021}\natexlab{b}.
\newblock \showarticletitle{{LazyDAgger}: Reducing Context Switching in
  Interactive Imitation Learning}. In \bibinfo{booktitle}{\emph{Proceedings of
  the 17th International Conference on Automation Science and Engineering}}.
  \bibinfo{pages}{502--509}.
\newblock


\bibitem[\protect\citeauthoryear{Janner, Fu, Zhang, and Levine}{Janner
  et~al\mbox{.}}{2019}]%
        {mbpo}
\bibfield{author}{\bibinfo{person}{Michael Janner}, \bibinfo{person}{Justin
  Fu}, \bibinfo{person}{Marvin Zhang}, {and} \bibinfo{person}{Sergey Levine}.}
  \bibinfo{year}{2019}\natexlab{}.
\newblock \showarticletitle{When to Trust Your Model: Model-Based Policy
  Optimization}. In \bibinfo{booktitle}{\emph{Advances in Neural Information
  Processing Systems 32}}. \bibinfo{pages}{12498--12509}.
\newblock


\bibitem[\protect\citeauthoryear{Jin, Liu, Jiang, and Yu}{Jin
  et~al\mbox{.}}{2022}]%
        {hve}
\bibfield{author}{\bibinfo{person}{Xue{-}Kun Jin}, \bibinfo{person}{Xu{-}Hui
  Liu}, \bibinfo{person}{Shengyi Jiang}, {and} \bibinfo{person}{Yang Yu}.}
  \bibinfo{year}{2022}\natexlab{}.
\newblock \showarticletitle{Hybrid Value Estimation for Off-policy Evaluation
  and Offline Reinforcement Learning}.
\newblock \bibinfo{journal}{\emph{CoRR}}  \bibinfo{volume}{abs/2206.02000}
  (\bibinfo{year}{2022}).
\newblock


\bibitem[\protect\citeauthoryear{Kakade and Langford}{Kakade and
  Langford}{2002}]%
        {kakade}
\bibfield{author}{\bibinfo{person}{Sham~M. Kakade} {and} \bibinfo{person}{John
  Langford}.} \bibinfo{year}{2002}\natexlab{}.
\newblock \showarticletitle{Approximately Optimal Approximate Reinforcement
  Learning}. In \bibinfo{booktitle}{\emph{Proceedings of the 19th International
  Conference on Machine Learning}}. \bibinfo{pages}{267--274}.
\newblock


\bibitem[\protect\citeauthoryear{Kelly, Sidrane, Driggs{-}Campbell, and
  Kochenderfer}{Kelly et~al\mbox{.}}{2018}]%
        {HG-dagger}
\bibfield{author}{\bibinfo{person}{Michael Kelly}, \bibinfo{person}{Chelsea
  Sidrane}, \bibinfo{person}{Katherine~Rose Driggs{-}Campbell}, {and}
  \bibinfo{person}{Mykel~J. Kochenderfer}.} \bibinfo{year}{2018}\natexlab{}.
\newblock \showarticletitle{HG-DAgger: Interactive Imitation Learning with
  Human Experts}. In \bibinfo{booktitle}{\emph{Proceedings of the 35th
  International Conference on Robotics and Automation}}.
  \bibinfo{pages}{8077--8083}.
\newblock


\bibitem[\protect\citeauthoryear{Kostrikov, Agrawal, Dwibedi, Levine, and
  Tompson}{Kostrikov et~al\mbox{.}}{2019a}]%
        {dac}
\bibfield{author}{\bibinfo{person}{Ilya Kostrikov},
  \bibinfo{person}{Kumar~Krishna Agrawal}, \bibinfo{person}{Debidatta Dwibedi},
  \bibinfo{person}{Sergey Levine}, {and} \bibinfo{person}{Jonathan Tompson}.}
  \bibinfo{year}{2019}\natexlab{a}.
\newblock \showarticletitle{Discriminator-Actor-Critic: Addressing Sample
  Inefficiency and Reward Bias in Adversarial Imitation Learning}. In
  \bibinfo{booktitle}{\emph{Proceedings of the 7th International Conference on
  Learning Representations}}.
\newblock


\bibitem[\protect\citeauthoryear{Kostrikov, Nachum, and Tompson}{Kostrikov
  et~al\mbox{.}}{2019b}]%
        {ValueDICE}
\bibfield{author}{\bibinfo{person}{Ilya Kostrikov}, \bibinfo{person}{Ofir
  Nachum}, {and} \bibinfo{person}{Jonathan Tompson}.}
  \bibinfo{year}{2019}\natexlab{b}.
\newblock \showarticletitle{Imitation Learning via Off-Policy Distribution
  Matching}. In \bibinfo{booktitle}{\emph{Proceedings of the 8th International
  Conference on Learning Representations}}.
\newblock


\bibitem[\protect\citeauthoryear{Kumar, Zhou, Tucker, and Levine}{Kumar
  et~al\mbox{.}}{2020}]%
        {cql}
\bibfield{author}{\bibinfo{person}{Aviral Kumar}, \bibinfo{person}{Aurick
  Zhou}, \bibinfo{person}{George Tucker}, {and} \bibinfo{person}{Sergey
  Levine}.} \bibinfo{year}{2020}\natexlab{}.
\newblock \showarticletitle{Conservative Q-Learning for Offline Reinforcement
  Learning}. In \bibinfo{booktitle}{\emph{Advances in Neural Information
  Processing Systems 33}}.
\newblock


\bibitem[\protect\citeauthoryear{Lee, Smith, and Abbeel}{Lee
  et~al\mbox{.}}{2021}]%
        {PEBBLE}
\bibfield{author}{\bibinfo{person}{Kimin Lee}, \bibinfo{person}{Laura~M.
  Smith}, {and} \bibinfo{person}{Pieter Abbeel}.}
  \bibinfo{year}{2021}\natexlab{}.
\newblock \showarticletitle{{PEBBLE:} Feedback-Efficient Interactive
  Reinforcement Learning via Relabeling Experience and Unsupervised
  Pre-training}.
\newblock \bibinfo{journal}{\emph{CoRR}}  \bibinfo{volume}{abs/2106.05091}
  (\bibinfo{year}{2021}).
\newblock


\bibitem[\protect\citeauthoryear{Li, Peng, Feng, Zhang, Xue, and Zhou}{Li
  et~al\mbox{.}}{2022b}]%
        {metadrive}
\bibfield{author}{\bibinfo{person}{Quanyi Li}, \bibinfo{person}{Zhenghao Peng},
  \bibinfo{person}{Lan Feng}, \bibinfo{person}{Qihang Zhang},
  \bibinfo{person}{Zhenghai Xue}, {and} \bibinfo{person}{Bolei Zhou}.}
  \bibinfo{year}{2022}\natexlab{b}.
\newblock \showarticletitle{Metadrive: Composing Diverse Driving Scenarios for
  Generalizable Reinforcement Learning}.
\newblock \bibinfo{journal}{\emph{IEEE Transactions on Pattern Analysis and
  Machine Intelligence}} \bibinfo{volume}{45}, \bibinfo{number}{3}
  (\bibinfo{year}{2022}), \bibinfo{pages}{3461--3475}.
\newblock


\bibitem[\protect\citeauthoryear{Li, Peng, and Zhou}{Li et~al\mbox{.}}{2022a}]%
        {HACO}
\bibfield{author}{\bibinfo{person}{Quanyi Li}, \bibinfo{person}{Zhenghao Peng},
  {and} \bibinfo{person}{Bolei Zhou}.} \bibinfo{year}{2022}\natexlab{a}.
\newblock \showarticletitle{Efficient Learning of Safe Driving Policy via
  Human-AI Copilot Optimization}. In \bibinfo{booktitle}{\emph{Proceedings of
  the 10th International Conference on Learning Representations}}.
\newblock


\bibitem[\protect\citeauthoryear{Maclin, Shavlik, Walker, and Torrey}{Maclin
  et~al\mbox{.}}{2005}]%
        {knowledge}
\bibfield{author}{\bibinfo{person}{Richard Maclin}, \bibinfo{person}{Jude
  Shavlik}, \bibinfo{person}{Trevor Walker}, {and} \bibinfo{person}{Lisa
  Torrey}.} \bibinfo{year}{2005}\natexlab{}.
\newblock \showarticletitle{Knowledge-based support-vector regression for
  reinforcement learning}.
\newblock \bibinfo{journal}{\emph{Reasoning, Representation, and Learning in
  Computer Games}} (\bibinfo{year}{2005}).
\newblock


\bibitem[\protect\citeauthoryear{Mangasarian, Shavlik, and Wild}{Mangasarian
  et~al\mbox{.}}{2004}]%
        {kernel}
\bibfield{author}{\bibinfo{person}{Olvi~L. Mangasarian},
  \bibinfo{person}{Jude~W. Shavlik}, {and} \bibinfo{person}{Edward~W. Wild}.}
  \bibinfo{year}{2004}\natexlab{}.
\newblock \showarticletitle{Knowledge-Based Kernel Approximation}.
\newblock \bibinfo{journal}{\emph{Journal of Machine Learning Research}}
  \bibinfo{volume}{5} (\bibinfo{year}{2004}), \bibinfo{pages}{1127--1141}.
\newblock


\bibitem[\protect\citeauthoryear{Menda, Driggs{-}Campbell, and
  Kochenderfer}{Menda et~al\mbox{.}}{2019}]%
        {ensembledagger}
\bibfield{author}{\bibinfo{person}{Kunal Menda},
  \bibinfo{person}{Katherine~Rose Driggs{-}Campbell}, {and}
  \bibinfo{person}{Mykel~J. Kochenderfer}.} \bibinfo{year}{2019}\natexlab{}.
\newblock \showarticletitle{EnsembleDAgger: {A} Bayesian Approach to Safe
  Imitation Learning}. In \bibinfo{booktitle}{\emph{Proceedings of
  International Conference on Intelligent Robots and Systems}}.
  \bibinfo{pages}{5041--5048}.
\newblock


\bibitem[\protect\citeauthoryear{Mnih, Kavukcuoglu, Silver, Rusu, Veness,
  Bellemare, Graves, Riedmiller, Fidjeland, Ostrovski, Petersen, Beattie,
  Sadik, Antonoglou, King, Kumaran, Wierstra, Legg, and Hassabis}{Mnih
  et~al\mbox{.}}{2015}]%
        {atari_dqn}
\bibfield{author}{\bibinfo{person}{Volodymyr Mnih}, \bibinfo{person}{Koray
  Kavukcuoglu}, \bibinfo{person}{David Silver}, \bibinfo{person}{Andrei~A.
  Rusu}, \bibinfo{person}{Joel Veness}, \bibinfo{person}{Marc~G. Bellemare},
  \bibinfo{person}{Alex Graves}, \bibinfo{person}{Martin~A. Riedmiller},
  \bibinfo{person}{Andreas Fidjeland}, \bibinfo{person}{Georg Ostrovski},
  \bibinfo{person}{Stig Petersen}, \bibinfo{person}{Charles Beattie},
  \bibinfo{person}{Amir Sadik}, \bibinfo{person}{Ioannis Antonoglou},
  \bibinfo{person}{Helen King}, \bibinfo{person}{Dharshan Kumaran},
  \bibinfo{person}{Daan Wierstra}, \bibinfo{person}{Shane Legg}, {and}
  \bibinfo{person}{Demis Hassabis}.} \bibinfo{year}{2015}\natexlab{}.
\newblock \showarticletitle{Human-level Control through Deep Reinforcement
  Learning}.
\newblock \bibinfo{journal}{\emph{Nature}} \bibinfo{volume}{518},
  \bibinfo{number}{7540} (\bibinfo{year}{2015}), \bibinfo{pages}{529--533}.
\newblock


\bibitem[\protect\citeauthoryear{Najar, Sigaud, and Chetouani}{Najar
  et~al\mbox{.}}{2016}]%
        {human+}
\bibfield{author}{\bibinfo{person}{Anis Najar}, \bibinfo{person}{Olivier
  Sigaud}, {and} \bibinfo{person}{Mohamed Chetouani}.}
  \bibinfo{year}{2016}\natexlab{}.
\newblock \showarticletitle{Training a robot with evaluative feedback and
  unlabeled guidance signals}. In \bibinfo{booktitle}{\emph{Proceedings of the
  25th International Symposium on Robot and Human Interactive Communication}}.
  \bibinfo{pages}{261--266}.
\newblock


\bibitem[\protect\citeauthoryear{Ng and Russell}{Ng and Russell}{2000}]%
        {irl}
\bibfield{author}{\bibinfo{person}{Andrew~Y. Ng} {and} \bibinfo{person}{Stuart
  Russell}.} \bibinfo{year}{2000}\natexlab{}.
\newblock \showarticletitle{Algorithms for Inverse Reinforcement Learning}. In
  \bibinfo{booktitle}{\emph{Proceedings of the 17th International Conference on
  Machine Learning}}. \bibinfo{pages}{663--670}.
\newblock


\bibitem[\protect\citeauthoryear{Peng, Li, Liu, and Zhou}{Peng
  et~al\mbox{.}}{2021}]%
        {EGPO}
\bibfield{author}{\bibinfo{person}{Zhenghao Peng}, \bibinfo{person}{Quanyi Li},
  \bibinfo{person}{Chunxiao Liu}, {and} \bibinfo{person}{Bolei Zhou}.}
  \bibinfo{year}{2021}\natexlab{}.
\newblock \showarticletitle{Safe Driving via Expert Guided Policy
  Optimization}. In \bibinfo{booktitle}{\emph{Proceedings of 5th Conference on
  Robot Learning}}. \bibinfo{pages}{1554--1563}.
\newblock


\bibitem[\protect\citeauthoryear{Pollard}{Pollard}{2000}]%
        {pinsker}
\bibfield{author}{\bibinfo{person}{David Pollard}.}
  \bibinfo{year}{2000}\natexlab{}.
\newblock \bibinfo{booktitle}{\emph{Asymptopia: An Exposition of Statistical
  Asymptotic Theory}}.
\newblock


\bibitem[\protect\citeauthoryear{Qian, Liu, Su, Zhou, and Yu}{Qian
  et~al\mbox{.}}{2022}]%
        {qian2022}
\bibfield{author}{\bibinfo{person}{Hong Qian}, \bibinfo{person}{Xu{-}Hui Liu},
  \bibinfo{person}{Chen{-}Xi Su}, \bibinfo{person}{Aimin Zhou}, {and}
  \bibinfo{person}{Yang Yu}.} \bibinfo{year}{2022}\natexlab{}.
\newblock \showarticletitle{The Teaching Dimension of Regularized Kernel
  Learners}. In \bibinfo{booktitle}{\emph{Proceedings of the 39th International
  Conference on Machine Learning}}. \bibinfo{pages}{17984--18002}.
\newblock


\bibitem[\protect\citeauthoryear{Rajaraman, Han, Yang, Liu, Jiao, and
  Ramchandran}{Rajaraman et~al\mbox{.}}{2021}]%
        {value_interaction}
\bibfield{author}{\bibinfo{person}{Nived Rajaraman}, \bibinfo{person}{Yanjun
  Han}, \bibinfo{person}{Lin Yang}, \bibinfo{person}{Jingbo Liu},
  \bibinfo{person}{Jiantao Jiao}, {and} \bibinfo{person}{Kannan Ramchandran}.}
  \bibinfo{year}{2021}\natexlab{}.
\newblock \showarticletitle{On the Value of Interaction and Function
  Approximation in Imitation Learning}. In \bibinfo{booktitle}{\emph{Advances
  in Neural Information Processing Systems 34}}. \bibinfo{pages}{1325--1336}.
\newblock


\bibitem[\protect\citeauthoryear{Rajaraman, Yang, Jiao, and
  Ramchandran}{Rajaraman et~al\mbox{.}}{2020}]%
        {limits}
\bibfield{author}{\bibinfo{person}{Nived Rajaraman}, \bibinfo{person}{Lin~F.
  Yang}, \bibinfo{person}{Jiantao Jiao}, {and} \bibinfo{person}{Kannan
  Ramchandran}.} \bibinfo{year}{2020}\natexlab{}.
\newblock \showarticletitle{Toward the Fundamental Limits of Imitation
  Learning}. In \bibinfo{booktitle}{\emph{Advances Neural Information
  Processing Systems 33}}.
\newblock


\bibitem[\protect\citeauthoryear{Ross and Bagnell}{Ross and Bagnell}{2010}]%
        {bc}
\bibfield{author}{\bibinfo{person}{St{\'{e}}phane Ross} {and}
  \bibinfo{person}{Drew Bagnell}.} \bibinfo{year}{2010}\natexlab{}.
\newblock \showarticletitle{Efficient Reductions for Imitation Learning}. In
  \bibinfo{booktitle}{\emph{Proceedings of the 13th International Conference on
  Artificial Intelligence and Statistics}}. \bibinfo{pages}{661--668}.
\newblock


\bibitem[\protect\citeauthoryear{Ross and Bagnell}{Ross and Bagnell}{2014}]%
        {aggrevate}
\bibfield{author}{\bibinfo{person}{St{\'{e}}phane Ross} {and}
  \bibinfo{person}{J.~Andrew Bagnell}.} \bibinfo{year}{2014}\natexlab{}.
\newblock \showarticletitle{Reinforcement and Imitation Learning via
  Interactive No-Regret Learning}.
\newblock \bibinfo{journal}{\emph{CoRR}}  \bibinfo{volume}{abs/1406.5979}
  (\bibinfo{year}{2014}).
\newblock


\bibitem[\protect\citeauthoryear{Ross, Gordon, and Bagnell}{Ross
  et~al\mbox{.}}{2011}]%
        {dagger}
\bibfield{author}{\bibinfo{person}{St{\'{e}}phane Ross},
  \bibinfo{person}{Geoffrey~J. Gordon}, {and} \bibinfo{person}{Drew Bagnell}.}
  \bibinfo{year}{2011}\natexlab{}.
\newblock \showarticletitle{A Reduction of Imitation Learning and Structured
  Prediction to No-Regret Online Learning}. In
  \bibinfo{booktitle}{\emph{Proceedings of the 14th International Conference on
  Artificial Intelligence and Statistics}}, Vol.~\bibinfo{volume}{15}.
  \bibinfo{pages}{627--635}.
\newblock


\bibitem[\protect\citeauthoryear{Shalev{-}Shwartz}{Shalev{-}Shwartz}{2012}]%
        {book}
\bibfield{author}{\bibinfo{person}{Shai Shalev{-}Shwartz}.}
  \bibinfo{year}{2012}\natexlab{}.
\newblock \showarticletitle{Online Learning and Online Convex Optimization}.
\newblock \bibinfo{journal}{\emph{Foundations and Trends in Machine Learning}}
  \bibinfo{volume}{4}, \bibinfo{number}{2} (\bibinfo{year}{2012}),
  \bibinfo{pages}{107--194}.
\newblock


\bibitem[\protect\citeauthoryear{Shi, Yu, Da, Chen, and Zeng}{Shi
  et~al\mbox{.}}{2019}]%
        {virtual_taobao}
\bibfield{author}{\bibinfo{person}{Jing-Cheng Shi}, \bibinfo{person}{Yang Yu},
  \bibinfo{person}{Qing Da}, \bibinfo{person}{Shi-Yong Chen}, {and}
  \bibinfo{person}{An-Xiang Zeng}.} \bibinfo{year}{2019}\natexlab{}.
\newblock \showarticletitle{Virtual-Taobao: Virtualizing Real-world Online
  Retail Environment for Reinforcement Learning}. In
  \bibinfo{booktitle}{\emph{Proceedings of the AAAI Conference on Artificial
  Intelligence 33}}. \bibinfo{pages}{4902--4909}.
\newblock


\bibitem[\protect\citeauthoryear{Silver, Huang, Maddison, Guez, Sifre, Van
  Den~Driessche, Schrittwieser, Antonoglou, Panneershelvam, Lanctot,
  et~al\mbox{.}}{Silver et~al\mbox{.}}{2016}]%
        {alphago}
\bibfield{author}{\bibinfo{person}{David Silver}, \bibinfo{person}{Aja Huang},
  \bibinfo{person}{Chris~J Maddison}, \bibinfo{person}{Arthur Guez},
  \bibinfo{person}{Laurent Sifre}, \bibinfo{person}{George Van Den~Driessche},
  \bibinfo{person}{Julian Schrittwieser}, \bibinfo{person}{Ioannis Antonoglou},
  \bibinfo{person}{Veda Panneershelvam}, \bibinfo{person}{Marc Lanctot},
  {et~al\mbox{.}}} \bibinfo{year}{2016}\natexlab{}.
\newblock \showarticletitle{Mastering the Game of Go with Deep Neural Networks
  and Tree Search}.
\newblock \bibinfo{journal}{\emph{Nature}} \bibinfo{volume}{529},
  \bibinfo{number}{7587} (\bibinfo{year}{2016}), \bibinfo{pages}{484--489}.
\newblock


\bibitem[\protect\citeauthoryear{Spencer, Choudhury, Barnes, Schmittle, Chiang,
  Ramadge, and Srinivasa}{Spencer et~al\mbox{.}}{2020}]%
        {EIL}
\bibfield{author}{\bibinfo{person}{Jonathan~C. Spencer},
  \bibinfo{person}{Sanjiban Choudhury}, \bibinfo{person}{Matt Barnes},
  \bibinfo{person}{Matthew Schmittle}, \bibinfo{person}{Mung Chiang},
  \bibinfo{person}{Peter~J. Ramadge}, {and} \bibinfo{person}{Siddhartha~S.
  Srinivasa}.} \bibinfo{year}{2020}\natexlab{}.
\newblock \showarticletitle{Learning from Interventions: Human-robot
  Interaction as Both Explicit and Implicit Feedback}. In
  \bibinfo{booktitle}{\emph{Robotics: Science and Systems XVI}}.
\newblock


\bibitem[\protect\citeauthoryear{Sun, Venkatraman, Gordon, Boots, and
  Bagnell}{Sun et~al\mbox{.}}{2017}]%
        {aggrevated}
\bibfield{author}{\bibinfo{person}{Wen Sun}, \bibinfo{person}{Arun
  Venkatraman}, \bibinfo{person}{Geoffrey~J. Gordon}, \bibinfo{person}{Byron
  Boots}, {and} \bibinfo{person}{J.~Andrew Bagnell}.}
  \bibinfo{year}{2017}\natexlab{}.
\newblock \showarticletitle{Deeply AggreVaTeD: Differentiable Imitation
  Learning for Sequential Prediction}. In \bibinfo{booktitle}{\emph{Proceedings
  of the 34th International Conference on Machine Learning}}.
  \bibinfo{pages}{3309--3318}.
\newblock


\bibitem[\protect\citeauthoryear{Xu, Li, and Yu}{Xu et~al\mbox{.}}{2020}]%
        {xu2020}
\bibfield{author}{\bibinfo{person}{Tian Xu}, \bibinfo{person}{Ziniu Li}, {and}
  \bibinfo{person}{Yang Yu}.} \bibinfo{year}{2020}\natexlab{}.
\newblock \showarticletitle{Error Bounds of Imitating Policies and
  Environments}. In \bibinfo{booktitle}{\emph{Advances in Neural Information
  Processing Systems 33}}.
\newblock


\bibitem[\protect\citeauthoryear{Xu, Li, and Yu}{Xu et~al\mbox{.}}{2021}]%
        {ail_finite_sample}
\bibfield{author}{\bibinfo{person}{Tian Xu}, \bibinfo{person}{Ziniu Li}, {and}
  \bibinfo{person}{Yang Yu}.} \bibinfo{year}{2021}\natexlab{}.
\newblock \showarticletitle{Nearly Minimax Optimal Adversarial Imitation
  Learning with Known and Unknown Transitions}.
\newblock \bibinfo{journal}{\emph{CoRR}}  \bibinfo{volume}{abs/2106.10424}
  (\bibinfo{year}{2021}).
\newblock


\bibitem[\protect\citeauthoryear{Yu, Kumar, Rafailov, Rajeswaran, Levine, and
  Finn}{Yu et~al\mbox{.}}{2021}]%
        {combo}
\bibfield{author}{\bibinfo{person}{Tianhe Yu}, \bibinfo{person}{Aviral Kumar},
  \bibinfo{person}{Rafael Rafailov}, \bibinfo{person}{Aravind Rajeswaran},
  \bibinfo{person}{Sergey Levine}, {and} \bibinfo{person}{Chelsea Finn}.}
  \bibinfo{year}{2021}\natexlab{}.
\newblock \showarticletitle{{COMBO:} Conservative Offline Model-Based Policy
  Optimization}. In \bibinfo{booktitle}{\emph{Advances in Neural Information
  Processing Systems 34}}. \bibinfo{pages}{28954--28967}.
\newblock


\bibitem[\protect\citeauthoryear{Zhang and Cho}{Zhang and Cho}{2017}]%
        {safetydagger}
\bibfield{author}{\bibinfo{person}{Jiakai Zhang} {and}
  \bibinfo{person}{Kyunghyun Cho}.} \bibinfo{year}{2017}\natexlab{}.
\newblock \showarticletitle{Query-efficient Imitation Learning for End-to-End
  Autonomous Driving}. In \bibinfo{booktitle}{\emph{Proceedings of the AAAI
  Conference on Artificial Intelligence 31}}.
\newblock


\bibitem[\protect\citeauthoryear{Zhang, Luo, and Yu}{Zhang
  et~al\mbox{.}}{2022}]%
        {zhang2022}
\bibfield{author}{\bibinfo{person}{Yi{-}Feng Zhang},
  \bibinfo{person}{Fan{-}Ming Luo}, {and} \bibinfo{person}{Yang Yu}.}
  \bibinfo{year}{2022}\natexlab{}.
\newblock \showarticletitle{Improve Generated Adversarial Imitation Learning
  with Reward Variance regularization}.
\newblock \bibinfo{journal}{\emph{Machine Learning}} \bibinfo{volume}{111},
  \bibinfo{number}{3} (\bibinfo{year}{2022}), \bibinfo{pages}{977--995}.
\newblock


\bibitem[\protect\citeauthoryear{Zhu, Singla, Zilles, and Rafferty}{Zhu
  et~al\mbox{.}}{2018}]%
        {mt}
\bibfield{author}{\bibinfo{person}{Xiaojin Zhu}, \bibinfo{person}{Adish
  Singla}, \bibinfo{person}{Sandra Zilles}, {and} \bibinfo{person}{Anna~N.
  Rafferty}.} \bibinfo{year}{2018}\natexlab{}.
\newblock \showarticletitle{An Overview of Machine Teaching}.
\newblock \bibinfo{journal}{\emph{CoRR}}  \bibinfo{volume}{abs/1801.05927}
  (\bibinfo{year}{2018}).
\newblock


\end{thebibliography}


\appendix
\onecolumn
\section{Proofs of Section~\ref{sec_analysis} and Section~\ref{sec_practical}}\label{sec_proof}

For brevity, we denote $\pi(\cdot|s)$ as $\pi(s)$ in the appendix.
\subsection{Proof of Theorem~\ref{thm_infinite}}
\begin{lemma}[Safety]\label{lem_safty}
The teaching policy $\pi'$ satisfies $J(\pi')\geq J(\pi^*)-pH$.
\end{lemma}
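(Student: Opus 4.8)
The plan is to prove the equivalent statement $J(\pi^*)-J(\pi')\le pH$ by a direct application of the Policy Difference Lemma (Lemma~\ref{lem_policy}) followed by a pointwise bound coming from the definition of the teaching policy. The key setup choice is the \emph{orientation} of the lemma: I would instantiate it with $\pi_1=\pi'$ and $\pi_2=\pi^*$, so that the outer expectation is taken under the teaching policy's own occupancy measure $d_h^{\pi'}$ and the inner $Q$-function is the expert's $Q_h^{\pi^*}$. Concretely,
\[
J(\pi^*)-J(\pi')=\sum_{h=1}^H\E_{s\sim d_h^{\pi'}}\left[Q_h^{\pi^*}(s,\pi^*)-Q_h^{\pi^*}(s,\pi')\right].
\]
This orientation is exactly the one on which the switching rule has a clean interpretation, since the rule defining $\pi'$ in Eq.~(\ref{eq_pi_prime}) (with threshold $p$) is applied state-by-state along trajectories drawn from $\pi'$.

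The core of the argument is then a uniform pointwise bound on the per-state advantage $Q_h^{\pi^*}(s,\pi^*)-Q_h^{\pi^*}(s,\pi')$, obtained by case analysis on the two branches of $\pi'$. On states where $Q_h^{\pi^*}(s,\pi^*)-Q_h^{\pi^*}(s,\pi)>p$, the teaching policy copies the expert, $\pi'(\cdot|s)=\pi^*(\cdot|s)$, so $Q_h^{\pi^*}(s,\pi')=Q_h^{\pi^*}(s,\pi^*)$ and the per-state term vanishes. On the complementary states the teaching policy copies the learner, $\pi'(\cdot|s)=\pi(\cdot|s)$, so the per-state term equals $Q_h^{\pi^*}(s,\pi^*)-Q_h^{\pi^*}(s,\pi)$, which is at most $p$ precisely by the branch condition. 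Hence the integrand is bounded above by $p$ uniformly in $s$ and $h$.

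Substituting this uniform bound into the sum gives
\[
J(\pi^*)-J(\pi')\le\sum_{h=1}^H p=pH,
\]
which rearranges to $J(\pi')\ge J(\pi^*)-pH$, as claimed. I do not expect a genuinely hard step here; the proof is essentially a one-line computation once the lemma is oriented correctly. The only point requiring care — and the thing I would flag as the main pitfall rather than a deep obstacle — is resisting the more obvious orientation ($\pi_1=\pi^*$, $\pi_2=\pi'$), which would push the whole analysis onto the \emph{expert's} occupancy $d_h^{\pi^*}$, where the per-state switching rule provides no pointwise control and the telescoping cancellation on the expert branch is lost.
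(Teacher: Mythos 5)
Your proof is correct and is essentially the paper's own argument: the paper establishes the identity $J(\pi^*)-J(\pi')=\sum_{h=1}^H\E_{s\sim d_h^{\pi'}}\bigl[Q_h^{\pi^*}(s,\pi^*)-Q_h^{\pi^*}(s,\pi')\bigr]$ via a DAgger-style telescoping over hybrid policies (which is just an inline re-derivation of Lemma~\ref{lem_policy} in exactly the orientation you chose, occupancy under $\pi'$, $Q$-function of $\pi^*$), and then bounds each summand by $p$ through the same two-branch case analysis on the switching rule. The only cosmetic difference is that you invoke Lemma~\ref{lem_policy} directly where the paper re-proves that instance from scratch, and you spell out the case analysis that the paper leaves implicit in its final inequality.
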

\begin{proof}
We follow a similar proof to \citep{dagger}. Given our policy $\pi'$, consider the policy $\pi'_{0:t}$, which executes $\pi'$ in the first t-steps and then executes the expert policy $\pi^*$. Then
\begin{align*}
    J(\pi')&=J(\pi^*)+\sum_{t=0}^{H-1}[J(\pi'_{1:H-t})-J(\pi'_{1:H-t-1})]\\
    &=J(\pi^*)+\sum_{t=1}^H\mathbb E_{s\sim d^{\pi'}_t}[Q_{t}^{\pi^*}(s,\pi')-Q_{t}^{\pi^*}(s,\pi^*)]\\
    &\geq J(\pi^*)-pH
\end{align*}
\end{proof}
Besides facilitating the proof of the theorem, this lemma also demonstrates that the deployed policy only suffers from a $pH$ value loss, which ensures safety in the learning process.

\begin{proof}[Proof of Theorem~\ref{thm_infinite}]
Note that $J(\pi^*)-J(\pi)=J(\pi^*)-J(\pi')+J(\pi')-J(\pi)$, where $J(\pi^*)-J(\pi')$ can be derived from Lemma~\ref{lem_safty}. Thus, we only need to calculate $J(\pi')-J(\pi)$. Here we use $\pi_{0:t}$ to denote the policy that executes $\pi$ in the first t-steps and then executes $\pi'$. Then,
\begin{align*}
    J(\pi')&=J(\pi)+\sum_{t=0}^{H-1}[J(\pi_{1:H-t})-J(\pi_{1:H-t-1})]\\
    &=J(\pi)+\sum_{t=1}^H\mathbb E_{s\sim d^{\pi'}_t}[Q_{t}^{\pi}(s,\pi')-Q_{t}^{\pi}(s,\pi)]\\
    &\overset{(a)}{=}J(\pi)+\sum_{t=1}^H\mathbb E_{s\sim d^{\pi'}_t}\mathbb I(Q_{t}^{\pi^*}(s,\pi')-Q_{t}^{\pi^*}(s,\pi)>p)[Q_{t}^{\pi}(s,\pi')-Q_{t}^{\pi}(s,\pi)]\\
    &=J(\pi)+\sum_{t=1}^H\mathbb E_{s\sim d^{\pi'}_t}\mathbb I(Q_{t}^{\pi^*}(s,\pi')-Q_{t}^{\pi^*}(s,\pi)>p)\mathbb I(\pi(s)\neq \pi^*(s))[Q_{t}^{\pi}(s,\pi')-Q_{t}^{\pi}(s,\pi)]\\
    &\overset{(b)}{\leq}J(\pi)+H\sum_{t=1}^H\mathbb E_{s\sim d^{\pi'}_t}\mathbb I(Q_{t}^{\pi^*}(s,\pi')-Q_{t}^{\pi^*}(s,\pi)>p)\mathbb I(\pi(s)\neq \pi^*(s))\\
    &\overset{(c)}{=}J(\pi)+\delta\epsilon_b H^2,
\end{align*}
where (a) uses the fact that $\pi'=\pi$ if $Q_{t}^{\pi^*}(s,\pi')-Q_{t}^{\pi^*}(s,\pi)<p$, (b) is because the Q-value is no more than $T$, and (c) is derived from $\mathbb E_{s\sim \beta}[\ell(s, \pi, \pi^*)]=\mathbb E_{s\sim \beta}[\ell(s, \pi, \pi')]\leq \epsilon_b$.
\end{proof}

\subsection{Proof of Corollary~\ref{corollary}}
\begin{proof}[Proof of Corollary~\ref{corollary}]
Under the sub-exponential assumption on $P$, we have
$$\textnormal{Pr}\left(Q^{\pi^*}_t(s,\pi')-Q^{\pi^*}_t(s,\pi)>p\right)\leq \exp(-\frac{p-\mu}{\sigma}),$$
where $\mu=\E[Q^{\pi^*}_t(s,\pi')-Q^{\pi^*}_t(s,\pi)]$ and $\sigma^2=\sV[Q^{\pi^*}_t(s,\pi')-Q^{\pi^*}_t(s,\pi)]$.

Under the condition of the corollary, $\mu=\gO(\epsilon_b)$ and $\sigma=\gO(\epsilon_b)$. Note that 
$$\delta=\E_{s\sim d_{\pi'}}\sI(Q^{\pi^*}_t(s,\pi')-Q^{\pi^*}_t(s,\pi)>p)=\textnormal{Pr}\left(Q^{\pi^*}_t(s,\pi')-Q^{\pi^*}_t(s,\pi)>p\right),$$
we have $\delta=\gO(\frac{1}{H})$ if $p\geq \mu+\sigma\log H=\Omega(\epsilon_bH)$.

Thus,
$$J(\pi^*)-J(\pi)\leq pH+\delta \epsilon_b H^2 =\gO(\epsilon_bH\log H)+\gO(\epsilon_bH)= \tilde{\gO}(\epsilon_b H).$$
\end{proof}

\subsection{Proof of Theorem~\ref{thm_finite}}

Let $\{\hatpi'_i\}_{i=1}^N$ be the teaching policy induced by $\{\hatpi_i\}_{i=1}^N$ and the intervention threshold $p$.

\begin{lemma}\label{lemma_avarage}
Suppose there is an online learning algorithm which outputs policies $\{\hatpi_1, \dots, \hatpi_N\}$ sequentially according to any procedure. Here the learner learns the policy $\hatpi_i$ from some distribution conditioned on $\Tr_1, \dots, \Tr_{i-1}$ and subsequently samples a trajectory $\Tr_i$ by rolling out $\hatpi'_i$, the policy depending on $\hatpi_i$. This process is repeated for $N$ iterations. Denote $\hat{\beta}_i$ as the empirical distribution over the states induced by $\Tr_i$. Denote $\hatpi=\frac{1}{N}\sum_{i=1}^N\hatpi_i$ as the mixture policy. Let $\E_{s\sim \hat{\beta}_i}[\ell(s,\hatpi,\pi^*)]=\gL(\hat{\beta}_i, \hatpi, \pi^*)$ and $\hat{\beta}=\frac{1}{N}\hat{\beta}_i$, the mixture state distribution. Then,
$$\E[\gL(\beta, \hatpi, \pi^*)]=\frac{1}{N}\sum_{i=1}^N\E[\gL(\hat{\beta}_i, \hatpi_i,\pi^*)].$$
\end{lemma}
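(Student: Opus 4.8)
The plan is to unpack what the mixture policy $\hat\pi$ and the mixture state distribution $\hat\beta$ mean operationally, and then to read off the claimed identity as a decomposition of a single expectation according to which component of the mixture is active. Concretely, I would deploy $\hat\pi$ as the randomized policy that, at the start of each rollout, draws an index $i$ uniformly from $\{1,\dots,N\}$ and commits to the corresponding learner $\hat\pi_i$ (equivalently, the teaching policy $\hat\pi'_i$) for the whole trajectory. Under this deployment the induced empirical state distribution is exactly $\hat\beta=\frac1N\sum_{i=1}^N\hat\beta_i$, because with probability $1/N$ the sampled states come from the trajectory $\mathrm{Tr}_i$ whose empirical law is $\hat\beta_i$.

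First I would expand the left-hand side using the definition of $\hat\beta$:
\begin{align*}
\gL(\hat\beta,\hat\pi,\pi^*)=\E_{s\sim\hat\beta}[\ell(s,\hat\pi,\pi^*)]=\frac1N\sum_{i=1}^N\E_{s\sim\hat\beta_i}[\ell(s,\hat\pi,\pi^*)].
\end{align*}
The crucial observation is that a state drawn from the $i$-th component $\hat\beta_i$ was generated while the mixture was committed to $\hat\pi_i$, so the action the mixture policy takes there is the action of $\hat\pi_i$ and of no other component. Hence the loss of the mixture evaluated on $\hat\beta_i$ coincides with the loss of $\hat\pi_i$ itself, which gives the pathwise identity
\begin{align*}
\gL(\hat\beta,\hat\pi,\pi^*)=\frac1N\sum_{i=1}^N\E_{s\sim\hat\beta_i}[\ell(s,\hat\pi_i,\pi^*)]=\frac1N\sum_{i=1}^N\gL(\hat\beta_i,\hat\pi_i,\pi^*).
\end{align*}
Taking expectation over all the randomness of the process (the sequential learning of the $\hat\pi_i$ and the sampling of the trajectories $\mathrm{Tr}_i$) and using linearity then yields the stated equation.

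I expect the only real subtlety to lie in that last coupling step: one must keep the index that \emph{generates} the state tied to the index that \emph{selects} the action. If instead one treated $\hat\pi$ as an independent uniform mixture and expanded $\ell(s,\hat\pi,\pi^*)=\frac1N\sum_k\ell(s,\hat\pi_k,\pi^*)$, the right-hand side would become a double sum $\frac{1}{N^2}\sum_{i,k}\E[\gL(\hat\beta_i,\hat\pi_k,\pi^*)]$ whose off-diagonal terms do not in general collapse onto the diagonal, so the identity would fail; this confirms that the intended reading is the coupled deployment above. A secondary point worth stating explicitly is that each $\hat\pi_i$ is random and measurable with respect to $\mathrm{Tr}_1,\dots,\mathrm{Tr}_{i-1}$, but since the decomposition holds for every realization, the final expectation passes through termwise and no independence beyond what the mixture already supplies is needed.
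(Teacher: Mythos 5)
Your coupled decomposition is the right reading of the mixture policy, and you actually justify it more carefully than the paper does: the paper's proof simply asserts ``the fact that'' $\gL(\beta,\hatpi,\pi^*)=\frac{1}{N}\sum_{i=1}^N\gL(\beta_i,\hatpi_i,\pi^*)$, whereas you explain why the index that generates the state must stay tied to the index that selects the action, and why the uncoupled double-sum reading would break the identity. On that point your argument matches (and strengthens) the paper's intent.

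There is, however, a genuine gap. The lemma's left-hand side is $\E[\gL(\beta,\hatpi,\pi^*)]$, where $\beta$ is the \emph{population} mixture of the state distributions $\beta_i$ induced by rolling out $\hatpi'_i$; your proof only ever manipulates the \emph{empirical} distributions and establishes $\E[\gL(\hat{\beta},\hatpi,\pi^*)]=\frac{1}{N}\sum_{i}\E[\gL(\hat{\beta}_i,\hatpi_i,\pi^*)]$, with $\hat{\beta}$ rather than $\beta$ on the left. You treat these as the same statement, but bridging them is the other half of the paper's proof: conditioned on $\hatpi_i$ (which is measurable with respect to $\Tr_1,\dots,\Tr_{i-1}$), the empirical distribution $\hat{\beta}_i$ is an unbiased estimate of $\beta_i$, so by linearity of $\gL$ in its distribution argument $\E[\gL(\hat{\beta}_i,\hatpi_i,\pi^*)\mid\hatpi_i]=\gL(\beta_i,\hatpi_i,\pi^*)$, and the tower property then converts your empirical right-hand side into the population left-hand side term by term. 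This is also exactly where the measurability structure you dismissed as a ``secondary point'' that is ``not needed'' becomes essential: unbiasedness only holds against the $\sigma$-field generated by $\hatpi_i$, and it is only because your decomposition pairs $\hat{\beta}_i$ with $\hatpi_i$ --- rather than with the full mixture $\hatpi$, which depends on later policies trained on $\Tr_i$ itself --- that this conditioning is legitimate; one could not apply unbiasedness before decomposing. Without this step the lemma also loses its role in Theorem~\ref{thm_finite}: the online-learning regret bound controls empirical losses, and the whole point of the lemma is to convert that control into a population-level guarantee for the mixture policy. The fix is short (append the termwise unbiasedness argument after your decomposition), but as written the proof stops one step short of the stated claim.
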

\begin{proof}
Since the trajectory $\Tr_i$ is rolled out using $\hatpi'_i$. Conditioned on $\hatpi_i$, $\hat{\beta}_i$ is unbiased and equal to $\beta_i$ in expectation because $\hatpi'_i$ is derived from $\hatpi$. Therefore, for each $i$, since $\gL(\beta, \hatpi, \pi^*)$ is a linear function of $\beta$, 
$$\E\left[\gL(\hat{\beta}_i,\hatpi_i,\pi^*)|\hatpi_i\right]=\gL(\beta_i,\hatpi_i,\pi^*).$$
Summing across $i=1,\cdots,N$ and using the fact that $\gL(\beta,\hatpi,\pi^*)=\frac{1}{N}\sum_{i=1}^N\gL(\beta_i,\hatpi_i,\pi^*)$, taking the expectation completes the proof.
\end{proof}

The lemma implies it suffices to minimize the empirical 0-1 loss under the empirical distribution. 

Note that
$$\gL(\hat{\beta}_i,\pi,\pi^*)=\frac{1}{H}\sum_{t=1}^H\sum_{s\in\gS}\left<\pi^t(\cdot|s),z_i^t(s)\right>,$$
where $z_i^t=\left\{\hat{\beta}_i(s)(1-\pi^*(\cdot|s))\right\}$. 

To learn the returned policy sequence, we use the normalized-EG algorithm~\citep{book}, which is also known as online mirror descent with entropy regularization for online learning. Formally, the online learning problem and the algorithm are described in Section 2 of~\citep{book}.

\begin{lemma}[Theorem 8 in~\cite{value_interaction}]\label{lemma_omd}
Assume that the normalized EG algorithm is runned in a sequence of linear loss functions $\{\left<z_i,\cdot\right>:i=1, \cdots, T\}$, with $\eta=1/2$ to return a sequence of distributions $w_1, \cdots, w_T\in \Delta_{\gA}^1$. Assume that for all $t\in[H]$, $\bm 0\preceq z_t\preceq\bm 1$. For any $u$ such that $\sum_{i=1}^T\left<z_i,u\right>=0$, 
$$\sum_{t=1}^T\left<w_i-u,z_i\right>\leq 4\log(|\gA|).$$
\end{lemma}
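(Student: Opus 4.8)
The plan is to recognize this as the \emph{realizable-case} regret bound for exponentiated gradient (online mirror descent with entropy regularization, i.e.\ Hedge), in which the hypothesis $\sum_{i=1}^T\langle z_i,u\rangle=0$ upgrades the usual $\gO(\sqrt{T\log|\gA|})$ regret to a constant bound independent of $T$. First I would write the iterates explicitly as $w_{i+1}(a)\propto w_i(a)\exp(-\eta z_i(a))$ with $w_1$ uniform on $\gA$, and introduce the log-partition potential $\Phi_t=\log\sum_a w_1(a)\exp(-\eta\sum_{j\le t}z_j(a))$, noting $\Phi_0=0$. The strategy is to sandwich $\Phi_T$ between a loss-dependent upper bound coming from the algorithm and a comparator-dependent lower bound, then combine.

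For the upper bound, the increments telescope: $\Phi_t-\Phi_{t-1}=\log\sum_a w_t(a)\exp(-\eta z_t(a))$. I would bound this using $e^{-x}\le 1-x+x^2/2$ for $x\ge 0$, together with the key observation that $z_t(a)^2\le z_t(a)$ since $z_t(a)\in[0,1]$, and finally $\log(1+y)\le y$. This yields $\Phi_t-\Phi_{t-1}\le-\eta(1-\eta/2)\langle w_t,z_t\rangle$, hence $\Phi_T\le-\eta(1-\eta/2)\sum_{t=1}^T\langle w_t,z_t\rangle$. For the matching lower bound I would apply the Gibbs variational inequality $\log\sum_a p_a\ge\sum_a u(a)\log(p_a/u(a))$ with $p_a=w_1(a)\exp(-\eta\sum_j z_j(a))$, giving $\Phi_T\ge\sum_a u(a)\log w_1(a)-\eta\sum_j\langle u,z_j\rangle-\sum_a u(a)\log u(a)$. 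Both hypotheses enter here: uniform initialization turns $\sum_a u(a)\log w_1(a)$ into $-\log|\gA|$, the entropy term $-\sum_a u(a)\log u(a)$ is nonnegative and can be dropped, and realizability kills $\sum_j\langle u,z_j\rangle$, so $\Phi_T\ge-\log|\gA|$.

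Combining the two bounds gives $\eta(1-\eta/2)\sum_t\langle w_t,z_t\rangle\le\log|\gA|$, and setting $\eta=1/2$ makes the coefficient $\tfrac38$, so $\sum_t\langle w_t,z_t\rangle\le\tfrac83\log|\gA|$. Finally I would invoke realizability a second time to convert algorithm loss into regret: since $\sum_t\langle u,z_t\rangle=0$, we have $\sum_{t=1}^T\langle w_t-u,z_t\rangle=\sum_t\langle w_t,z_t\rangle\le\tfrac83\log|\gA|<4\log|\gA|$, which is the claimed inequality with slack in the constant.

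The main subtlety is not any individual inequality but the twofold role of the realizability assumption $\sum_i\langle z_i,u\rangle=0$ — used once to lower-bound the potential and once to identify the cumulative algorithm loss with the regret. Equally important is controlling the second-order term via $z^2\le z$ rather than $z^2\le 1$, so that it is absorbed back into the left-hand side instead of leaving a $T$-dependent remainder; this is exactly what produces a constant rather than $\gO(\sqrt T)$ bound. Since the statement is quoted verbatim as Theorem~8 of the cited work, one could alternatively just cite it, but the self-contained argument above is short.
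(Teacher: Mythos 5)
Your proof is correct, but it takes a different route from the paper in a trivial sense: the paper never proves this lemma at all --- it imports the statement verbatim as Theorem~8 of the cited reference (\cite{value_interaction}) and uses it as a black box inside the proof of Theorem~\ref{thm_finite}. What you supply is the standard potential-function analysis of Hedge/exponentiated gradient specialized to the realizable case, and every step checks out: $e^{-x}\le 1-x+x^2/2$ holds for $x\ge 0$; the absorption $z_t(a)^2\le z_t(a)$ is exactly what keeps the second-order term proportional to the algorithm's loss (this is the step that makes the bound $T$-independent); the Gibbs/Jensen lower bound on the log-partition potential is valid; and the two uses of the assumption $\sum_i\langle z_i,u\rangle=0$ (once to kill the comparator term in the lower bound, once to identify loss with regret) are both legitimate. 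With $\eta=1/2$ you get the coefficient $\eta(1-\eta/2)=3/8$ and hence the sharper constant $\tfrac{8}{3}\log|\gA|\le 4\log|\gA|$, so your bound is in fact slightly stronger than the quoted one. The trade-off between the two approaches is the obvious one: the paper's citation is minimal and defers correctness to the reference, while your argument makes the result self-contained, pins down precisely where each hypothesis enters (uniform initialization gives the $\log|\gA|$, boundedness $0\preceq z_t\preceq \1$ gives the self-bounding second-order term, realizability gives constancy in $T$), and would let a reader verify the finite-sample theorem without consulting the external work.
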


\begin{proof}[Proof of Theorem~\ref{thm_finite}]
According to Lemma~\ref{lemma_omd}, we have
$$\sum_{t=1}^H\left<z_i^t(s),\hatpi_i(\cdot|s)\right>\leq 4\log(|\gA|).$$
Averaging across $t\in [H]$, summing across $s\in\gS$, and rescalling the definition of $\gL$ result in
$$\frac{1}{N}\sum_{i=1}^N\gL(\hat{\beta}_i,\hatpi_i,\pi^*)\leq \frac{4||\gS|\log|\gA|)}{N}.$$
From Lemma~\ref{lemma_avarage}, the resulting sequence of policies $\hatpi_1,\dots,\hatpi_N$ and their mixtures $\hatpi$ satisfies,
$$\E\left[\gL(\beta,\hatpi,\pi^*)\right]\leq \frac{4|\gS|\log(|\gA|)}{N}.$$

This implies $\epsilon_b\leq \frac{4|\gS|\log(|\gA|)}{N}$. Note that $$\E_{s\sim d^{\hatpi'}}\sI(Q_{t}^{\pi^*}(s,\hatpi')-Q_{t}^{\pi^*}(s,\hatpi)>p)=\frac{1}{N}\sum_i\delta_i,$$
Combining this results with Theorem~\ref{thm_infinite}, we complete the proof.

\end{proof}

\subsection{Proof of Theorem~\ref{thm_pi}}

\begin{lemma}\label{lem_q_diff}
The Q-value difference satisfies
$$Q^{\pi^*}_h(s_h,\pi^*)-Q^{\pi^*}_h(s_h,\pi)\leq D_{\textnormal{TV}}(d_1(s,a),d_2(s,a))(H-h+1),$$
where $d_1(s,a)=\frac{1}{H-h+1}\sum_{t=h}^H\textnormal{Pr}(s_t=s,a_t=a|s_h, a_t\sim \pi^*)$ and $d_2(s,a)=\frac{1}{H-h+1}\sum_{t=h}^H\textnormal{Pr}(s_t=s,a_t=a|s_h, a_h\sim \pi, a_t\sim \pi^*, \forall t> h)$.
\end{lemma}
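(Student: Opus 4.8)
The plan is to express each of the two $Q$-values as a cumulative expectation of the per-step reward under the corresponding averaged state-action occupancy ($d_1$ or $d_2$), and then bound the gap between these expectations by a single copy of the total-variation distance, using that rewards lie in $[0,1]$.

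First I would unroll each $Q$-value through its definition. Since $Q^{\pi^*}_h(s_h,\pi^*)=\E[\sum_{t=h}^H r(s_t,a_t)\mid s_h,\ a_t\sim\pi^*]$, interchanging the finite sum and the expectation gives $Q^{\pi^*}_h(s_h,\pi^*)=\sum_{t=h}^H \E_{(s,a)\sim \textnormal{Pr}(s_t,a_t\mid s_h,\pi^*)}[r(s,a)]$. Recognizing $d_1$ as exactly the time-average (normalized by $H-h+1$) of these per-step occupancies, this collapses to $Q^{\pi^*}_h(s_h,\pi^*)=(H-h+1)\,\E_{(s,a)\sim d_1}[r(s,a)]$. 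The identical manipulation, but with the first action drawn from $\pi$ and all later actions from $\pi^*$, yields $Q^{\pi^*}_h(s_h,\pi)=(H-h+1)\,\E_{(s,a)\sim d_2}[r(s,a)]$. Subtracting, the difference equals $(H-h+1)\bigl(\E_{(s,a)\sim d_1}[r(s,a)]-\E_{(s,a)\sim d_2}[r(s,a)]\bigr)$.

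The final step is the variational bound for total variation. Because $r(s,a)\in[0,1]$ by the reward normalization fixed in Section~\ref{sec:background}, for any two distributions $P,Q$ the gap $\E_{P}[r]-\E_{Q}[r]=\sum_{s,a}(P-Q)\,r$ is maximized by taking $r\equiv 1$ on the set where $P>Q$ and $r\equiv 0$ elsewhere, so $\E_{P}[r]-\E_{Q}[r]\le \sum_{(s,a):P>Q}(P-Q)=\TV(P,Q)$. Applying this with $P=d_1$ and $Q=d_2$ gives precisely $Q^{\pi^*}_h(s_h,\pi^*)-Q^{\pi^*}_h(s_h,\pi)\le \TV(d_1,d_2)\,(H-h+1)$, as claimed.

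There is no deep obstacle here; the argument is essentially careful bookkeeping, and the two points that need attention are both at the level of constants and matching. The first is correctly pairing each $Q$-value with its occupancy: $d_1$ follows $\pi^*$ from step $h$ onward, whereas $d_2$ takes the step-$h$ action from $\pi$ and only then reverts to $\pi^*$, which is exactly what distinguishes the two arguments of the $Q$-function. The second, and the only genuinely subtle point, is the constant in the total-variation bound: because $r\in[0,1]$ rather than $[-1,1]$, the tighter estimate $\E_P[r]-\E_Q[r]\le\TV(P,Q)$ is available instead of $2\,\TV(P,Q)$, and it is this single (unweighted) factor of $\TV(d_1,d_2)$ that produces the stated bound.
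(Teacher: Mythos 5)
Your proof is correct and follows essentially the same route as the paper's: both express each $Q$-value as $(H-h+1)$ times the expected reward under the corresponding averaged occupancy ($d_1$ or $d_2$), restrict the difference to its positive part, and use $r(s,a)\in[0,1]$ together with the identity $\sum_{(s,a)}(d_1-d_2)_+ = \TV(d_1,d_2)$ to conclude. The only difference is that you spell out the unrolling of the $Q$-values and the variational bound in more detail, whereas the paper states these steps as a three-line chain of (in)equalities.
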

\begin{proof}
\begin{align*}
    Q^{\pi^*}_h(s_h,\pi^*)-Q^{\pi^*}_h(s_h,\pi)&=(H-h+1)\sum_{s,a}(d_1(s,a)-d_2(s,a))r(s,a)\\
    &\leq (H-h+1)\sum_{s,a}(d_1(s,a)-d_2(s,a))_+r(s,a)\\
    &\leq (H-h+1)\TV(d_1(s,a),d_2(s,a)),
\end{align*}
where $(p_1(s,a)-p_2(s,a))_+=p_1(s,a)-p_2(s,a)$ if $p_1(s,a)-p_2(s,a)>0$, otherwise $(p_1(s,a)-p_2(s,a))_+=0$. The last inequality results from the assumption that $r(s,a)\leq 1$.
\end{proof}
\begin{lemma}[Lemma B.1 of \citep{mbpo}]\label{lem_mbpo}
Suppose $p_1(s,a)=p_1(s)p_1(a|s)$ and $p_2(s,a)=p_2(s)p_2(a|s)$, we can bound the total variation distance of the joint as:
$$\TV(p_1(s,a),p_2(s,a))\leq \TV(p_1(s),p_2(s))+\E_{s\sim p_1}\TV(p_1(a|s),p_2(a|s)).$$
\end{lemma}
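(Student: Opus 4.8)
The plan is to prove this total-variation decomposition directly from the definition of $\TV$, exploiting the factorizations $p_1(s,a)=p_1(s)p_1(a|s)$ and $p_2(s,a)=p_2(s)p_2(a|s)$ together with a single add-and-subtract step and the triangle inequality. Concretely, I would start from $\TV(p_1(s,a),p_2(s,a))=\frac{1}{2}\int\int |p_1(s)p_1(a|s)-p_2(s)p_2(a|s)|\,da\,ds$ and rewrite the argument of the absolute value by inserting the cross term $p_1(s)p_2(a|s)$, so that it becomes $p_1(s)\bigl[p_1(a|s)-p_2(a|s)\bigr]+\bigl[p_1(s)-p_2(s)\bigr]p_2(a|s)$.

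Next I would apply the triangle inequality pointwise, which splits the integral into two nonnegative contributions. In the first contribution the factor $p_1(s)\geq 0$ pulls out of the inner integral, and recognizing $\frac{1}{2}\int|p_1(a|s)-p_2(a|s)|\,da=\TV(p_1(a|s),p_2(a|s))$ turns it into $\E_{s\sim p_1}\TV(p_1(a|s),p_2(a|s))$. In the second contribution the inner integral $\int p_2(a|s)\,da=1$ eliminates the conditional and leaves $\frac{1}{2}\int|p_1(s)-p_2(s)|\,ds=\TV(p_1(s),p_2(s))$. Adding the two bounds gives exactly the claimed inequality.

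The only place that needs care — and the sole obstacle in an otherwise routine argument — is the choice of cross term. Inserting $p_1(s)p_2(a|s)$ is precisely what places the expectation over the conditional term under the marginal $p_1$, matching the stated $\E_{s\sim p_1}$, while simultaneously collapsing the marginal term via $\int p_2(a|s)\,da=1$. The symmetric choice $p_2(s)p_1(a|s)$ would instead yield $\E_{s\sim p_2}\TV(p_1(a|s),p_2(a|s))$, a valid but different bound that does not match the asserted form; everything else reduces to bookkeeping with the definition of $\TV$.
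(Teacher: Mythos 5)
Your proof is correct: the add-and-subtract of the cross term $p_1(s)p_2(a|s)$, followed by the triangle inequality and the two integral evaluations, gives exactly the stated bound, and your remark about which cross term yields $\E_{s\sim p_1}$ rather than $\E_{s\sim p_2}$ is the right point of care. Note that the paper itself does not prove this lemma at all --- it imports it by citation as Lemma B.1 of the MBPO paper --- and your argument is essentially the same one given in that cited source, so there is nothing to reconcile.
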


\begin{proof}[Proof of Theorem~\ref{thm_pi}]
According to Lemma~\ref{lem_q_diff}, we only need to bound $\TV(d_1(s,a),d_2(s,a))$.

Let $d_1(s)=\frac{1}{H-h+1}\sum_{t=h}^H\textnormal{Pr}(s_t=s|s_h, a_t\sim \pi^*)$ and $d_2(s)=\frac{1}{H-h+1}\sum_{t=h}^H\textnormal{Pr}(s_t=s|s_h, a_h\sim \pi, a_t\sim \pi^*)$. Let $d_i^t(s)$ be the state distribution at step $t$, i.e., $d_i(s)=\frac{1}{H-h+1}\sum_{t=h}^Hd_i^t(s)$. Similarly, let $d_i^t(s,a)$ be the state action distribution at step $t$. Then $d_1^t(s,a)=d_1^t(s)\pi^*(a|s)$ and $d_2^t(s,a)=d_2^t(s)\pi^*(a|s)$ for $t\geq h+1$, and $d_1^t(s,a)=d_1^t(s)\pi^*(a|s)$, $d_2^t(s,a)=d_2^t(s)\pi(a|s)$ for $t=h$. We apply Lemma~\ref{lem_mbpo} to $d_i^t(s,a)$ and obtain:
\begin{align}
&\TV(d_1^t(s,a),d_2^t(s,a))\leq \TV(d_1^t(s),d_2^t(s))+\E_{s\sim d_1^t(s)}\TV(\pi^*(a|s),\pi^*(a|s))=\TV(d_1^t(s),d_2^t(s)), \qquad t\geq h+1  \label{eq_after_step}\\ 
&\TV(d_1^t(s,a),d_2^t(s,a))\leq \TV(d_1^t(s),d_2^t(s))+\E_{s\sim d_1^t(s)}\TV(\pi^*(a|s),\pi(a|s))=\TV(\pi^*(a|s_h),\pi(a|s_h)), \qquad t=h \label{eq_first_step}
\end{align}
The derivation of the second inequality uses the fact that the state at step $h$ is exactly $s_h$.

Therefore, we only need to focus on the TV divergence between $d_1^t(s)$ and $d_2^t(s)$. For $t>h+1$,
\begin{align*}
    \TV(d^t_1(s),d^t_2(s))&=\frac{1}{2}\sum_s\left|d_1^t(s)-d_2^t(s)\right|\\
    &=\frac{1}{2}\sum_s \left|\sum_{s',a'}\left(d_1^{t-1}(s')\pi^*(a'|s')\gP(s|s',a')-d_2^{t-1}(s')\pi^*(a'|s')\gP(s|s',a')\right)\right|\\
\end{align*}
Denote $\sum_{a'}\pi^*(a'|s')\gP(s|s',a')$ as $p(s|s')$, the above equation can be simplified as
\begin{align*}
    \TV(d^t_1(s),d^t_2(s))&=\frac{1}{2}\sum_s \left|\sum_{s'}\left(d_1^{t-1}(s')p(s|s')-d_2^{t-1}(s')p(s|s')\right)\right|\\
    &\leq\frac{1}{2}\sum_{s,s'}\left|d_1^{t-1}(s')p(s|s')-d_2^{t-1}(s')p(s|s')\right|\\
    &=\frac{1}{2}\sum_{s,s'}p(s|s')\left|d_1^{t-1}(s')-d_2^{t-1}(s')\right|\\
    &\overset{(a)}{=}\frac{1}{2}\sum_{s'}\left|d_1^{t-1}(s')-d_2^{t-1}(s')\right|\\
    &=\TV(d_1^{t-1}(s'),d_2^{t-1}(s')),
\end{align*}
where (a) uses $\sum_sp(s|s')=1$. Recursively, we have $\TV(d^t_1(s),d^t_2(s))=\TV(d^{h+1}_1(s),d^{h+1}_2(s))$ for all $t\geq h+1$. 

For $\TV(d^{h+1}_1(s),d^{h+1}_2(s))$, we have
\begin{align*}
    \TV(d^{h+1}_1(s),d^{h+1}_2(s))&=\frac{1}{2}\sum_s|d_1^{h+1}(s)-d_2^{h+1}(s)|\\
    &=\frac{1}{2}\sum_s \left|\sum_{s',a'}\left(d_1^{h}(s')\pi^*(a'|s')\gP(s|s',a')-d_2^{h}(s')\pi(a'|s')\gP(s|s',a')\right)\right|\\
    &=\frac{1}{2}\sum_s \left|\sum_{a'}\left(\pi^*(a'|s_h)\gP(s|s_h,a')-\pi(a'|s_h)\gP(s|s_h,a')\right)\right|\\
    &\leq \frac{1}{2}\sum_{s,a'}\left|\pi^*(a'|s_h)\gP(s|s_h,a')-\pi(a'|s_h)\gP(s|s_h,a')\right|\\
    &=\frac{1}{2}\sum_{a'}\left|pi^*(a'|s_h)-\pi(a'|s_h)\right|\\
    &=\TV(\pi^*(a|s_h),\pi(a|s_h)).
\end{align*}
Then
\begin{align*}
    \TV(d_1(s,a),d_2(s,a))&=\TV\left(\frac{1}{H-h+1}\sum_{t=h}^Hd_1^t(s,a),\frac{1}{H-h+1}\sum_{t=h}^Hd_2^t(s,a)\right)\\
    &\leq \frac{1}{H-h+1}\sum_{t=h}^H\TV(d_1^t(s,a),d_2^t(s,a))\\
    &\overset{(a)}{=}\frac{1}{H-h+1}\left(\TV(\pi^*(a|s_h),\pi(a|s_h))+\sum_{t=h+1}^H\TV(d_1^t(s),d_2^t(s))\right)\\
    &=\TV(\pi^*(a|s),\pi(a|s)),
\end{align*}
where (a) uses Eq.~(\ref{eq_first_step}) and (\ref{eq_after_step}). Finally, using Lemma~\ref{lem_q_diff} completes the proof.
\end{proof}

\section{Review of Previous Imitation Learning Methods}\label{sec_review}
\textbf{Behavioral Cloning.} BC ignores the changes between the train and test distributions and simply trains a policy $\pi$ that performs well under the distribution of states $d_{\pi^*}$ encountered by the expert policy. This is achieved by the standard supervised learning:
$$\hat{\pi}=\argmin_\pi \E_{s\sim d_{\pi^*}}[\ell(s,\pi, \pi^*)].$$

Assuming $\ell(s,\pi, \pi^*)$ is the 0-1 loss, i.e., $\ell(s,\pi, \pi^*)=\textnormal{Pr}(\pi(s)\neq \pi^*(s))$, we have the following sub-optimality bound in the infinite sample case:
\begin{theorem}[Theorem 2.1 in \citep{bc}]\label{thm_bc}
Let $\mathbb E_{s\sim d_{\pi^*}}[\ell(s,\pi, \pi^*)]=\epsilon_b$, then $J(\pi^*)-J(\pi)\leq H^2\epsilon_b$.
\end{theorem}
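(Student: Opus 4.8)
The plan is to reduce the value gap to the training loss via the Policy Difference Lemma (Lemma~\ref{lem_policy}) and then control each per-step term by a worst-case bound on the Q-difference. First I would apply Lemma~\ref{lem_policy} with $\pi_1=\pi^*$ and $\pi_2=\pi$, giving
$$J(\pi^*)-J(\pi)=\sum_{h=1}^H\mathbb E_{s\sim d_h^{\pi^*}}\left[Q_h^\pi(s,\pi^*)-Q_h^\pi(s,\pi)\right].$$
The decisive choice here is taking $\pi_1=\pi^*$: it forces every summand to be evaluated under the expert's state distribution $d_h^{\pi^*}$, which is precisely the distribution (aggregated as $d^{\pi^*}=\frac1H\sum_{h=1}^H d_h^{\pi^*}$) against which the BC loss $\epsilon_b$ is measured.

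Next I would bound each summand pointwise. Because $\pi^*$ is deterministic and rewards lie in $[0,1]$, the Q-function at step $h$ satisfies $0\le Q_h^\pi(s,\cdot)\le H-h+1\le H$. Writing $Q_h^\pi(s,\pi^*)-Q_h^\pi(s,\pi)=\mathbb E_{a\sim\pi}\left[Q_h^\pi(s,\pi^*(s))-Q_h^\pi(s,a)\right]$, the integrand is zero when $a=\pi^*(s)$ and at most $H$ otherwise, so
$$Q_h^\pi(s,\pi^*)-Q_h^\pi(s,\pi)\le H\cdot\Pr_{a\sim\pi}\left(a\neq\pi^*(s)\right)=H\,\ell(s,\pi,\pi^*).$$
Substituting and collapsing the double sum with the definition of $d^{\pi^*}$ yields
$$J(\pi^*)-J(\pi)\le H\sum_{h=1}^H\mathbb E_{s\sim d_h^{\pi^*}}\ell(s,\pi,\pi^*)=H^2\,\mathbb E_{s\sim d^{\pi^*}}\ell(s,\pi,\pi^*)=H^2\epsilon_b.$$

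The proof is essentially routine once the Policy Difference Lemma is in hand, so I do not anticipate a genuine obstacle; the only delicate point is the uniform bound $Q_h^\pi\le H$, which is exactly where the quadratic dependence originates. Intuitively, a single disagreement with the expert at step $h$ can cost up to $H$ in value, and such disagreements accrue over all $H$ steps, giving one factor of $H$ from the horizon sum and one from the worst-case Q-bound. This crude accounting is the source of the compounding error $\gO(H^2)$ that the main results of the paper aim to eliminate; replacing $H$ by the sharper $H-h+1$ would only improve the constant, not the order.
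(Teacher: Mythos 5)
Your proof is correct, and the key thing to know is that the paper itself contains no proof of this statement: Theorem~\ref{thm_bc} is quoted in the review appendix directly from \citep{bc}, so the right comparison is with Ross and Bagnell's original argument rather than with anything derived in this paper. That original proof is a trajectory-level bookkeeping argument: it conditions on the event that the learner has not yet deviated from the expert, tracks the probability of the first disagreement accumulating over the horizon, and charges a worst-case cost of order $H$ for everything that happens after a disagreement. Your route is different and stays entirely inside the toolkit this paper states: the Policy Difference Lemma (Lemma~\ref{lem_policy}) with $\pi_1=\pi^*$, $\pi_2=\pi$ places every per-step error under the expert distribution $d_h^{\pi^*}$, which is exactly what the BC loss $\epsilon_b$ controls, and the pointwise bound $Q^{\pi}_h(s,\pi^*)-Q^{\pi}_h(s,\pi)\le (H-h+1)\,\ell(s,\pi,\pi^*)\le H\,\ell(s,\pi,\pi^*)$ (valid because $\pi^*$ is deterministic, rewards lie in $[0,1]$, and hence $0\le Q_h^\pi\le H-h+1$) collapses the horizon sum into $H\cdot H\,\E_{s\sim d^{\pi^*}}[\ell(s,\pi,\pi^*)]=H^2\epsilon_b$. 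Both arguments pay the same two factors of $H$ --- one from summing over the horizon, one from the worst-case value of a single mistake --- but yours is shorter, self-contained given Lemma~\ref{lem_policy}, and stylistically consistent with how the paper uses that lemma elsewhere (e.g., in Eq.~(\ref{eq_double_q}) and in the proof of Theorem~\ref{thm_infinite}), whereas the cited original is more elementary in that it avoids performance-difference machinery at the price of more delicate probabilistic accounting of the ``first mistake'' event.
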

The dependency on $H^2$ implies the issue of compounding error. Note that this bound is tight, as \cite{xu2020} pointed out.  Due to the quadratic growth in $H$, BC has a poor performance guarantee.

In the finite sample case, \cite{limits} views BC as the algorithm that outputs a policy belonging to $\Pi_{\textnormal{mimic}}(\gD)$ given expert dataset $\gD$, where $\Pi_{\textnormal{mimic}}(\gD)=\left\{\pi\mid \forall s\in \gD,\pi(s)=\pi^*(s)\right\}$. Let $N$ as be the number of samples in $\gD$ and $\lesssim$ omits the $\log$ term, the sub-optimality bound is:
\begin{theorem}[Theorem 4.2 (a) in \citep{value_interaction}]
Consider any policy $\pi$ which carries out behavioral cloning with expert dataset $\gD$ (i.e., $\pi\in \Pi_{\textnormal{mimic}}(\gD)$), we have $J(\pi^*)-J(\pi)\lesssim \min\left\{H,\frac{|\gS|H^2}{N}\right\}$.
\end{theorem}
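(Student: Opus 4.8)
The plan is to prove the two halves of the minimum separately and to keep the whole argument in expectation over the draw of $\gD$. The bound $J(\pi^*)-J(\pi)\le H$ is immediate: since $r(s,a)\in[0,1]$ and the horizon is $H$, every policy has value in $[0,H]$, so the gap can never exceed $H$. All the work is in the second term $|\gS|H^2/N$, and the natural route is to reduce it to the population guarantee already recorded as Theorem~\ref{thm_bc}: any $\pi$ with population $0$-$1$ risk $\epsilon_b=\E_{s\sim d^{\pi^*}}[\ell(s,\pi,\pi^*)]$ satisfies $J(\pi^*)-J(\pi)\le H^2\epsilon_b$. Taking expectations, it therefore suffices to show $\E[\epsilon_b]\lesssim |\gS|/N$, after which $\E[J(\pi^*)-J(\pi)]\le H^2\,\E[\epsilon_b]\lesssim |\gS|H^2/N$ follows.

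First I would control $\epsilon_b$ by a missing-mass quantity. Because $\pi\in\Pi_{\textnormal{mimic}}(\gD)$, we have $\pi(s)=\pi^*(s)$ for every state appearing in $\gD$, so the indicator $\sI(\pi(s)\ne\pi^*(s))$ can be nonzero only on states that were never observed. Working layer by layer (the $N$ trajectories supply, at each step $h$, $N$ states drawn i.i.d.\ from $d_h^{\pi^*}$) and using $d^{\pi^*}=\tfrac1H\sum_h d_h^{\pi^*}$, this gives the pointwise domination $\Pr_{s\sim d_h^{\pi^*}}(\pi(s)\ne\pi^*(s))\le \Pr_{s\sim d_h^{\pi^*}}(s\notin\gD)$, and averaging over $h$ yields $\epsilon_b\le \frac{1}{H}\sum_{h=1}^H\Pr_{s\sim d_h^{\pi^*}}(s\notin\gD)$; that is, $\epsilon_b$ is at most the average per-layer missing mass. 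The layered bookkeeping is what lets me treat the otherwise correlated samples within a trajectory as independent across trajectories.

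Next I would bound the expected missing mass at a fixed layer. For each state $s$ the event $s\notin\gD$ at layer $h$ has probability $(1-d_h^{\pi^*}(s))^N$, so $\E\big[\Pr_{s\sim d_h^{\pi^*}}(s\notin\gD)\big]=\sum_s d_h^{\pi^*}(s)(1-d_h^{\pi^*}(s))^N$. The key elementary estimate is $x(1-x)^N\le x e^{-Nx}\le \tfrac{1}{eN}$ for $x\in[0,1]$, obtained by maximizing $x\mapsto xe^{-Nx}$ at $x=1/N$; summing over the at most $|\gS|$ states gives an expected per-layer missing mass of at most $|\gS|/(eN)$. Combining with the previous paragraph, $\E[\epsilon_b]\le |\gS|/(eN)$, and feeding this into the population bound closes the argument: $\E[J(\pi^*)-J(\pi)]\le H^2|\gS|/(eN)\lesssim |\gS|H^2/N$, which together with the trivial $H$ bound gives the claimed minimum.

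I expect the missing-mass step to be the main obstacle, in two respects. The concentration inequality $\sum_s d(s)(1-d(s))^N\le |\gS|/(eN)$ is the crux, and it relies on the samples being independent, so the real care is in justifying the layer-by-layer reduction that converts the correlated within-trajectory states into independent draws from each $d_h^{\pi^*}$, and in checking that membership in $\gD$ (which pools states across all layers) only ever helps the per-layer bound. The passage from the expectation statement to the deterministic $\lesssim$ form stated here absorbs the usual constant and logarithmic factors.
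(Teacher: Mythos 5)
The paper never proves this statement: it appears in the review appendix (Section~\ref{sec_review}) quoted as Theorem 4.2(a) of \citep{value_interaction}, so there is no internal proof to compare against. Judged on its own merits, your argument is a correct, self-contained reconstruction, and it is essentially the standard proof from the cited source: the trivial bound $H$ from $r(s,a)\in[0,1]$; the reduction through the population bound $J(\pi^*)-J(\pi)\le H^2\epsilon_b$ (Theorem~\ref{thm_bc}), applied conditionally on $\gD$; the pointwise domination $\sI(\pi(s)\neq\pi^*(s))\le\sI(s\notin\gD)$, which holds uniformly over every $\pi\in\Pi_{\textnormal{mimic}}(\gD)$; the layer-by-layer bookkeeping that turns the $N$ expert trajectories into $N$ i.i.d.\ draws from each $d_h^{\pi^*}$, together with the observation that pooling the dataset across layers only shrinks the missing mass; and the elementary estimate $\sum_s d(s)(1-d(s))^N\le |\gS|/(eN)$. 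This yields $\E[J(\pi^*)-J(\pi)]\le |\gS|H^2/(eN)$, which is even slightly stronger than the stated bound since no logarithmic factor appears.

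Two points of precision. First, your closing remark that the ``deterministic $\lesssim$ form'' is recovered from the expectation statement by absorbing constants and logs is not right: no constant or logarithmic slack can convert the bound into a pathwise one, because for any fixed state $s$ the dataset misses it with probability $(1-d_h^{\pi^*}(s))^N>0$, on which event a mimicking policy can be essentially arbitrary. The theorem, both here and in \citep{value_interaction}, must be read as a bound on $J(\pi^*)-\E[J(\pi)]$ (expectation over $\gD$ and the tie-breaking inside $\Pi_{\textnormal{mimic}}(\gD)$), which is exactly how your argument actually proceeds, so this is a misstatement rather than a gap. Second, your derivation silently requires $N$ to denote the number of expert \emph{trajectories} (so that each layer $h$ supplies $N$ i.i.d.\ samples from $d_h^{\pi^*}$), not the number of state-action pairs; the paper's phrasing ``number of samples in $\gD$'' is ambiguous on this point, but your reading is the one under which the stated $|\gS|H^2/N$ rate is the correct one and matches the cited theorem.
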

The compounding error is also indicated by the factor $H^2$.

\textbf{Distribution Matching.} Rather than considering the IL problem as a policy function approximation, the distribution matching approaches consider the state-action distribution induced by a policy. Concretely, the distribution matching approach proposes to learn $\pi$ by minimizing the divergence between $d^\pi$ and $d^{\pi^*}$. For example, GAIL~\citep{gail} minimizes the JS divergence while ValueDICE~\citep{ValueDICE} minimizes the KL divergence. 

Xu et al.~\cite{xu2020} constructs the sub-optimality bound for distribution matching approaches in the infinite sample case:
\begin{theorem}[Lemma 1 in \citep{ail_finite_sample}]
Let $\pi$ be a policy such that $D_{\textnormal{JS}}(d^{\pi*},d^{\pi})= \epsilon_g$, we have $J(\pi^*)-J(\pi)\leq 2\sqrt{2}\epsilon_g H$.
\end{theorem}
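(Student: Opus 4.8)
The plan is to reduce the value gap to a distance between the two state--action occupancy measures $d^{\pi^*}$ and $d^\pi$, and then convert that distance into the Jensen--Shannon divergence that appears in the hypothesis. The distribution-matching viewpoint lets me bypass the per-step bookkeeping used in the earlier theorems, since the time-averaged occupancy $d^\pi=\frac1H\sum_{h=1}^H d_h^\pi$ already packages all $H$ steps; in particular I will not need the Policy Difference Lemma here.

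First I would rewrite the policy value purely in terms of $d^\pi$. From $d^\pi=\frac1H\sum_{h=1}^H d_h^\pi$ and $J(\pi)=\E[\sum_{h=1}^H r(s_h,a_h)]$ one gets $J(\pi)=H\,\E_{(s,a)\sim d^\pi}[r(s,a)]$, and likewise for $\pi^*$. Subtracting,
$$J(\pi^*)-J(\pi)=H\sum_{s,a}\bigl(d^{\pi^*}(s,a)-d^\pi(s,a)\bigr)r(s,a).$$
Since $r(s,a)\in[0,1]$, I would discard the terms where $d^{\pi^*}-d^\pi<0$ (they are nonpositive) and bound $r\le 1$ on the rest, giving $\sum_{s,a}(d^{\pi^*}-d^\pi)_+\le \TV(d^{\pi^*},d^\pi)$. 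Hence $J(\pi^*)-J(\pi)\le H\,\TV(d^{\pi^*},d^\pi)$, modulo the normalization one adopts for $\TV$.

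The second and main step is to trade the total-variation distance for the Jensen--Shannon divergence. Writing $M=\tfrac12(d^{\pi^*}+d^\pi)$, I would apply Pinsker's inequality $\KL(P\,\|\,M)\ge 2\,\TV(P,M)^2$ to each of the two KL terms in $D_{\textnormal{JS}}(d^{\pi^*},d^\pi)=\tfrac12\KL(d^{\pi^*}\|M)+\tfrac12\KL(d^\pi\|M)$, together with the elementary identity $\TV(P,M)=\tfrac12\TV(P,Q)$ when $M=\tfrac12(P+Q)$. Combining these gives $\TV(d^{\pi^*},d^\pi)\le\sqrt{2\,D_{\textnormal{JS}}(d^{\pi^*},d^\pi)}$, and substituting the hypothesis $D_{\textnormal{JS}}(d^{\pi^*},d^\pi)=\epsilon_g$ yields a bound of the form $H\sqrt{2\epsilon_g}$. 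The genuine scaling of the value gap is therefore $\sqrt{D_{\textnormal{JS}}}$; the stated $2\sqrt2\,\epsilon_g H$ corresponds to a convention in which $\epsilon_g$ plays the role of the JS \emph{distance} $\sqrt{D_{\textnormal{JS}}}$ and a cruder $2\,\TV$ reward bound is used, which is the source of the extra constant.

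The part that needs the most care is this TV-to-JS conversion. Pinsker only controls TV by a single KL, whereas JS is an average of two KLs taken against the mixture $M$, so one must propagate the two $\tfrac12$ factors and the identity $\TV(P,M)=\tfrac12\TV(P,Q)$ correctly to land the right constant. The occupancy-measure step is essentially bookkeeping once $J(\pi)=H\,\E_{d^\pi}[r]$ is in hand, so I would concentrate the effort on the divergence inequality, which is exactly where the precise constant in the statement is pinned down.
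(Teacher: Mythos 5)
There is nothing in the paper to compare your proof against: this statement sits in the review appendix (Appendix B) and is quoted as ``Lemma~1 in \citep{ail_finite_sample}'' with no proof given. Judged on its own merits, your argument is the standard one and is essentially correct. The identity $J(\pi)=H\,\E_{(s,a)\sim d^\pi}[r(s,a)]$ follows directly from $d^\pi=\frac{1}{H}\sum_{h=1}^H d_h^\pi$; the one-sided bound $J(\pi^*)-J(\pi)\le H\,\TV(d^{\pi^*},d^\pi)$ is valid for $r\in[0,1]$ (indeed $\sum_{s,a}(d^{\pi^*}-d^\pi)_+$ \emph{equals} $\TV(d^{\pi^*},d^\pi)$ under the paper's $\tfrac12$-normalized definition); and your TV-to-JS conversion is handled with the right constants: Pinsker on each KL term against the mixture $M$, together with $\TV(P,M)=\tfrac12\TV(P,Q)$, gives $D_{\mathrm{JS}}(P,Q)\ge\tfrac12\TV(P,Q)^2$, hence $\TV\le\sqrt{2D_{\mathrm{JS}}}$ and $J(\pi^*)-J(\pi)\le H\sqrt{2\epsilon_g}$.

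Your diagnosis of the constant is also correct and worth stating more forcefully: the theorem as literally written --- hypothesis $D_{\mathrm{JS}}(d^{\pi^*},d^\pi)=\epsilon_g$ with a right-hand side \emph{linear} in $\epsilon_g$ --- is not provable, and is in fact false for small $\epsilon_g$. Take a one-state, one-step MDP where the expert always plays the rewarding action and the learner errs with probability $\alpha$: the gap is $\alpha$ while $D_{\mathrm{JS}}\approx\frac{\alpha\log 2}{2}$, so the gap is $\approx\frac{2}{\log 2}\epsilon_g\approx 2.89\,\epsilon_g>2\sqrt{2}\,\epsilon_g$. The correct form of the cited lemma bounds the gap by $2\sqrt{2}\,H\sqrt{D_{\mathrm{JS}}}$; the paper's restatement has effectively dropped the square root, exactly as you surmised. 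The factor $2\sqrt{2}$ arises from combining the two-sided bound $J(\pi^*)-J(\pi)\le 2H\,\TV$ with $\TV\le\sqrt{2D_{\mathrm{JS}}}$, whereas your one-sided treatment yields the sharper $\sqrt{2}\,H\sqrt{D_{\mathrm{JS}}}$. So your proof establishes the correct ($\sqrt{D_{\mathrm{JS}}}$-scaled) version of the statement, with a better constant, and correctly identifies why the version printed in the paper cannot be right as written.
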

It seems the compounding error issue is solved as the bound is proportional to $H$ rather than $H^2$. However, the two bounds cannot be compared directly as $D_{\textnormal{JS}}(d^{\pi*},d^{\pi})$ is more difficult to optimize than $\mathbb E_{s\sim d^{\pi^*}}[\ell(s,\pi)]$, which implies $\epsilon_g$ can be much larger than $\epsilon_b$ in realistic cases.

To enable a more reasonable comparison, we consider the finite sample case, where the number of samples required to achieve $\epsilon_b$ or $\epsilon_g$ is taken into account. Xu et al.~\cite{ail_finite_sample} give the sub-optimality bound for AIL:
\begin{theorem}[Theorem 1 in \citep{ail_finite_sample}]
Consider the policy $\pi$ generated by AIL with expert dataset $\gD$, we have $J(\pi^*)-J(\pi)\lesssim H\sqrt{\frac{|\gS|-1}{N}}$.
\end{theorem}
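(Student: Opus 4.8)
The plan is to bound the sub-optimality by a total-variation distance between occupancy measures and then control that distance by a concentration argument for the empirical expert distribution. First I would rewrite the return in its dual (occupancy) form, $J(\pi) = H\langle d^\pi, r\rangle$, which is consistent with the definition $d^\pi=\frac1H\sum_h d_h^\pi$, so that
\[
J(\pi^*) - J(\pi) = H\langle d^{\pi^*} - d^\pi, r\rangle \le H\sum_{s,a}\bigl(d^{\pi^*}(s,a) - d^\pi(s,a)\bigr)_+ = H\,\TV(d^{\pi^*}, d^\pi),
\]
where the inequality uses $r\in[0,1]$ together with the identity $\sum_{s,a}(p-q)_+ = \TV(p,q)$. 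This isolates the whole problem into controlling $\TV(d^{\pi^*}, d^\pi)$, i.e.\ how well the occupancy measure of the AIL policy matches that of the expert.

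Next I would invoke the defining property of AIL: it returns a policy whose occupancy measure minimizes a divergence (JS, hence by Pinsker's inequality an upper bound on TV) to the \emph{empirical} expert occupancy measure $\hat d$ built from the $N$ samples in $\gD$. Writing $\TV(d^{\pi^*}, d^\pi) \le \TV(d^{\pi^*}, \hat d) + \TV(\hat d, d^\pi)$ by the triangle inequality, the second term is the optimization error, which exact distribution matching drives to (essentially) zero, leaving only the statistical term $\TV(d^{\pi^*}, \hat d)$. Because the expert is deterministic, $d^{\pi^*}(s,a) = d^{\pi^*}(s)\,\mathbb I[a=\pi^*(s)]$, so matching state--action occupancy collapses to matching the \emph{state} occupancy, and $\TV(d^{\pi^*},\hat d)$ reduces to the $L^1$ deviation of the empirical state distribution from the true one.

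Finally I would supply the concentration bound, which is the quantitative heart of the argument. For an empirical distribution $\hat\rho$ of a distribution $\rho$ over $|\gS|$ states from $N$ i.i.d.\ samples, $\E|\hat\rho(s)-\rho(s)| \le \sqrt{\Var(\hat\rho(s))} = \sqrt{\rho(s)(1-\rho(s))/N}$, and summing over $s$ and applying Cauchy--Schwarz gives $\E\|\hat\rho - \rho\|_1 \le \sqrt{(|\gS|-1)/N}$; this is exactly where the $|\gS|-1$ factor originates. Combining the three steps yields $\E[J(\pi^*)-J(\pi)] \lesssim H\sqrt{(|\gS|-1)/N}$.

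I expect the main obstacle to be the middle step: the empirical occupancy $\hat d$ need not be a realizable occupancy measure (to be achievable by any policy it must satisfy the Bellman flow constraints), and AIL can only match the observed support, leaving its behaviour on unvisited states unconstrained. Making the reduction rigorous requires either projecting $\hat d$ onto the realizable set of occupancy measures and bounding the projection error, or exploiting the deterministic-expert structure to argue that the unobserved-state mass is itself controlled by the same $L^1$ concentration, so that it does not inflate the rate beyond $\sqrt{(|\gS|-1)/N}$.
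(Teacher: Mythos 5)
A preliminary caveat: the paper never proves this statement. It appears in the literature-review appendix as an imported result (Theorem 1 of the cited finite-sample AIL analysis), so there is no in-paper proof to compare against; what follows judges your proposal on its own merits and against the standard argument in that literature. Your skeleton is that standard argument, and its outline is correct: write the return in occupancy form so that $J(\pi^*)-J(\pi)\le H\,\TV(d^{\pi^*},d^{\pi})$ (using $r\in[0,1]$ and $\sum_{s,a}(p-q)_+=\TV(p,q)$), route through the empirical expert occupancy $\hat{d}$ by the triangle inequality, use determinism of $\pi^*$ to collapse state--action matching to state matching, and close with the concentration bound $\E\|\hat{\rho}-\rho\|_1\le\sqrt{(|\gS|-1)/N}$. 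Your Cauchy--Schwarz step, $\sum_s\sqrt{\rho(s)(1-\rho(s))}\le\sqrt{\sum_s\rho(s)}\cdot\sqrt{\sum_s(1-\rho(s))}=\sqrt{|\gS|-1}$, is indeed exactly where the $|\gS|-1$ factor originates.

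The genuine flaw is the middle step, which you flag yourself but then propose to repair in an unnecessarily complicated way. The claim that exact distribution matching drives $\TV(\hat{d},d^{\pi})$ to ``essentially zero'' is false: $\hat{d}$ generically violates the Bellman flow constraints, so no policy's occupancy measure equals it, and an optimization error of zero cannot be assumed. But the repair needs neither a projection of $\hat{d}$ onto the realizable set nor a separate argument about unvisited states. Since AIL returns $\pi\in\argmin_{\pi'}\TV(d^{\pi'},\hat{d})$ and the expert $\pi^*$ is itself a feasible comparator in that minimization, the minimizer satisfies $\TV(d^{\pi},\hat{d})\le\TV(d^{\pi^*},\hat{d})$; plugging this into your triangle inequality gives $\TV(d^{\pi^*},d^{\pi})\le 2\,\TV(d^{\pi^*},\hat{d})$, and the factor $2$ is absorbed by $\lesssim$. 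Mass on states never visited by the data is automatically accounted for inside $\TV(d^{\pi^*},\hat{d})$, so it cannot inflate the rate. One final bookkeeping point: if $\gD$ consists of $N$ expert trajectories rather than $N$ i.i.d. draws from $d^{\pi^*}$, the concentration bound should be applied separately at each step $h$ to $d_h^{\pi^*}$ (each has $N$ i.i.d. samples) and then averaged over $h$; by convexity of the $\ell_1$ deviation this preserves the stated rate.
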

Let the sub-optimality gap be $\epsilon$, and the sample complexity of AIL becomes $\tilde{\gO}(|\gS|H^2/\epsilon^2)$, where $\tilde{\gO}$ omits the $\log$ term. However, the sample complexity of BC is $\tilde{\gO}(|\gS|H^2/\epsilon)$, and AIL is even worse than BC in terms of sample complexity. 

\textbf{Dataset Aggregation.} In the DAgger setting, the learner can query the expert when interacting with the environment and trains the next policy under the aggregation of all collected data. Despite the passive result on DAgger that it has the same sub-optimality bound as BC in the worst case given by \citep{limits}, Rajaraman et al.~\cite{value_interaction} prove DAgger can achieve a better performance under the $\mu$-recoverability assumption. 
The definition of the $\mu$-recoverability assumption is:
\begin{definition}[$\mu$-recoverability]
An IL instance is said to satisfy $\mu$-recoverability if for each $t\in [H]$ and $s\in\mathcal{S}$, $Q^{\pi^*}_t(s,\pi^*)-Q^{\pi^*}_t(s,a)\leq \mu$ for all $a\in\mathcal{A}$.
\end{definition}
Satisfying $\mu$-recoverability implies any non-optimal action induces a performance degradation smaller than for every state. It is noteworthy that this quantity is sensitive to the corner case of the environment because the inequality should hold for every state. Furthermore, in many situations where safety is important, a wrong action may cause the failure of a policy because $\mu$ is very large. In the worst case, $\mu=\gO(H)$. Then we state the sub-optimality bound for DAgger.
\begin{theorem}[Theorem 2.2 in \citep{dagger}]\label{thm_dagger}
If an IL instance satisfies $\mu$-recoverability, let $\pi$ be a policy such that $\mathbb E_{s\sim d^\pi}[\ell(s,\pi, \pi^*)]=\epsilon_b$, then $J(\pi)\geq J(\pi^*)-\mu H\epsilon_b$.
\end{theorem}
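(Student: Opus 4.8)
The plan is to combine the Policy Difference Lemma (Lemma~\ref{lem_policy}) with the $\mu$-recoverability assumption, in the same spirit as the decomposition used in the proof of Theorem~\ref{thm_infinite}. First I would apply Lemma~\ref{lem_policy} with $\pi_1=\pi$ and $\pi_2=\pi^*$ and rearrange signs to obtain
\begin{equation*}
J(\pi^*)-J(\pi)=\sum_{h=1}^H\E_{s\sim d^\pi_h}\bigl[Q^{\pi^*}_h(s,\pi^*)-Q^{\pi^*}_h(s,\pi)\bigr],
\end{equation*}
so that the sub-optimality gap is written as an average, over the \emph{learner's own} visitation distribution $d^\pi_h$, of a per-state value gap measured against the expert's $Q$-function. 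This is exactly the quantity DAgger controls, since its loss is evaluated under $d^\pi$.

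The second step is to bound the per-state summand $Q^{\pi^*}_h(s,\pi^*)-Q^{\pi^*}_h(s,\pi)$ by $\mu\,\ell(s,\pi,\pi^*)$. Writing $Q^{\pi^*}_h(s,\pi)=\E_{a\sim\pi(\cdot|s)}Q^{\pi^*}_h(s,a)$ and using that $\pi^*$ is deterministic, the summand vanishes on the event $\{a=\pi^*(s)\}$ because $Q^{\pi^*}_h(s,\pi^*)-Q^{\pi^*}_h(s,\pi^*(s))=0$, while on the complementary event $\{a\neq\pi^*(s)\}$ the $\mu$-recoverability assumption caps each term at $\mu$. Hence $Q^{\pi^*}_h(s,\pi^*)-Q^{\pi^*}_h(s,\pi)\leq\mu\,\textnormal{Pr}_{a\sim\pi(\cdot|s)}(a\neq\pi^*(s))=\mu\,\ell(s,\pi,\pi^*)$. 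I expect this to be the only non-mechanical point and the main obstacle: the uniform bound $\mu$ is useless on its own and becomes an $\epsilon_b$-type estimate only once it is weighted by the disagreement probability, which in turn requires the observation that agreement with the deterministic expert contributes exactly zero advantage.

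Finally I would substitute this bound back and invoke the definition $d^\pi=\frac{1}{H}\sum_{h=1}^H d^\pi_h$. Summing over $h$ collapses the inner expectations into a single expectation under $d^\pi$, producing a factor of $H$:
\begin{equation*}
J(\pi^*)-J(\pi)\leq\mu\sum_{h=1}^H\E_{s\sim d^\pi_h}\bigl[\ell(s,\pi,\pi^*)\bigr]=\mu H\,\E_{s\sim d^\pi}\bigl[\ell(s,\pi,\pi^*)\bigr]=\mu H\epsilon_b,
\end{equation*}
which rearranges to $J(\pi)\geq J(\pi^*)-\mu H\epsilon_b$, as claimed. The only remaining care is to confirm that the hypothesis $\E_{s\sim d^\pi}[\ell(s,\pi,\pi^*)]=\epsilon_b$ is taken against precisely this mixed distribution $d^\pi$, so that the normalization $1/H$ in the definition of $d^\pi$ cancels against the $H$ terms in the sum and yields the single power of $H$ in the final bound.
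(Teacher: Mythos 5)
Your proof is correct. Note that the paper itself does not prove this statement: it is quoted in the review appendix as Theorem~2.2 of the cited DAgger paper, so there is no in-paper proof to compare against. Your argument is the standard one for that result and all three steps check out: the application of Lemma~\ref{lem_policy} with $\pi_1=\pi$, $\pi_2=\pi^*$ (with the sign flip) is exactly right; the per-state bound $Q^{\pi^*}_h(s,\pi^*)-Q^{\pi^*}_h(s,\pi)\leq\mu\,\ell(s,\pi,\pi^*)$ correctly exploits both the determinism of $\pi^*$ (zero advantage gap on the agreement event) and the $\mu$-recoverability cap on the disagreement event, which is indeed the crux; and the final normalization $\sum_{h=1}^H\E_{s\sim d^\pi_h}[\cdot]=H\,\E_{s\sim d^\pi}[\cdot]$ matches the paper's definition of $d^\pi$ and the loss convention $\ell(s,\pi,\pi^*)=\textnormal{Pr}_{a\sim\pi(\cdot|s)}(a\neq\pi^*(s))$ used in that appendix. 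Your route via the policy difference lemma is also equivalent in substance to the telescoping hybrid-policy argument (switching from $\pi$ to $\pi^*$ at step $t$) that the paper uses for its own Lemma~\ref{lem_safty} and Theorem~\ref{thm_infinite}, and that the original DAgger analysis employs.
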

The result shows DAgger solves the compounding error issue by replacing $H^2$ with $\mu H$. However, $\mu$ can be arbitrarily large and can reduce to the bound of BC in the worst case.

Similarly, we present the result for the finite sample case.
\begin{theorem}[Theorem 1 in \citep{value_interaction}]
Under $\mu$-recoverability, the policy generated in DAgger setting satisfies $J(\pi^*)-J(\pi)\lesssim \mu \frac{|\gS|H}{N}$.
\end{theorem}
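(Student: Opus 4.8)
The plan is to prove this bound by the same two-stage template used for our own finite-sample result (Theorem~\ref{thm_finite}): first control the expected empirical $0$-$1$ loss $\epsilon_b$ incurred by the DAgger iterates through an online-learning regret argument, and then feed the resulting $\epsilon_b$ into the infinite-sample DAgger guarantee of Theorem~\ref{thm_dagger}. The only structural difference from our own setting is that DAgger never intervenes, so at iteration $i$ the trajectory is rolled out with the learner's own policy $\hatpi_i$ rather than with a teaching policy $\hatpi'_i$; consequently the relevant state distribution is $d^{\hatpi_i}$ itself, and the infinite-sample bound we invoke is Theorem~\ref{thm_dagger} (which measures the loss under $d^\pi$) in place of Theorem~\ref{thm_infinite}.

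First I would cast DAgger as an online learning problem exactly as in the proof of Theorem~\ref{thm_finite}. At round $i$ the learner plays $\hatpi_i$, rolls out a trajectory $\Tr_i$ with empirical state distribution $\hat\beta_i$, queries $\pi^*$, and suffers the linear loss $\gL(\hat\beta_i,\hatpi_i,\pi^*)=\frac{1}{H}\sum_{t=1}^H\sum_{s\in\gS}\langle\hatpi_i^t(\cdot|s),z_i^t(s)\rangle$ with $z_i^t(s)=\hat\beta_i(s)\bigl(1-\pi^*(\cdot|s)\bigr)$. Running the normalized-EG update and applying Lemma~\ref{lemma_omd} with the comparator $u=\pi^*$ gives the regret bound; here one checks that $\pi^*$ is admissible because it is deterministic, so $\langle z_i^t(s),\pi^*(\cdot|s)\rangle=\hat\beta_i(s)(1-1)=0$ and hence $\sum_i\langle z_i,\pi^*\rangle=0$. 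Averaging the regret over $t\in[H]$ and summing over $s\in\gS$ yields $\frac{1}{N}\sum_{i=1}^N\gL(\hat\beta_i,\hatpi_i,\pi^*)\le\frac{4|\gS|\log|\gA|}{N}$, and Lemma~\ref{lemma_avarage} transfers this empirical guarantee to the population loss under the mixture distribution $\beta=d^{\hatpi}$, where $\hatpi=\frac1N\sum_i\hatpi_i$ is read as an episode-level mixture so that $d^{\hatpi}=\frac1N\sum_i d^{\hatpi_i}$. This gives $\E[\epsilon_b]=\E[\gL(\beta,\hatpi,\pi^*)]\le\frac{4|\gS|\log|\gA|}{N}$.

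It then remains to combine this with Theorem~\ref{thm_dagger}: under $\mu$-recoverability the mixture policy satisfies $J(\pi^*)-J(\hatpi)\le\mu H\epsilon_b$, so taking expectations and substituting the bound on $\E[\epsilon_b]$ gives $J(\pi^*)-\E[J(\hatpi)]\le\mu H\cdot\frac{4|\gS|\log|\gA|}{N}\lesssim\mu\frac{|\gS|H}{N}$, with $\lesssim$ absorbing the $\log|\gA|$ factor. The main obstacle I anticipate is the distribution-matching bookkeeping in the middle step: one must verify that the state distribution $\beta$ appearing in the online-learning analysis is exactly the roll-out distribution $d^{\hatpi}$ that Theorem~\ref{thm_dagger} requires, which relies both on the unbiasedness of $\hat\beta_i$ for $d^{\hatpi_i}$ (valid since DAgger rolls out $\hatpi_i$ directly, with no intervention to bias the visited states) and on the linearity of $\gL$ in its state-distribution argument so that Lemma~\ref{lemma_avarage} applies; the accompanying check that the comparator $\pi^*$ attains exactly zero loss, needed to invoke Lemma~\ref{lemma_omd}, is the other place where correctness hinges.
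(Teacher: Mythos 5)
The paper never proves this statement: it sits in the literature-review appendix (Section~\ref{sec_review}) and is quoted as Theorem 1 of \citep{value_interaction}, so there is no in-paper proof to compare against. Your reconstruction is nonetheless correct in substance, and it deliberately retraces the machinery the paper uses for its own finite-sample result (Theorem~\ref{thm_finite}) --- machinery that is itself imported from \citep{value_interaction}: the normalized-EG regret bound (Lemma~\ref{lemma_omd}), the online-to-batch transfer (Lemma~\ref{lemma_avarage}), and then an infinite-sample bound, with Theorem~\ref{thm_dagger} correctly substituted for Theorem~\ref{thm_infinite} since DAgger measures its loss under the learner's own roll-out distribution. Your two checks are exactly the right ones: the comparator $u=\pi^*$ is admissible because the expert is deterministic, so $\langle z_i^t(s),\pi^*(\cdot|s)\rangle=0$ for every $i,t,s$; and unbiasedness of $\hat\beta_i$ is actually cleaner here than in the paper's intervention setting, since the trajectory is generated by $\hatpi_i$ itself.

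One point of rigor to flag: you apply Theorem~\ref{thm_dagger} directly to the episode-level mixture $\hatpi$ with $\epsilon_b=\E_{s\sim d^{\hatpi}}[\ell(s,\hatpi,\pi^*)]$. An episode-level mixture is not a Markov policy, and this loss equals the quantity your regret argument controls, namely $\frac{1}{N}\sum_{i}\E_{s\sim d^{\hatpi_i}}[\ell(s,\hatpi_i,\pi^*)]$, only under the convention that the policy index and the visited state are drawn jointly; if one instead read $\hatpi$ as the state-level mixture $\frac{1}{N}\sum_i\hatpi_i(\cdot|s)$, cross terms $\E_{s\sim d^{\hatpi_j}}[\ell(s,\hatpi_i,\pi^*)]$ with $i\neq j$ would appear and the identity would fail. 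The airtight route is to apply Theorem~\ref{thm_dagger} to each iterate $\hatpi_i$ separately and average, using $J(\hatpi)=\frac{1}{N}\sum_i J(\hatpi_i)$ for the episode-level mixture; this bypasses the mixture bookkeeping entirely and yields the same bound $J(\pi^*)-\E[J(\hatpi)]\leq \mu H\cdot\frac{4|\gS|\log|\gA|}{N}\lesssim \mu\frac{|\gS|H}{N}$. Since the same convention is already baked into the paper's Lemma~\ref{lemma_avarage} and its proof of Theorem~\ref{thm_finite}, your argument matches the paper's own level of rigor, and with the per-iterate averaging fix it is fully correct.
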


\section{Theoretical Guarantee of the Choice of $p$}\label{sec_p}
The choice of $p$ in Sec.\ref{sec_practical} preserves the bound in Corollary~\ref{corollary}.

\begin{theorem}
If the distribution $P$ belongs to $\gO(\epsilon_b)$-Sub-Exponential distribution class with mean $\gO(\epsilon_b)$, and $p=\delta\epsilon_b H$, then $J(\pi^*)-J(\pi)=\gO(\epsilon_bH)$
\end{theorem}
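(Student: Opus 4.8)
The plan is to feed the specific rule $p=\delta\epsilon_b H$ into Theorem~\ref{thm_infinite} and show that, under the sub-exponential hypothesis, the resulting self-consistent intervention probability is small enough to annihilate the quadratic-in-$H$ term. Theorem~\ref{thm_infinite} gives $J(\pi^*)-J(\pi)\le pH+\delta\epsilon_b H^2$, where $\delta=\Pr(D_Q>p)$. The rule $p=\delta\epsilon_b H$ is engineered precisely to equate the two summands, so the bound collapses to $J(\pi^*)-J(\pi)\le 2\delta\epsilon_b H^2$. The whole theorem therefore reduces to the single quantitative claim $\delta=\gO(1/H)$, since this immediately yields $2\delta\epsilon_b H^2=\gO(\epsilon_b H)$, the stated rate.

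To control $\delta$ I would reuse the tail estimate already established in the proof of Corollary~\ref{corollary}: writing the mean as $\mu=\gO(\epsilon_b)$ and the sub-exponential scale as $\sigma=\gO(\epsilon_b)$, the hypothesis gives $\delta=\Pr(D_Q>p)\le\exp(-(p-\mu)/\sigma)$ for $p\ge\mu$. This tail inequality is the only place where the distributional assumption enters; everything downstream is deterministic bookkeeping.

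The core step is then to couple this tail estimate with the definition $p=\delta\epsilon_b H$ and solve the resulting self-consistency relation. Substituting $p=\delta\epsilon_b H$, $\mu=c_1\epsilon_b$, $\sigma=c_2\epsilon_b$ gives $\delta\le\exp(-(\delta H-c_1)/c_2)$. Because the right-hand side is a strictly decreasing function of $\delta$, this inequality forces $\delta$ below the unique fixed point $\delta^\star$ of $\delta=\exp(-(\delta H-c_1)/c_2)$; a direct estimate of that fixed point shows $\delta^\star H=\Theta(\log H)$, equivalently $p=\Theta(\epsilon_b\log H)$ and $\delta=\gO((\log H)/H)$. Consequently $\delta\epsilon_b H^2=\gO(\epsilon_b H)$ once the logarithmic factor is folded into $\gO$ exactly as in Corollary~\ref{corollary}, and substituting back yields $J(\pi^*)-J(\pi)=\gO(\epsilon_b H)$. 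In passing this also verifies the main-text assertion that the practical rule $p=\delta\epsilon_b H$ loses nothing relative to the explicitly tuned choice $p=\Omega(\epsilon_b\log H)$.

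I expect the genuine difficulty to concentrate in the self-consistency solve. The relation $\delta=\exp(-(\delta\epsilon_b H-\mu)/\sigma)$ is transcendental (a Lambert-$W$ type equation) with no elementary root, so the argument must pin the root's scale from above without circularity, which the monotonicity-and-fixed-point reasoning above is designed to do. The delicate point is that the root genuinely carries a $\log H$ factor, so arriving at the clean $\gO(\epsilon_b H)$ rests on the same logarithm-suppressing $\gO$ convention used throughout the analysis; the two routine side checks are that the self-consistent $p$ indeed satisfies $p\ge\mu$, so the sub-exponential tail bound is legitimately applicable, and that the fixed point $\delta^\star$ is unique.
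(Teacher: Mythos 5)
Your proof is correct and takes essentially the same route as the paper's: feed the self-consistent rule into Theorem~\ref{thm_infinite} to collapse the bound to $2\delta\epsilon_b H^2$, then use the sub-exponential tail together with monotonicity of $\Pr(D_Q>p)$ in $p$ to force the self-consistent threshold down to $\gO(\epsilon_b\log H)$, absorbing the logarithm into the stated $\gO$ convention. The only difference is bookkeeping in how the fixed point's scale is pinned: the paper compares against the reference point $p=\mu+\sigma\log H$ from Corollary~\ref{corollary} (where $\delta\le 1/H$, hence $\delta\epsilon_bH\le\epsilon_b<p$, so the self-consistent $p'$ must lie below it and the bound is at most $2pH$), whereas you estimate the Lambert-$W$-type fixed point directly and get $\delta^\star H=\Theta(\log H)$; both yield the same $\tilde{\gO}(\epsilon_bH)$ conclusion.
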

\begin{proof}
According to Corollary~\ref{corollary}, $\delta\leq \frac{1}{H}$ when $p=\mu+\sigma \log H$, where $\mu=\gO(\epsilon_b)$ and $\sigma=\gO(\epsilon_b)$.

Under this circumstance, $\delta\epsilon_b H\leq \epsilon_b< p$, and $p$ should decreases to $p'$ to satisfy the condition $p'=\delta'\epsilon_bH$. Because $p'<p$, $\delta'\epsilon_bH=p'H<pH$. Thus, $p'H+\delta'\epsilon_bH\leq 2pH=\gO(\epsilon_bH)$. Then we complete the proof.

The $\ell$ can be replaced with TV divergence in Theorem~\ref{thm_infinite}.
\end{proof}

\begin{theorem}
Let $\pi$ be a policy such that $\mathbb E_{s\sim \beta}\TV(\pi(s),\pi^*(s))\leq \epsilon_b$, then $J(\pi^*)-J(\pi) \leq pH+\delta \epsilon_b  H^2$, where $\delta=\mathbb E_{s\sim d^{\pi'}}\mathbb I(Q_h^{\pi^*}(s,\pi^*)-Q_h^{\pi^*}(s,\pi)>p)$.
\end{theorem}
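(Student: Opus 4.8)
The plan is to re-run the proof of Theorem~\ref{thm_infinite} almost unchanged, modifying only the single step that previously invoked the crude bound $Q\leq H$. As there, I would split $J(\pi^*)-J(\pi)=(J(\pi^*)-J(\pi'))+(J(\pi')-J(\pi))$. The safety lemma (Lemma~\ref{lem_safty}) bounds the first difference by $pH$, and its proof never inspects the form of the per-state loss $\ell$, so it transfers verbatim to the present setting. Thus the whole task reduces to showing $J(\pi')-J(\pi)\leq\delta\epsilon_b H^2$ when the loss is the TV divergence rather than the 0-1 loss.

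For that term I would telescope with the policy difference lemma exactly as before to obtain
$$J(\pi')-J(\pi)=\sum_{t=1}^H\E_{s\sim d^{\pi'}_t}\big[Q_t^\pi(s,\pi')-Q_t^\pi(s,\pi)\big].$$
Because $\pi'(s)=\pi(s)$ whenever $D_Q\leq p$, the summand vanishes off the intervention set, so I can freely insert $\mathbb I(D_Q>p)$ and replace $\pi'$ by $\pi^*$ inside the bracket. The one genuinely new step is to bound $Q_t^\pi(s,\pi^*)-Q_t^\pi(s,\pi)$ by a multiple of $\TV(\pi^*(s),\pi(s))$ instead of by $H$. Writing the gap as $\sum_a(\pi^*(a\mid s)-\pi(a\mid s))\,Q_t^\pi(s,a)$, discarding the negative part of the signed difference and using $0\leq Q_t^\pi(s,a)\leq H$ yields
$$Q_t^\pi(s,\pi^*)-Q_t^\pi(s,\pi)\leq H\sum_a\big(\pi^*(a\mid s)-\pi(a\mid s)\big)_+=H\,\TV(\pi^*(s),\pi(s)),$$
where the final equality is the identity that TV distance equals the total positive mass of the signed difference, valid since $\pi^*(\cdot\mid s)$ and $\pi(\cdot\mid s)$ are probability distributions.

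After this substitution the remainder is bookkeeping identical to Theorem~\ref{thm_infinite}: I would rewrite $\sum_{t=1}^H\E_{s\sim d^{\pi'}_t}\mathbb I(D_Q>p)(\cdot)$ as $H\delta\,\E_{s\sim\beta}(\cdot)$ using the definitions of $\beta$ and $\delta$, producing the bound $H\cdot H\delta\,\E_{s\sim\beta}\TV(\pi^*(s),\pi(s))$, and then invoke the hypothesis $\E_{s\sim\beta}\TV(\pi(s),\pi^*(s))\leq\epsilon_b$ to arrive at $J(\pi')-J(\pi)\leq\delta\epsilon_b H^2$. Summing with the safety bound gives $J(\pi^*)-J(\pi)\leq pH+\delta\epsilon_b H^2$.

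The only delicate point, and the step I expect to require care, is the TV bound: dropping the negative part of $\pi^*-\pi$ is legitimate precisely because $Q_t^\pi\geq 0$, which relies on the standing assumption $r(s,a)\in[0,1]$. Given that, the whole argument is a mechanical lift of the earlier 0-1-loss proof, with the constant weight $H$ replaced by $H\,\TV(\pi^*(s),\pi(s))$, and no essentially new idea is required.
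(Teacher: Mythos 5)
Your proof is correct, and its skeleton coincides with the paper's: the same split $J(\pi^*)-J(\pi)=(J(\pi^*)-J(\pi'))+(J(\pi')-J(\pi))$, the observation that Lemma~\ref{lem_safty} is insensitive to the choice of $\ell$, the telescoping/indicator-insertion step inherited from Theorem~\ref{thm_infinite}, and the $\beta$--$\delta$ bookkeeping at the end. Where you diverge is the one step that matters: the paper disposes of the intervention-set term by citing Theorem~\ref{thm_pi}, i.e.\ the bound $Q^{\pi^*}_h(s,\pi^*)-Q^{\pi^*}_h(s,\pi)\leq \TV(\pi^*(\cdot|s),\pi(\cdot|s))(H-h)$, which is itself proved through a state-action distribution-matching argument (Lemmas~\ref{lem_q_diff} and~\ref{lem_mbpo}). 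You instead prove the needed inequality directly by expanding
$Q_t^\pi(s,\pi^*)-Q_t^\pi(s,\pi)=\sum_a\bigl(\pi^*(a\mid s)-\pi(a\mid s)\bigr)Q_t^\pi(s,a)$, discarding the negative part (legitimate since $Q_t^\pi\geq 0$), and using $Q_t^\pi\leq H$ together with the identity $\sum_a(P-Q)_+=\TV(P,Q)$. This buys two things. First, it is elementary and self-contained, replacing a two-lemma machinery with one line. Second, it bounds exactly the quantity that appears in the decomposition, namely a difference of $Q^{\pi}$-values (the learner's Q-function), whereas Theorem~\ref{thm_pi} as stated concerns $Q^{\pi^*}$; the paper's citation therefore implicitly relies on a symmetric variant of that theorem (first action differs, both policies thereafter follow $\pi$ rather than $\pi^*$), a small mismatch your direct expansion sidesteps entirely. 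What the paper's route buys in exchange is reuse of an already-proved result and the marginally tighter per-step factor $(H-t+1)$ in place of $H$, though neither changes the final bound $pH+\delta\epsilon_b H^2$.
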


\begin{proof}
Lemma~\ref{lem_safty} holds under this circumstance because it does not use the condition on $\ell$.

According to the proof of Theorem~\ref{thm_infinite}, we have
$$J(\pi)= J(\pi^*)-J(\pi')+\sum_{t=1}^H\mathbb E_{s\sim d^{\pi'}_t}\mathbb I(Q_{t}^{\pi^*}(s,\pi')-Q_{t}^{\pi^*}(s,\pi)>p)[Q_{t}^{\pi}(s,\pi')-Q_{t}^{\pi}(s,\pi)].$$

Thus we only need to bound the term $\sum_{t=1}^H\mathbb E_{s\sim d^{\pi'}_t}\mathbb I(Q_{t}^{\pi^*}(s,\pi')-Q_{t}^{\pi^*}(s,\pi)>p)[Q_{t}^{\pi}(s,\pi')-Q_{t}^{\pi}(s,\pi)]$.

According to Theorem~\ref{thm_pi}, 
\begin{align*}
    &\quad \ \sum_{t=1}^H\mathbb E_{s\sim d^{\pi'}_t}\mathbb I(Q_{t}^{\pi^*}(s,\pi')-Q_{t}^{\pi^*}(s,\pi)>p)[Q_{t}^{\pi}(s,\pi')-Q_{t}^{\pi}(s,\pi)]\\
    &\leq \sum_{t=1}^H\mathbb E_{s\sim d^{\pi'}_t}\mathbb I(Q_{t}^{\pi^*}(s,\pi')-Q_{t}^{\pi^*}(s,\pi)>p)\TV(\pi(s),\pi^*(s))(H-t+1)\\
    &\leq \delta\epsilon_b H^2
\end{align*}
Then we complete the proof.
\end{proof}

\section{Experiments on Approximated $Q^*$}\label{sec_extra}
In the theoretical analysis part, we use Q* for the bound deduction. Indeed, the exact expert Q-values are hard to get upfront. However, many existing methods can acquire a Q that is close to $Q^*$, including learning from offline datasets~\cite{cql, combo, hve}, using human advice~\cite{PEBBLE, human+}, and computing from rules~\cite{kernel, knowledge}. The cost depends on the choice of the surrogate of Q*.

To demonstrate the feasibility of a surrogate of $Q^*$, we experiment on the $Q^*$ obtained from an offline dataset. The experiment setting is the same as described in Sec. 7.1.1 in the main body. We choose CQL~\cite{cql} to learn a Q-network. The offline dataset is composed of only 5 expert trajectories (totaling 1351 transitions, which is about 20\% of samples used in the training process and about 7\% of samples used by DAgger). The result is shown in the following table.

\begin{table}[H]
\centering
\caption{Experiment on approximated $Q^*$.}\label{tab_approximate}
\begin{tabular}{l|l}
\toprule
Algorithm & mean (std)\\
\midrule
AdapMen ($Q^*$)  & 496.6 (13.0) \\
AdapMen (CQL) & 396.7 (17.2) \\
\addlinespace
DAgger &  366.2 (7.1) \\
BC &  376.2 (4.8) \\
CQL  & 21.6 (21.9) \\
\addlinespace
EnsembleDAgger  & 197.1 (3.7) \\
ValueDICE &  65.9 (5.1) \\
\bottomrule
\end{tabular}
\end{table}

The number in the parentheses is the standard deviation of five seeds. AdapMen ($Q^*$) uses the ground truth $Q^*$, while AdapMen (CQL) uses the $Q$ learned from CQL. The learned $Q$ suffers from little performance degradation, still outperforming baselines. CQL directly uses the learned $Q$ to update the policy, whose extremely low performance shows the necessity of AdapMen. 

\section{Extra Information of experiments in  Atari game}
\begin{table}[H]
    \centering
    \caption{Atari Expert Performance. Calculated from 100 trajectories.}
     \begin{tabular}[b]{|l|l|}
    \hline
    Task        &  mean (std)                \\ \hline
    MsPacman    & 1619 (2073)     \\ \hline
    BeamRider   & 1500 (1695)     \\ \hline
    DemonAttack & 230 (231)    \\ \hline
    Pong        & 5.72 (5.41)       \\ \hline
    Qbert       & 1205 (1934)     \\ \hline
    Enduro      & 180 (149)     \\ \hline
\end{tabular}
    \label{tab:atari performance}
\end{table}

\newpage
\section{Hyper-parameters}
\label{sec:parameter}

\begin{table}[h] 
\caption{Hyperparameters for implemented algorithms in Metadrive} 
\centering
\begin{tabular}{l|p{3.5cm}|c}
\toprule
\textbf{Type} & \textbf{Name} & \textbf{Value} \\
\toprule
\multirow{6}{*}{\textbf{General}}
        & learning rate & 1e-4                \\
        & policy net structure & [("mlp", 256), ("mlp", 256)]\\
        & batch size & 128                  \\
        & update learner intervel & 200     \\
        & number of batches per update& 50     \\
        & expert buffer size & 2000\\
\midrule
\multirow{1}{*}{\textbf{HG-DAgger} \& \textbf{EnsembleDAgger} } 
      & ensemble size & 5\\
\midrule
\multirow{3}{*}{\textbf{ValueDICE}} 
        & actor learning rate & 1e-5 \\
        & nu net structure & [("mlp", 256), ("mlp", 256)]\\
      & nu learning rate & 1e-3 \\
        & nu reg coeff & 10     \\
        & absorbing per episode & 10 \\
        & number of random actions & 1000 \\
\midrule
\multirow{1}{*}{\textbf{\methodname-Q} \& \textbf{\methodname-Pi}}  
      & horizon & 100\\
\bottomrule
\end{tabular}
\label{tab:hyperparameters1}
\end{table}

\newpage
\begin{table}[h] 
\caption{Hyperparameters for implemented algorithms in Atari} 
\centering
\begin{tabular}{l|p{3.5cm}|c}
\toprule
\textbf{Type} & \textbf{Name} & \textbf{Value} \\
\toprule
\multirow{6}{*}{\textbf{General}}
        & learning rate & 3e-4                \\
        & policy net hidden dim &  [("conv2d", 16, 8, 4, 0),\\&& ("conv2d", 32, 4, 2, 0),\\&&("flatten",),\\&& ("mlp", 256), ("mlp", 256)]\\
        & batch size & 32                  \\
        & update learner intervel & 200     \\
        & number of batches per update& 200     \\
        & expert buffer size & 50000\\
\midrule
\multirow{3}{*}{\textbf{ValueDICE}} 
        & actor learning rate & 1e-5 \\
        & nu net structure & [("conv2d", 16, 8, 4, 0), \\&&("conv2d", 32, 4, 2, 0),\\&&("flatten",), \\&&("mlp", 256), ("mlp", 128)]\\
      & nu learning rate & 1e-3 \\
        & nu reg coeff & 10     \\
        & number of random actions & 2000 \\
\midrule
\multirow{1}{*}{\textbf{\methodname-Q} \& \textbf{\methodname-Pi}}  
      & horizon & 100\\
\bottomrule
\end{tabular}
\label{tab:hyperparameters2}
\end{table}

\newpage
\section{Additional $D_Q$ distribution}\label{sec_dq}

\begin{figure}[htbp]
  \centering
  \subfigure[Enduro-initial]{
    \includegraphics[width=0.4\linewidth]{./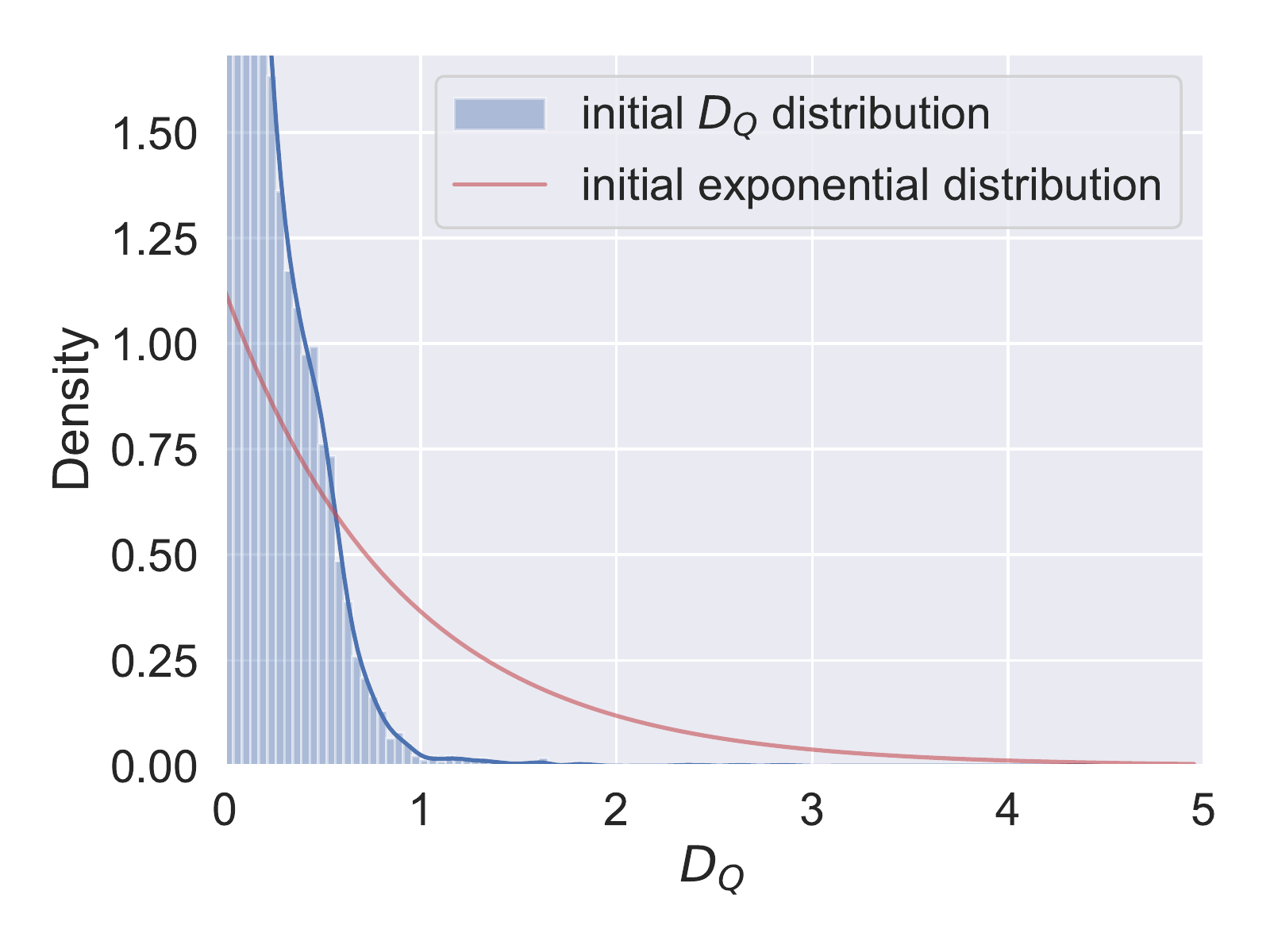}
  } 
  \subfigure[Enduro-final]{
    \includegraphics[width=0.4\linewidth]{./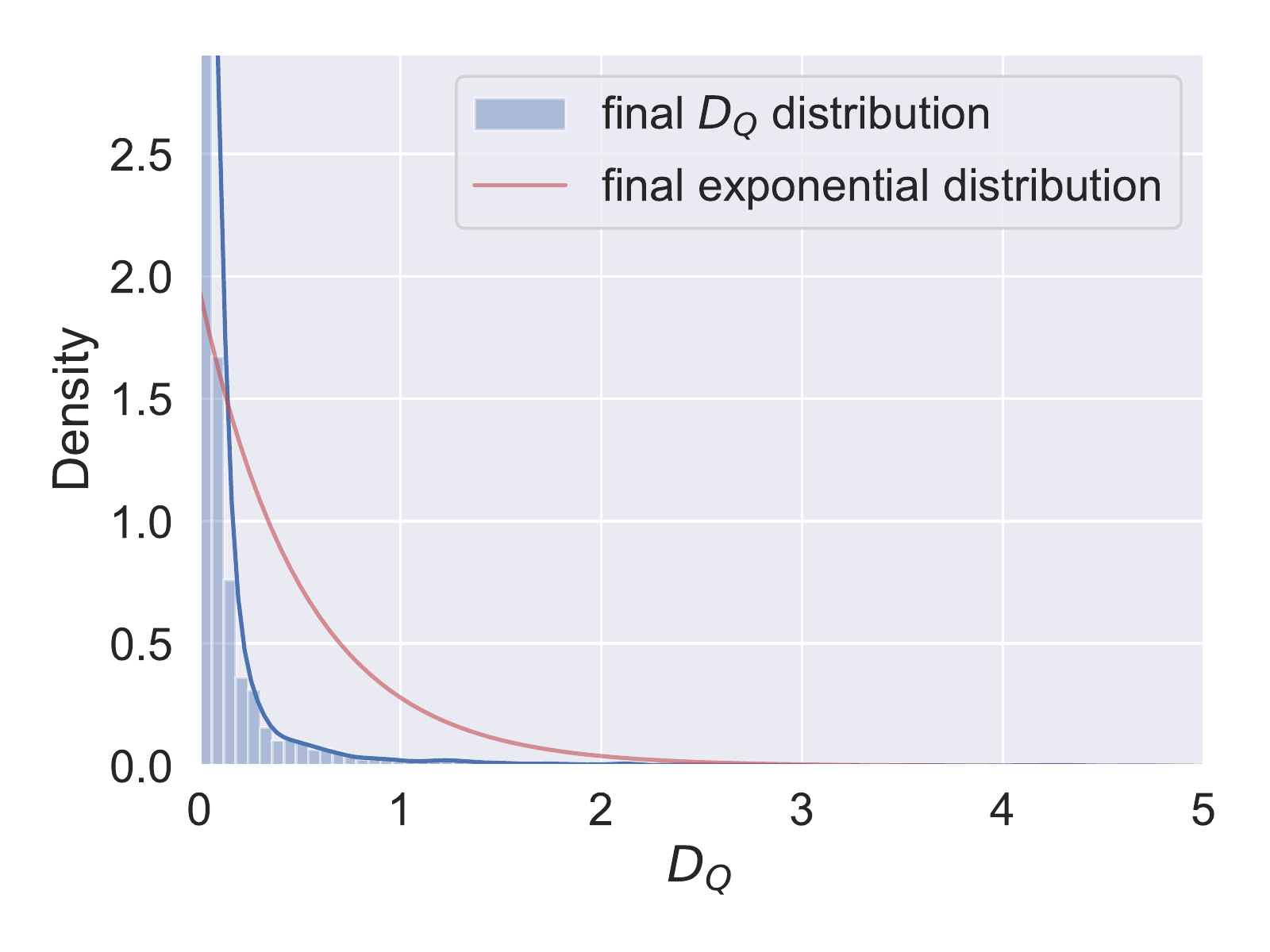}
  } 
  
  \subfigure[Qbert-initial]{
    \includegraphics[width=0.4\linewidth]{./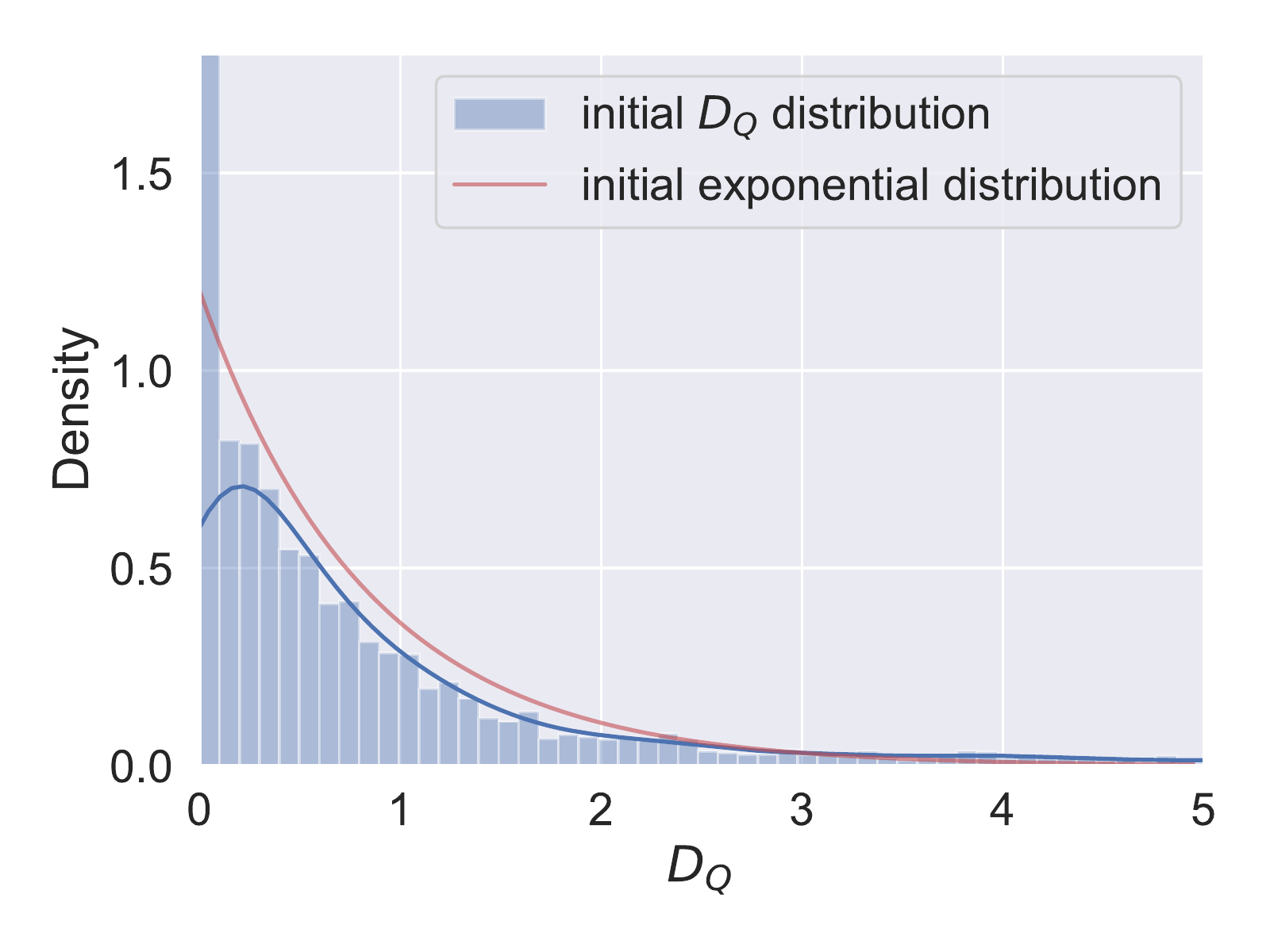}
  } 
  \subfigure[Qbert-final]{
    \includegraphics[width=0.4\linewidth]{./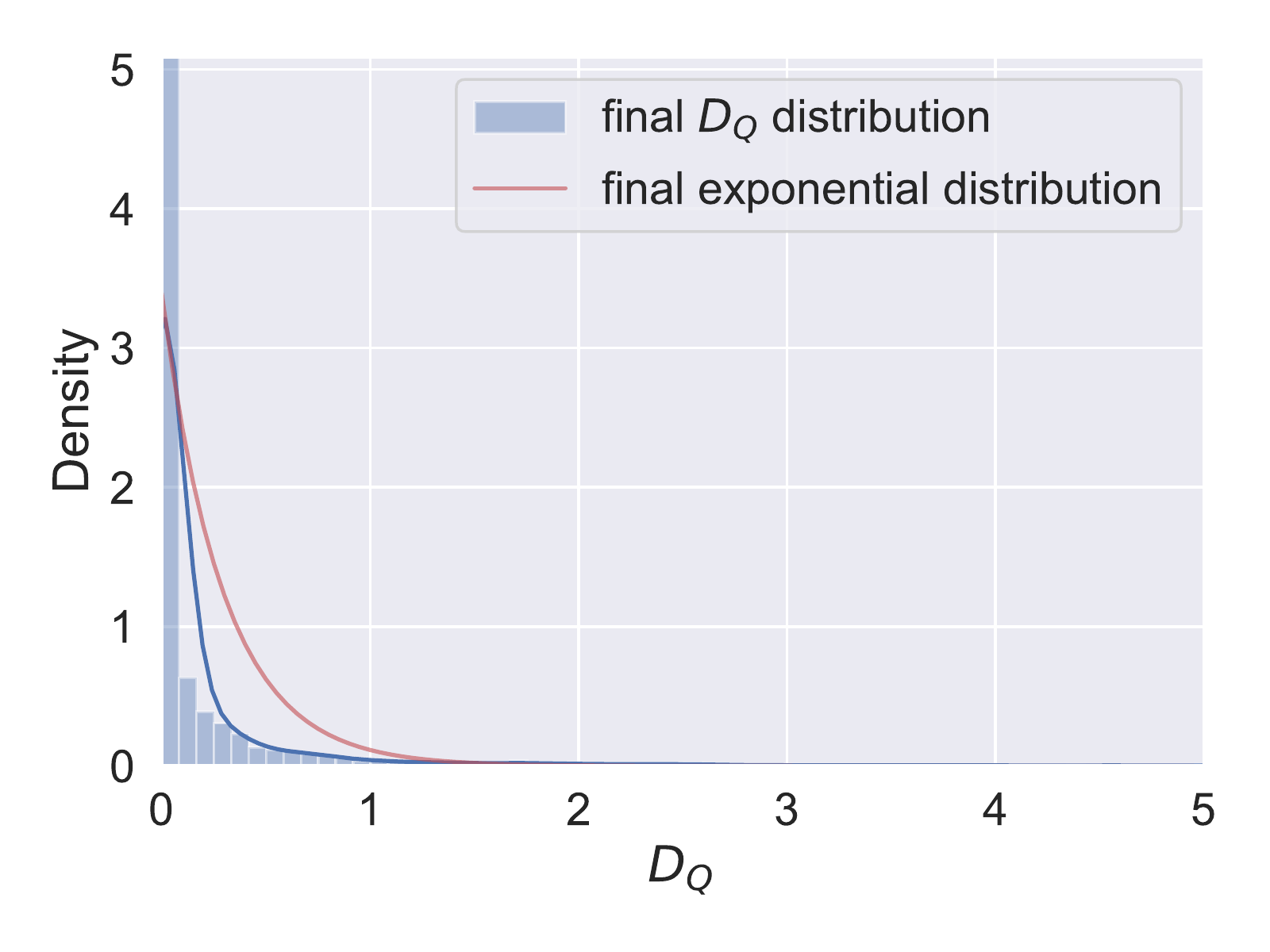}
    }
    
    \subfigure[Pong-initial]{
    \includegraphics[width=0.4\linewidth]{./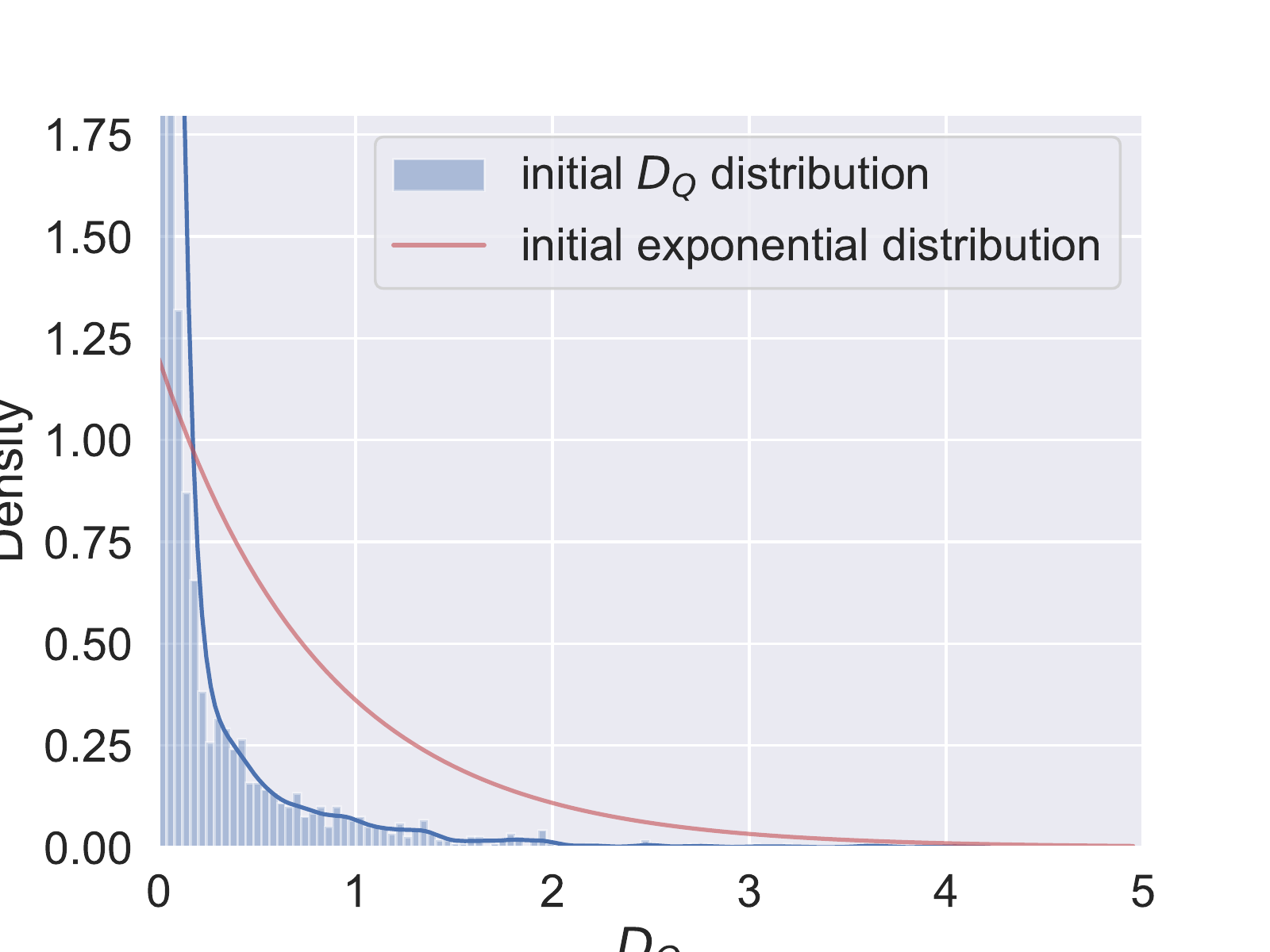}
  } 
  \subfigure[Pong-final]{
    \includegraphics[width=0.4\linewidth]{./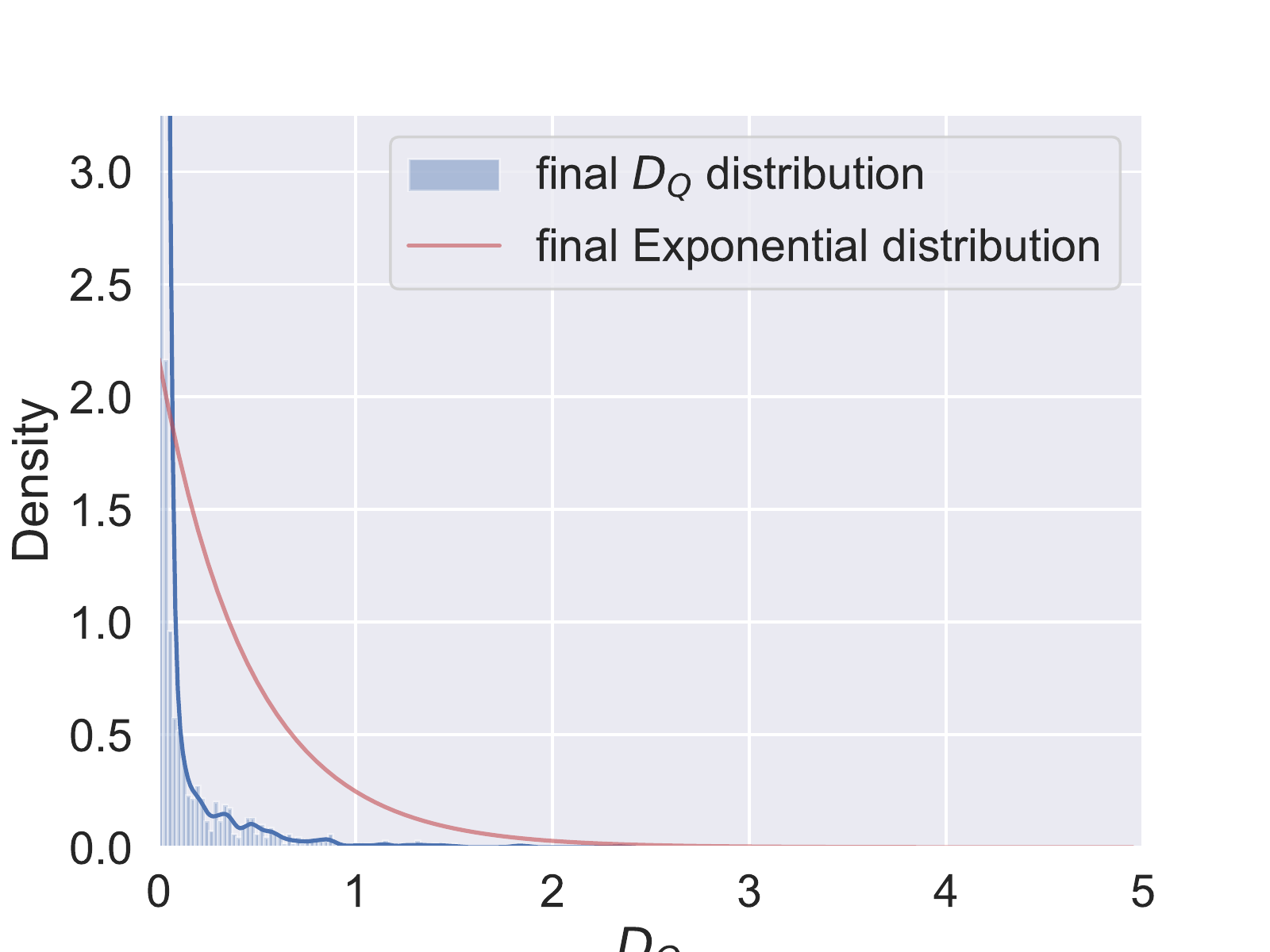}
    }
\caption{Additional $D_Q$ distributions in Atari games}
\end{figure}


\end{document}